\DeclareMathOperator*{\Rd}{\mathbb{R}^d}
\DeclareMathOperator*{\E}{\mathbb{E}}
\newcommand*\bigcdot{\mathpalette\bigcdot@{.5}}
\title{      
An Improved Analysis of 
(Variance-Reduced) Policy Gradient and Natural Policy Gradient Methods} 
\author{%
  Yanli Liu\footnotemark[1] \quad Kaiqing Zhang\footnotemark[2] \quad Tamer Ba\c{s}ar\footnotemark[2] \quad Wotao Yin\footnotemark[1]\\
  $^\natural$Department of Mathematics, University of California, Los Angeles\\
  $^\dagger$Department of ECE and CSL, University of Illinois at Urbana-Champaign\\
\texttt{\{yanli, wotaoyin\}@math.ucla.edu, \{kzhang66, basar1\}@illinois.edu} 
}
\begin{document}

\maketitle

\begin{abstract}

In this paper, we revisit and improve the convergence of policy gradient (PG), natural PG (NPG) methods, and their variance-reduced variants, under general smooth policy parametrizations. More specifically, with the Fisher information matrix of the policy being positive definite: i) we show that a state-of-the-art variance-reduced PG method, which has only been shown to converge to stationary points, converges to the globally optimal value up to some inherent function approximation error due to policy parametrization; ii) we show that NPG enjoys a lower sample complexity; iii) we propose SRVR-NPG, which incorporates variance-reduction into the NPG update. Our improvements follow from an observation that the convergence of (variance-reduced) PG and NPG methods can improve each other: the stationary convergence analysis of PG can be applied to NPG as well, and the global convergence analysis of NPG can help to establish the global convergence of (variance-reduced) PG methods. Our analysis carefully integrates the advantages of these two lines of works. 
    Thanks to this improvement, we have also made variance-reduction for NPG possible, with both global convergence and an efficient finite-sample complexity.



\end{abstract}

\section{Introduction}\label{sec:intro}
 
Policy gradient (PG) methods, or more generally direct policy search methods,  have long been recognized as one of the foundations  of reinforcement learning (RL) \cite{sutton2018reinforcement}. Specifically, PG methods directly search for the optimal policy parameter that maximizes the long-term return in Markov decision processes  (MDPs), following  the policy gradient ascent direction \cite{williams1992simple,sutton2000policy}. This search direction can be more efficient using a preconditioning matrix, e.g., using the natural PG direction \cite{kakade2002natural}. These methods have achieved tremendous empirical successes recently, especially boosted by the power of (deep) neural networks for  policy parametrization \cite{schulman2015trust,lillicrap2015continuous,mnih2016asynchronous,schulman2017proximal}. These successes are primarily attributed to the fact  that PG methods naturally incorporate \emph{function approximation} for policy parametrization, in order to handle massive and even continuous state-action spaces.

In practice, the policy gradients are usually estimated via samples using Monte-Carlo rollouts and bootstrapping \cite{williams1992simple,baxter2001infinite}. Such stochastic PG methods notoriously suffer from very high variances, which not only destabilize but also slow down the convergence. Several conventional approaches have been advocated to reduce the variance  of PG methods, e.g., by adding a baseline \cite{sutton2000policy,wu2018variance}, or by using function approximation for estimating the value function, namely, developing  actor-critic algorithms \cite{konda2000actor,peters2008natural,bhatnagar2009natural}. More recently, motivated by the advances of variance-reduction techniques in stochastic optimization \cite{johnson2013accelerating, allen2016variance, reddi2016stochastic,defazio2014saga}, 
there have been surging interests in developing 
\emph{variance-reduced} PG methods \cite{xu2017stochastic,papini2018stochastic,xu2019improved,xu2019sample,yuan2020stochastic}, which are shown to be faster.

In contrast to the empirical successes of PG methods, their theoretical convergence guarantees, especially \emph{non-asymptotic global} convergence  guarantees, have not been addressed satisfactorily until very recently \cite{fazel2018global,zhang2019global,wang2019neural,bhandari2019global,agarwal2019optimality}.  By \emph{non-asymptotic global} convergence, here we mean the convergence behavior of PG methods from any initialization, and the quality of the point they converge to (usually enjoys global optimality up to some compatible function approximation error due to policy parametrization), after a finite number of iterations/samples. These recent prominent  guarantees are normally beyond the folklore \emph{first-order} stationary-point convergence\footnotemark[1], as expected from a \emph{stochastic nonconvex optimization} perspective of solving RL with PG methods. Special landscapes of the RL objective, though nonconvex, have enabled the convergence to even globally optimal values. On the other hand, none of the aforementioned variance-reduced PG methods \cite{xu2017stochastic,papini2018stochastic,xu2019improved,xu2019sample,yuan2020stochastic} have been shown to enjoy these desired global convergence properties. It remains unclear whether these methods can converge to beyond first-order stationary policies. 

\footnotetext[1]{That is, finding a parameter $\theta$ such that $\|\nabla J(\theta)\|^2\leq \varepsilon$, where $J$ is the expected return. }

Motivated by these advances and the questions that remain to be answered,  we aim in this paper to improve the convergence of PG and natural PG (NPG) methods, and their variance-reduced variants,  under general smooth policy parametrizations. Our contributions are summarized as follows. 

\vspace{9pt}
\noindent{\textbf{Contributions.}} With a focus on the conventional Monte-Carlo-based PG methods, we propose a general framework for analyzing their \emph{global convergence}. 
Our contribution is three-fold: first, we establish the global convergence up to compatible function approximation  errors due to policy parametrization, for a variance-reduced PG method SRVR-PG \cite{xu2019sample}; second, we improve the global convergence of NPG methods established in \cite{agarwal2019optimality}, from $\mathcal{O}\left(\varepsilon^{-4}\right)$ to $\mathcal{O}\left(\varepsilon^{-3}\right)$;
third, we propose a new variance-reduced algorithm based on NPG, and establish its global convergence with an efficient sample-complexity. These improvements are based on a framework that integrates the advantages of previous analyses on (variance reduced) PG and NPG, and rely on a (mild) assumption that the Fisher information matrix induced by the policy parametrization is positive definite (see Assumption \ref{assump: strong convexity}). 
 A comparison of previous results and our improvements is laid out in Table \ref{table: summary of results}. 

\vspace{6pt}
\noindent{\textbf{Related Work.}}
\vspace{1pt} 

\noindent{\textbf{Global Convergence of (Natural) PG.}}
Recently, there has been a surging research interest in investigating the global convergence of PG and NPG methods, which is beyond the folklore convergence to first-order stationary policies. In the special case with linear dynamics and quadratic reward, \cite{fazel2018global} shows that PG methods with random search converge to the  globally optimal policy with linear rates. In \cite{zhang2019global}, with a simple reward-reshaping, PG methods have been shown to converge to the second-order stationary-point policies. \cite{bhandari2019global} shows that for finite-MDPs and several control tasks, the nonconvex RL objective has no suboptimal local minima. \cite{wang2019neural} prove that   (natural) PG methods converge to the globally optimal value when overparametrized neural networks are used for function approximation.  \cite{agarwal2019optimality} provides a fairly general characterization of global convergence for these methods, and a 
 basic sample complexity result for sample-based NPG updates.  It is also worth noting that trust-region policy optimization (TRPO) \cite{schulman2015trust},  as a variant of NPG, also enjoys global convergence with overparametrized neural networks \cite{liu2019neural}, and for regularized MDPs  \cite{shani2019adaptive}. 
     Very recently, for actor-critic algorithms, a series of non-asymptotic convergence results have also been established \cite{xu2020improving,xu2020non,wu2020finite,hong2020two},  with global convergence guarantees when natural PG/PPO are used in the actor step. 

\vspace{3pt} 


\vspace{-3pt}
\noindent{\textbf{Variance-Reduction (VR) for  PG.}} Conventional approaches to reduce the high variance in PG methods include using (natural) actor-critic algorithms \cite{konda2000actor,peters2008natural,bhatnagar2009natural}, and adding baselines  \cite{sutton2000policy,wu2018variance}. The idea of variance reduction (VR) is first proposed to accelerate stochastic minimization. VR algorithms such as SVRG \cite{johnson2013accelerating, allen2016variance, reddi2016stochastic}, SAGA \cite{defazio2014saga}, SARAH \cite{nguyen2017sarah}, and Spider \cite{fang2018spider} achieve acceleration over SGD in both convex and nonconvex settings. SVRG is also accelerated by applying a positive definite preconditioner that captures the curvature of the objective \cite{liu2019acceleration}.  Inspired by these successes in stochastic optimization, VR is also incorporated into PG methods \cite{xu2017stochastic}, with empirical validations for acceleration,  and analyzed rigorously in \cite{papini2018stochastic}.  Then, \cite{xu2019improved} improves the sample complexity of SVRPG, and \cite{xu2019sample} proposes a new SRVR-PG method that uses recursively updated semi-stochastic policy gradient, which leads to an improved sample complexity of $\cO(\varepsilon^{-1.5})$ over previous works. More recently, \cite{yuan2020stochastic} proposes a new STORM-PG method, which blends momentum in the update and matches the sample complexity of in \cite{xu2019sample}, and \cite{pham2020hybrid} applies the idea of SARAH and considers a more general setting with regularization. Finally, heavy-ball type of momentum has also been applied to PG methods \cite{huang2020momentum}. We highlight that all these sample complexity results are for first-order stationary-point convergence (which might have arbitrarily bad performance: see \eqref{equ: stationary convergence definition}), in contrast to the more desired global convergence guarantees (up to some function approximation errors that can be small) that we are interested in.   

%
%


\begin{table}[H]
\begin{center}
\begin{tabular}{cccc}	
   \begin{tabular}[c]{@{}c@{}}NPG\\\cite{agarwal2019optimality}\end{tabular} & \begin{tabular}[c]{@{}c@{}}NPG\\ \cite{wang2019neural}\end{tabular}   & \begin{tabular}[c]{@{}c@{}}TRPO\\ \cite{liu2019neural}\end{tabular}  & \begin{tabular}[c]{@{}c@{}}TRPO\\ \cite{shani2019adaptive}\end{tabular}                                                    \\  \hline \addlinespace[0.2cm]
    $\mathcal{O}(\varepsilon^{-4})$ & $\mathcal{O}(T_{TD}\varepsilon^{-2})$ \footnotemark[1]         & $\mathcal{O}(\varepsilon^{-8})$ & $\mathcal{O}(\varepsilon^{-4})$ \\ \hline
\end{tabular}
\vskip 0.5cm
\begin{tabular}{cccc}	
 \begin{tabular}[c]{@{}c@{}}NPG\\ \eqref{equ: NPG update_2}\end{tabular} & \begin{tabular}[c]{@{}c@{}}PG\\ \eqref{equ: PG update}\end{tabular} & \begin{tabular}[c]{@{}c@{}}SRVR-PG \cite{xu2019sample}\\ (Algorithm \ref{alg: SRVR-PG})\end{tabular}   & \begin{tabular}[c]{@{}c@{}}SRVR-NPG\\ (Algorithm \ref{alg: SRVR-NPG})\end{tabular}                                                     \\ \hline \addlinespace[0.2cm]
 $\cO(\varepsilon^{-3})$ &$\mathcal{O}({\sigma^2}{\varepsilon^{-4}})$  & $\mathcal{O}\left((W+\sigma^2)\varepsilon^{-3}\right)$         & $\mathcal{O}\left((W+\sigma^2)\varepsilon^{-2.5}+\varepsilon^{-3}\right)$ \\ \hline
\end{tabular}
\end{center}
\caption{Comparison of sample complexities of several methods to reach global optimality up to some compatible function approximation error (see \eqref{equ: compatible function approximation error}). Our results are listed in the second table (See App. \ref{app: previous results} for their derivations). We compare the number of trajectories to reach $\varepsilon-$optimality in expectation, up to some inherent error due to the function approximation for policy parametrization (see \eqref{equ: global convergence definition}). $\sigma^2$ is an upper bound for the variance of gradient estimator (see Assumption \ref{assump: variance}), and $W$ is an upper bound for the variance of importance weight (see Assumption \ref{assump: importance sampling}).} 
\label{table: summary of results}
\vspace{-5pt}
\end{table}
\footnotetext[1]{In \cite{wang2019neural}, $T_{TD}$ iterations of temporal difference updates are needed at each iteration, $T_{TD}$ can be large for wide neural networks. See App. \ref{app: previous results} for details.}
\vspace{-5pt}

\vspace{-10pt}
\section{Preliminaries}
\vspace{-5pt}


We first introduce some preliminaries regarding both the MDPs and policy gradient methods. 
 
 \vspace{-5pt}
\subsection{Markov Decision Processes}\label{sec:prelim_MDP}
\vspace{-5pt}

Consider a discounted Markov decision process  defined by a  tuple $(\cS,\cA,\PP,R,\gamma)$, where $\cS$ and $\cA$ denote the state and action spaces of the agent, $\PP(s'\given s,a):\cS\times\cA\to \cP(\cS)$ is the Markov kernel  that determines the transition probability from $(s,a)$ to state ${s}'$, $\gamma\in(0,1)$ is the discount factor, and $r:\cS\times\cA\to [-R,R]$ is the reward  function
of $s$ and $a$. 

At each time $t$, the agent executes an action $a_t\in\cA$ given the current state $s_t\in\cS$, following a possibly stochastic policy $\pi:\cS\to \cP(\cA)$, i.e., $a_t\sim \pi(\cdot\given s_t)$. 
Then, given the state-action pair $(s_t,a_t)$, the agent observes a reward $r_t=r(s_t,a_t)$. 
Thus, under any policy $\pi$, one can define the \emph{state-action value} function $Q^{\pi}:\cS\times\cA\to\RR$ as 
\$
Q^{\pi}(s,a):=\EE_{a_t\sim \pi(\cdot\given s_t),s_{t+1}\sim \PP(\cdot\given s_t,a_t)}\bigg(\sum_{t=0}^\infty \gamma^t r_t\bigggiven s_0=s,a_0=a\bigg).
\$
One can also define the \emph{state-value}  function $V^\pi:\cS\to \RR$, and  the \emph{advantage} function $A^\pi:\cS\times\cA\to \RR$, under policy $\pi$, as $V^\pi(s):=\EE_{a\sim \pi(\cdot\given s)}[Q^\pi(s,a)]$ and $A^\pi(s,a):=Q^\pi(s,a)-V^\pi(s)$, respectively. Suppose that the initial state $s_0$ is drawn from some distribution $\rho$. Then,  the goal of the agent is to find the optimal policy that maximizes the expected discounted return, namely,
\#\label{equ: def_obj}
\max_{\pi}~~J(\pi):=\EE_{s_0\sim\rho}[V^\pi(s_0)].
\#
In practice, both the state and action spaces $\cS$ and $\cA$ can be very large. Thus, the policy $\pi$ is usually parametrized as $\pi_\theta$ for some parameter $\theta\in\RR^d$, using, for example, deep neural networks. As such, the goal of the agent is to maximize $J(\pi_\theta)$ in the space of the parameter $\theta$, which naturally induces an optimization problem. Such a problem is in general nonconvex \cite{zhang2019global, agarwal2019optimality}, making it challenging to find the globally optimal policy. 

For notational convenience, let us denote $J(\pi_\theta)$ by $J(\theta)$. Many of the previous works focus on establishing stationary convergence of policy gradient methods. That is, finding a $\theta$ that satisfies 
\begin{align}
\label{equ: stationary convergence definition}
\|\nabla J(\theta)\|^2\leq \varepsilon.
\end{align}
Obviously, such a $\theta$ may not lead to a large $J(\theta)$. Instead, we are interested in finding a $\theta$ such that 
\begin{align}
\label{equ: global convergence definition}	
J^{\star} - J(\theta) \leq \cO(\sqrt{\varepsilon_{\text{bias}}})+\varepsilon,
\end{align}
where $J^{\star}=\max_{\pi} J(\pi)$, and the $\cO(\sqrt{\varepsilon_{\text{bias}}})$  term reflects the inherent error related  to  the possibly limited expressive power of the policy parametrization $\pi_{\theta}$ (see Assumption \ref{assump: compatible error} for the definition). 



\subsection{(Natural) Policy Gradient Methods}\label{sec:prelim_NPG}


To solve the optimization problem \eqref{equ: def_obj}, one standard way is via the policy gradient (PG) method \cite{sutton2000policy}. Specifically, let $\tau_i=\{s^i_0,a^i_0,s^i_1,\cdots\}$ denote the data of a sampled trajectory under policy $\pi_\theta$. Then, a stochastic PG ascent update is given as 
\begin{align}
\label{equ: PG update}
\theta^{k+1} = \theta^{k} + \eta \cdot \frac{1}{N}\sum_{i=1}^N g(\tau_i\given \theta^k),
\end{align}
where $\eta>0$ is a stepsize, $N$ is the number of trajectories, and $g(\tau_i \given\theta^k)$ estimates $\nabla J(\theta^k)$ using the trajectory $\tau_i$. Common unbiased estimators of PG include REINFORCE \cite{williams1992simple}, using the policy gradient theorem \cite{sutton1992reinforcement}, and GPOMDP \cite{baxter2001infinite}. The commonly used GPOMDP estimator will be given by 
	\#\label{equ:GPOMDP_surro}
	g(\tau_i\given \theta)= \sum_{h=0}^{\infty} \left(\sum_{t=0}^{h}\nabla_{\theta}\log \pi_{\theta}(a^i_t\given s^i_t)\right)\left(\gamma^h r(s^i_h, a^i_h)\right), 
	\#
	where $\nabla_{\theta}\log \pi_{\theta}(a^i_t\given s^i_t)$ is the \emph{score function}. If the expectation of this infinite sum exits, then \eqref{equ:GPOMDP_surro} becomes an unbiased estimate of the policy gradient of the objective $J(\theta)$ defined in \eqref{equ: def_obj}.
	This unbiasedness is established in App. \ref{sec:help_lemma} for completeness.

	In practice, a \emph{truncated} version of GPOMDP is used to approximate the infinite sum in \eqref{equ:GPOMDP_surro}, as  
\begin{align}
\label{equ: truncated GPOMDP estimator}
	g(\tau_i^H\given \theta) &= \sum_{h=0}^{H-1} \left(\sum_{t=0}^{h}\nabla_{\theta}\log \pi_{\theta}(a^i_t\given s^i_t)\right)\left(\gamma^h r(s^i_h, a^i_h)\right),
\end{align}
where $\tau_i^H=\{s^i_0,a^i_0,s^i_1,\cdots,s^i_{H-1},a^i_{H-1}, s^i_H\}$ is a truncation of the full trajectory $\tau_i$ of length $H$. \eqref{equ: truncated GPOMDP estimator} is thus a biased stochastic estimate of $\nabla J(\theta)$, with the bias being negligible for a large enough $H$. For notational simplicity, we denote the $H$-horizon trajectory distribution induced by the initial state distribution $\rho$ and  policy $\pi_\theta$ as $p^H_{\rho}(\cdot \given \theta)$, that is,
\[
p^H_{\rho}(\tau^H\given\theta) = \rho (s_0)\prod_{h=0}^{H-1} \pi_{\theta}(a_h\given s_h) \PP(s_{h+1}\given a_{h},s_{h}).
\]
Hereafter, unless otherwise stated, we refer to this \emph{$H$-horizon trajectory}  simply as \emph{trajectory}, drawn from $p^H_{\rho}(\cdot\given\theta)$. 

As a significant variant of PG, NPG \cite{kakade2002natural} also incorporates a preconditioning matrix $F_{\rho}(\theta)$, leading to the following update
\#
&F_{\rho}(\theta)=\EE_{s\sim d^{\pi_\theta}_{\rho}}[F_s(\theta)], \qquad \theta^{k+1}
=\theta^k+\eta\cdot F^{\dagger}_\rho(\theta^k)\nabla J(\theta^k), \label{equ: NPG update}
\#
where $F_s(\theta) = \EE_{a\sim \pi_{\theta}(\cdot\given s)}\left[\nabla_{\theta}\log\pi_{\theta}(a\given s)\nabla_{\theta}\log\pi_{\theta}(a\given s)^\top \right]$ is the Fisher information matrix of $\pi_{\theta}(\cdot\given s)\in \cP(\cA)$, $F^{\dagger}_\rho(\theta^k)$ is the Moore-Penrose pseudoinverse of $F_\rho(\theta^k)$, and $d^{\pi_\theta}_{\rho}\in\cP(\cS)$ is the state visitation measure induced by policy $\pi_\theta$ and initial distribution $\rho$, which is  defined as
\[
d^{\pi_\theta}_{\rho}(s)\coloneqq (1-\gamma)\EE_{s_0\sim\rho}\sum_{t=0}^\infty\gamma^t \PP(s_t=s\given s_0,\pi_\theta).
\]
The NPG update \eqref{equ: NPG update} can also be written as \cite{kakade2002natural,agarwal2019optimality}
\#\label{equ: NPG update_2}
\theta^{k+1}=\theta^k+\eta \cdot w^k,\quad~~ \text{with~~~~} w^k\in\argmin_{w\in\RR^d}~~~L_{\nu^{\pi_{\theta}}_{\rho}}(w; \theta),
\#
where $L_{\nu^{\pi_{\theta}}_{\rho}}(w; \theta)$ is the \textit{compatible function approximation error} defined by
\begin{align}
\label{equ: compatible function approximation error}
	L_{\nu^{\pi_{\theta}}_{\rho}}(w; \theta)=\EE_{(s,a)\sim \nu^{\pi_{\theta}}_{\rho}}\left[\big(A^{\pi_{\theta}}(s,a)-(1-\gamma)w^\top\nabla_{\theta}\log\pi_{\theta}(a\given s)\big)^2\right]. 
\end{align}
Here, $\nu^{\pi_{\theta}}_{\rho}(s,a) = d^{\pi_\theta}_{\rho}(s)\pi(a\given s)$ is the \emph{state-action} visitation measure  induced by $\pi_\theta$ and initial state distribution $\rho$, which can also be written as 
\begin{align}
\label{equ: nu with nu_0}
\nu^{\pi_{\theta}}_{\rho}(s,a)\coloneqq (1-\gamma)\EE_{s_0\sim\rho}\sum_{t=0}^\infty\gamma^t \PP(s_t=s,a_t=a\given s_0,\pi_\theta).
\end{align}
For convenience, we will denote $\nu^{\pi_{\theta}}_{\rho}$ by $\nu^{\pi_{\theta}}$ hereafter. 
In other words, the NPG update direction $w^k$ is given by the minimizer of a  stochastic optimization problem. In practice, one obtains an approximate NPG update direction $w^k$ by SGD (see Procedure \ref{alg: SGD for NPG subproblem}).



Regarding the NPG update \eqref{equ: NPG update_2}, we make the following standing assumption on the Fisher information matrix induced by $\pi_\theta$ and $\rho$. 
\begin{assumption}
\label{assump: strong convexity}
For all $\theta\in\Rd$, 
 the Fisher information matrix induced by policy $\pi_{\theta}$ and initial  state distribution $\rho$ satisfies
\$
 F_{\rho}(\theta)=\EE_{(s,a)\sim \nu^{\pi_{\theta}}_{\rho}}\left[\nabla_{\theta}\log\pi_{\theta}(a\given s)\nabla_{\theta}\log\pi_{\theta}(a\given s)^\top \right]\succcurlyeq \mu_F \cdot I_d
\$
for some constant $\mu_F>0$. 
\end{assumption}
Assumption \ref{assump: strong convexity} essentially states that $F_{\rho}(\theta)$ behaves well as a preconditioner in the NPG update \eqref{equ: NPG update_2}. 
This is a common (and minimal) requirement for the convergence of preconditioned algorithms in both convex and nonconvex settings in the optimization realm, for example, the quasi-Newton algorithms \cite{broyden1970convergence, fletcher1970new, goldfarb1970family, shanno1970conditioning}, and their stochastic variants \cite{byrd2016stochastic,moritz2016linearly, gower2016stochastic, wang2017stochastic, liu2019acceleration}. In the RL realm, one common example of policy parametrizations that can satisfy  this assumption is the Gaussian policy \cite{williams1992simple,duan2016benchmarking,papini2018stochastic,xu2019sample}, where $\pi_\theta(\cdot\given s)=\cN(\mu_\theta(s),\Sigma)$ with mean parametrized linearly as $\mu_\theta(s)=\phi(s)^\top \theta$, where $\phi(s)$ denotes some feature matrix of proper dimensions, $\theta$ is the coefficient vector, and $\Sigma\succ 0$ is some fixed covariance matrix. In this case, the Fisher information matrix at each $s$ becomes $\phi(s)\Sigma^{-1}\phi(s)^{\top}$, independent of $\theta$, and is uniformly lower bounded (positive definite sense) if $\phi(s)$ is full-row-rank, namely, the features expanded by $\theta$ are linearly independent, which is a common requirement for linear function approximation settings  \cite{tsitsiklis1997analysis,melo2008analysis,sutton2009fast}.  
See App. \ref{sec:append_justify_Fisher} for more detailed justifications, as well as discussions on more general policy parametrizations. 

In the pioneering NPG work \cite{kakade2002natural}, $F(\theta)$ is directly assumed to be positive definite. So is in the follow-up works on  natural actor-critic algorithms \cite{peters2008natural,bhatnagar2009natural}. 
In fact, this way, $F(\theta)$ will define a valid Riemannian metric on the  parameter space, which has been used for interpreting the desired convergence properties of natural gradient methods  \cite{amari1998natural,martens2014new}. In a recent version of \cite{agarwal2019optimality}, a relevant assumption (specifically, Assumption 6.5, item 3) is made to establish the global convergence of NPG, in which it is assumed that $\lambda_{\textrm{min}}(F_{\rho}(\theta))$ is not too small compared with the Fisher information matrix induced by a fixed comparator policy. this can be implied by our Assumption \ref{assump: strong convexity}. To sum up, the positive definiteness on the Fisher preconditioning matrix is common and not very restrictive.  

In Sec. \ref{sec: theory}, we shall see that under Assumption \ref{assump: strong convexity}, the stationary convergence of NPG can be analyzed, and NPG enjoys a better sample complexity of $\mathcal{O}(\varepsilon^{-3})$ in terms of its global convergence, compared with the existing sample complexity of $\mathcal{O}(\varepsilon^{-4})$ in \cite{agarwal2019optimality}. In addition, interestingly, PG and its variance-reduced version SRVR-PG also enjoy global convergence,  although the Fisher information matrix does not appear explicitly in their updates.

\vspace{-5pt}
\section{Variance-Reduced Policy Gradient Methods}
\label{sec:variance reduction}
\vspace{-5pt}

Recently, \cite{xu2019sample} proposes an algorithm called Stochastic Recursive Variance Reduced Policy Gradient (SRVR-PG, see Algorithm \ref{alg: SRVR-PG}), which applies variance-reduction on PG. It achieves a sample complexity of $\cO(\varepsilon^{-1.5})$ to find an $\varepsilon-$stationary point, compared with the $\cO(\varepsilon^{-2})$ sample complexity of stochastic PG. However, it remains unclear whether SRVR-PG converges globally. In this work, we provide an affirmative answer to this question by showing that SRVR-PG has a sample complexity of $\cO(\varepsilon^{-3})$ to find an $\varepsilon-$optimal policy, up to some compatible function approximation error due to policy parametrization.  


 We also propose a new algorithm called SRVR-NPG to incorporate variance reduction into NPG, which is described in Algorithm \ref{alg: SRVR-NPG}. In Sec. \ref{sec: theory}, we provide a sample complexity for its global convergence, which is comparable to our improved NPG result.

%

\begin{algorithm}[t]    
\caption{Stochastic Recursive Variance Reduced Natural Policy Gradient (SRVR-NPG)}
\label{alg: SRVR-NPG}
    \textbf{Input:} number of epochs $S$, epoch size $m$, stepsize $\eta$, batch size $N$, minibatch size $B$, truncation horizon $H$, initial parameter $\theta^0_m=\theta_0\in\Rd.$
    \begin{algorithmic}[1]
        \For{$j\leftarrow 0,...,S-1$}{}
        \State{$\theta^{j+1}_0=\theta^j_m$;}
        \State{Sample $\{\tau^H_i\}_{i=1}^N$ from $p_{\rho}^H(\cdot\given \theta^{j+1}_0)$ and calculate $u^{j+1}_0=\frac{1}{N}\sum_{i=1}^N g(\tau^H_i\given \theta^{j+1}_0)$;} 
        \State{$w^{j+1}_0 = \texttt{SRVR-NPG-SGD}(\nu^{\pi_{\theta^{j+1}_0}}, \pi_{\theta^{j+1}_0}, u^{j+1}_0)$;}
        \Comment{$w^{j+1}_0\approx w^{j+1}_{0,\star} = F^{-1}_{\rho}(\theta^{j+1}_0)u^{j+1}_0$; }
        \State{$\theta^{j+1}_{1} = \theta^{j+1}_0+\eta w^{j+1}_0;$}
        \For{$t \leftarrow 1,...,m-1$}{}
        \State{Sample $B$ trajectories $\{\tau^H_j\}_{j=1}^{B}$ from $p_{\rho}^H(\cdot|\theta^{j+1}_t)$;} 
        \State{$u^{j+1}_t=u^{j+1}_{t-1}+\frac{1}{B}\sum_{j=1}^B \left(g(\tau^H_j\given \theta^{j+1}_t)-g_w(\tau^H_j\given \theta^{j+1}_{t-1})\right)$;}
         \State{$w^{j+1}_t = \texttt{SRVR-NPG-SGD}(\nu^{\pi_{\theta^{j+1}_t}}, \pi_{\theta^{j+1}_t}, u^{j+1}_t)$;}
         \Comment{$w^{j+1}_t\approx w^{j+1}_{t,\star} = F^{-1}_{\rho}(\theta^{j+1}_t)u^{j+1}_t$; }
        \State{$\theta^{j+1}_{t+1} = \theta^{j+1}_t+\eta w^{j+1}_t;$}
        \EndFor
        \EndFor
        \State\Return{$\theta_{\text{out}}$ chosen uniformly from $\{\theta\}_{j=1,...,S; t=0,...,m-1.}$}
    \end{algorithmic}
\end{algorithm}

In line 8 of Algorithm \ref{alg: SRVR-NPG}, $g_w(\tau^H_j|\theta^{j+1}_{t-1})$ is a weighted gradient estimator given by 
\begin{align}
\label{equ: weighted GPOMDP}
g_w(\tau^H_j\given \theta^{j+1}_{t-1}) &= \sum_{h=0}^{H - 1}w_{0:h} (\tau^H_j \given  \theta^{j+1}_{t-1}, \theta^{j+1}_{t}) \left(\sum_{t=0}^{h}\nabla_{\theta}\log \pi_{\theta}(a^i_t\given s^i_t)\right)\left(\gamma^h r(s^i_h, a^i_h)\right),	
\end{align}
where the importance weight factor $w_{0:h}(\tau^H_j|\theta^{j+1}_{t-1}, \theta^{j+1}_{t})$  is defined by 
\begin{align} 
\label{equ: importance weight}
w_{0:h}(\tau^H_j\given \theta^{j+1}_{t-1}, \theta^{j+1}_{t})= \prod_{h'=0}^h \frac{\pi_{\theta^{j+1}_{t-1}}(a_{h'}\given s_{h'})}{\pi_{\theta^{j+1}_t}(a_{h'}\given s_{h'})}.
\end{align} 
%
This importance sampling makes $u^{j+1}_t$ an unbiased estimator of $\nabla J^H(\theta^{j+1}_t)$. 

In lines 4 and 8 of Algorithm \ref{alg: SRVR-NPG}, $w^{j+1}_t$ is produced by  \texttt{SRVR-NPG-SGD} (see Procedure \ref{alg: SGD for SRVR-NPG subproblem}), which applies SGD\footnotemark[1] to solve the following subproblem:
\begin{align}
\label{equ: SRVR-NPG subproblem}
w^{j+1}_{t}\approx \argmin_{w}\left\{\mathbb{E}_{(s,a)\sim \nu^{j+1}_t }\left[ \left(w^T\nabla_{\theta}\log \pi_{\theta^{j+1}_t}(a\given s)\right)^2\right]-2\langle  w, u^{j+1}_t\rangle\right\},
\end{align}
where $\nu^{j+1}_{t}$ is the state-action visitation measure induced by $\pi_{\theta^{j+1}_t}$. The exact update direction given by \eqref{equ: SRVR-NPG subproblem} is $F^{-1}_{\rho}(\theta^{j+1}_t)u^{j+1}_t$, and as in NPG, $F_{\rho}(\theta^{j+1}_t)$ also serves as a preconditioner. 



\footnotetext[1]{Following \cite{agarwal2019optimality}, we apply SGD \cite{bach2013non} to make a fair comparison. One can also apply the SA algorithm \cite{nemirovski2009robust} and AC-SA algorithm \cite{ghadimi2012optimal}.}

\vspace{-5pt}
\section{Theoretical Results}
\label{sec: theory}
\vspace{-5pt}



Before presenting the global convergence results, we first introduce some standard assumptions.


\begin{assumption}
\label{assump: variance}
The truncated GPOMDP estimator $g(\tau^H\given \theta)$ defined in \eqref{equ: truncated GPOMDP estimator} satisfies $\text{Var}\left(g(\tau^H\given\theta)\right)\coloneqq \E [\|g(\tau^H\given \theta) - \E[g(\tau^H\given \theta)]\|^2]\leq \sigma^2$ for any $\theta$ and $\tau^H\sim p^H_{\rho}(\cdot\given \theta)$.  
\end{assumption}
\begin{assumption}
\label{assump: conditions on score function}
\begin{enumerate}
    \item $\|\nabla_{\theta}\log \pi_{\theta}(a\given s)\|\leq G$ for any $\theta$ and $(s,a)\in\cS\times \cA$.
    \item $\|\nabla_{\theta}\log \pi_{\theta_1}(a\given s)-\nabla_{\theta}\log \pi_{\theta_2}(a\given s)\|\leq M\|\theta_1-\theta_2\|$ for any $\theta_1, \theta_2$ and $(s,a)\in\cS\times \cA$.
\end{enumerate}
\end{assumption}
\begin{assumption}
\label{assump: importance sampling}
For the importance weight $w_{0:h}(\tau^H|\theta_1, \theta_2)$ \eqref{equ: importance weight}, there exists $W>0$ such that 
\[
\text{Var}(w_{0:h}\left(\tau^H\given\theta_1, \theta_2)\right)\leq W, \,\,\,\forall \theta_1,\theta_2\in\Rd, \tau^H\sim p^H_{\rho}(\cdot\given\theta_2).
\]
\end{assumption}
Assumptions \ref{assump: variance}, \ref{assump: conditions on score function} and \ref{assump: importance sampling} are standard in the analysis of PG methods and their variance reduced variants \cite{agarwal2019optimality, papini2018stochastic, xu2019improved,xu2019sample}. They can be verified for simple policy parametrizations such as Gaussian policies; see \cite{papini2018stochastic, pirotta2013adaptive,cortes2010learning} for more justifications.  


Following the Assumption 6.5 of \cite{agarwal2019optimality}, we assume that the policy parametrization $\pi_{\theta}$ achieves a good function approximation, as measured by the \textit{transferred compatible function approximation error}.

\begin{assumption}
\label{assump: compatible error}
For any $\theta\in\Rd$, the \textit{transferred compatible function approximation error} satisfies 
\begin{align}
\label{equ: minimal compatible function approximation error}
L_{\nu^{\star}}(w^{\theta}_{\star}; \theta)= \EE_{(s,a)\sim \nu^{\star}}\left[\big(A^{\pi_{\theta}}(s,a)-(1-\gamma)(w^{\theta}_{\star})^\top\nabla_{\theta}\log\pi_{\theta}(a\given s)\big)^2\right]\leq \varepsilon_{\text{bias}},
\end{align} 
where $\nu^{\star}(s,a) = d^{\pi^{\star}}_{\rho}(s) \cdot \pi^{\star}(a \given s)$ is the state-action distribution induced by an optimal policy $\pi^{\star}$ that maximizes $J(\pi)$, and $w^{\theta}_{\star} = \argmin_{w\in\Rd} L_{\nu^{\pi_{\theta}}_{\rho}}(w; \theta)$ is the exact NPG update direction at $\theta$.
\end{assumption}
$\varepsilon_{\text{bias}}$ reflects the error when approximating the advantage function from the score function, it measures the capacity of the parametrization $\pi_{\theta}$. When $\pi_{\theta}$ is the softmax parametrization, we have $\varepsilon_{\text{bias}}=0$ \cite{agarwal2019optimality}. When $\pi_{\theta}$ is a restricted parametrization, $\varepsilon_{\text{bias}}$ is often positive as $\pi_{\theta}$ may not contain all stochastic policies.  For rich neural parametrizations, $\varepsilon_{\text{bias}}$ is very small \cite{wang2019neural}. 



%

%

\vspace{-5pt}
\subsection{A General Framework for Global Convergence}
\vspace{-5pt}

 Inspired by the global convergence analysis of NPG in \cite{agarwal2019optimality}, we present a general framework that relates the global convergence rates of these algorithms to i) their stationary convergence rate on $ J(\theta)$, and ii) the difference between their update directions and exact NPG update directions. 
%
%

\begin{proposition}
\label{prop: global convergence}
Let $\{\theta^k\}_{k=1}^K$ be generated by a general update of the form
\[
\theta^{k+1} = \theta^k +\eta w^k, \,\,\,\,\, k = 0,1,...K-1.
\]
Furthermore, let $w^k_{\star} = F_{\rho}^{-1}(\theta^k)\nabla  J(\theta^k)$ be the exact NPG update direction at $\theta^k$. Then, we have
\begin{align}
J(\pi^{\star})-\frac{1}{K}\sum_{k=0}^{K-1}J(\theta^k)
&\leq \frac{\sqrt{\varepsilon_{\text{bias}}}}{1-\gamma} + \frac{1}{\eta K} \mathbb{E}_{s\sim d^{\pi^{\star}}_\rho} \left[\text{KL}\left(\pi^{\star}(\cdot\given s)|| \pi_{\theta^0}(\cdot\given s)\right)\right]\nonumber\\
&\,\,\, +\frac{M\eta}{2K}\sum_{k=0}^{K-1}\|w^k\|^2 + \frac{G}{K}\sum_{k=0}^{K-1} \|w^k-w^k_{\star}\|, \label{equ: global convergence}
\end{align}
where $\pi^{\star}$ is an optimal policy that maximizes $J(\pi)$. 

\end{proposition}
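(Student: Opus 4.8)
The plan is to follow the NPG analysis of \cite{agarwal2019optimality}, but to replace the specific NPG update with the generic update $\theta^{k+1}=\theta^k+\eta w^k$ and track the error $\|w^k-w^k_\star\|$ explicitly. The starting point is the ``NPG regret lemma'' (the performance-difference-plus-KL argument): using the performance difference lemma, for any comparator state $s$,
\begin{align*}
\mathbb{E}_{a\sim\pi^\star(\cdot\given s)}\big[\log\pi_{\theta^{k+1}}(a\given s)-\log\pi_{\theta^k}(a\given s)\big]
&\geq \eta\,\mathbb{E}_{a\sim\pi^\star(\cdot\given s)}\big[(w^k)^\top\nabla_\theta\log\pi_{\theta^k}(a\given s)\big]-\tfrac{M\eta^2}{2}\|w^k\|^2,
\end{align*}
where the quadratic term comes from Assumption~\ref{assump: conditions on score function}(2) (the score function is $M$-smooth, hence $\log\pi_\theta$ is $M$-smooth along the segment). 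Rearranging, this controls the KL-difference $\mathrm{KL}(\pi^\star(\cdot\given s)\|\pi_{\theta^k}(\cdot\given s))-\mathrm{KL}(\pi^\star(\cdot\given s)\|\pi_{\theta^{k+1}}(\cdot\given s))$ from below by $\eta\,\mathbb{E}_{a\sim\pi^\star}[(w^k)^\top\nabla\log\pi_{\theta^k}]-\tfrac{M\eta^2}{2}\|w^k\|^2$.

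Next I would connect the linear term $\mathbb{E}_{a\sim\pi^\star(\cdot\given s)}[(w^k)^\top\nabla_\theta\log\pi_{\theta^k}(a\given s)]$ to the advantage $A^{\pi_{\theta^k}}$. Writing $w^k=w^k_\star+(w^k-w^k_\star)$: for the $w^k_\star$ part, add and subtract $(1-\gamma)^{-1}A^{\pi_{\theta^k}}(s,a)$ and invoke Assumption~\ref{assump: compatible error}, so that (after taking $\mathbb{E}_{s\sim d^{\pi^\star}_\rho}$, $a\sim\pi^\star(\cdot\given s)$ and Cauchy--Schwarz/Jensen on the squared error) the $w^k_\star$-term is at least $\tfrac{1}{1-\gamma}\mathbb{E}_{(s,a)\sim\nu^\star}[A^{\pi_{\theta^k}}(s,a)]-\tfrac{1}{1-\gamma}\sqrt{\varepsilon_{\text{bias}}}$; by the performance difference lemma $\tfrac{1}{1-\gamma}\mathbb{E}_{(s,a)\sim\nu^\star}[A^{\pi_{\theta^k}}(s,a)]=J(\pi^\star)-J(\theta^k)$. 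For the error part, $\mathbb{E}_{a\sim\pi^\star}[(w^k-w^k_\star)^\top\nabla\log\pi_{\theta^k}(a\given s)]\geq -G\|w^k-w^k_\star\|$ by Cauchy--Schwarz and the gradient bound $G$ from Assumption~\ref{assump: conditions on score function}(1). Plugging these two bounds into the telescoped KL inequality, summing over $k=0,\dots,K-1$, dividing by $\eta K$, and using nonnegativity of the final KL term and $\mathrm{KL}(\pi^\star\|\pi_{\theta^K})\geq 0$ yields exactly \eqref{equ: global convergence}.

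The main obstacle is the bookkeeping around the factor $(1-\gamma)$ and the definition of $w^k_\star$: note the statement defines $w^k_\star=F_\rho^{-1}(\theta^k)\nabla J(\theta^k)$ (an exact pseudoinverse times the \emph{true} gradient), whereas Assumption~\ref{assump: compatible error} is phrased in terms of $\argmin_w L_{\nu^{\pi_\theta}_\rho}(w;\theta)$; one has to check these coincide (they do, since the first-order optimality condition of \eqref{equ: compatible function approximation error} reads $(1-\gamma)F_\rho(\theta)w = \mathbb{E}_{\nu^{\pi_\theta}}[A^{\pi_\theta}\nabla\log\pi_\theta]=(1-\gamma)\nabla J(\theta)$ under the policy gradient theorem, so the $(1-\gamma)$ cancels and $w^\theta_\star=F_\rho^{-1}(\theta)\nabla J(\theta)$), and Assumption~\ref{assump: strong convexity} guarantees $F_\rho(\theta)$ is invertible so the pseudoinverse is a genuine inverse. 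The other delicate point is handling the square in $L_{\nu^\star}$: one wants $\mathbb{E}_{(s,a)\sim\nu^\star}|A^{\pi_{\theta^k}}(s,a)-(1-\gamma)(w^k_\star)^\top\nabla\log\pi_{\theta^k}(a\given s)|\leq\sqrt{\varepsilon_{\text{bias}}}$, which is just Jensen's inequality applied to $\sqrt{\cdot}$. Everything else is the standard mirror-descent-style telescoping, so once the $(1-\gamma)$-accounting is pinned down the rest is routine.
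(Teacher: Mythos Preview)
Your proposal is correct and follows essentially the same argument as the paper's proof: use the $M$-smoothness of $\log\pi_\theta$ to lower-bound the KL decrement by $\eta\,\mathbb{E}_{\nu^\star}[(w^k)^\top\nabla\log\pi_{\theta^k}]-\tfrac{M\eta^2}{2}\|w^k\|^2$, split $w^k=w^k_\star+(w^k-w^k_\star)$, handle the $w^k_\star$ part via the performance difference lemma together with Assumption~\ref{assump: compatible error} and Jensen's inequality on the square root, bound the error part by $G\|w^k-w^k_\star\|$ via Assumption~\ref{assump: conditions on score function}(1), and telescope. Your remarks on the $(1-\gamma)$ bookkeeping and the identification of $w^k_\star$ with the minimizer in Assumption~\ref{assump: compatible error} are also exactly the points the paper relies on implicitly.
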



The detailed proof of this global convergence framework can be found in \ref{app: global}. To obtain a high level idea, one first starts from the $M-$smoothness of the score function to get
\begin{align*}
&\mathbb{E}_{s\sim d^{\pi^{\star}}_{\rho}} \left[\text{KL}\left(\pi^{\star}(\cdot\given s)|| \pi_{\theta^{k}}(\cdot\given s)\right)-\text{KL}\left(\pi^{\star}(\cdot\given s)|| \pi_{\theta^{k+1}}(\cdot \given s)\right)\right]\\
&\geq \eta \mathbb{E}_{s\sim d^{\pi^{\star}}_{\rho}}\mathbb{E}_{a\sim \pi^{\star} (\cdot \given s)} [\nabla_{\theta}\log \pi_{\theta^k}(a\given s)\cdot w^k_{\star}] \\
&\,\,\, + \eta \mathbb{E}_{s\sim d^{\pi^{\star}}_{\rho}}\mathbb{E}_{a\sim \pi^{\star} (\cdot \given s)} [\nabla_{\theta}\log \pi_{\theta^k}(a\given s)\cdot (w^k - w^k_{\star})] -\frac{M\eta^2}{2}\|w^k\|^2.
\end{align*}
On the other hand, the renowned Performance Difference Lemma \cite{kakade2002approximately} tells us that 
\begin{align*}
\mathbb{E}_{s\sim d^{\pi^{\star}}_{\rho}}\mathbb{E}_{a\sim \pi^{\star} (\cdot \given s)} [A^{\pi_{\theta^k}}(s,a)] = (1-\gamma) \left(J^{\star}-J(\theta^k)\right).
\end{align*}
To connect the advantage term $\mathbb{E}_{s\sim d^{\pi^{\star}}_{\rho}}\mathbb{E}_{a\sim \pi^{\star} (\cdot \given s)} [A^{\pi_{\theta^k}}(s,a)]$ with the inner product term $\mathbb{E}_{s\sim d^{\pi^{\star}}_{\rho}}\mathbb{E}_{a\sim \pi^{\star} (\cdot \given s)} [\nabla_{\theta}\log \pi_{\theta^k}(a\given s)\cdot w^k_{\star}]$, we invoke Assumption \ref{assump: compatible error}:
\[
 \mathbb{E}_{s\sim d^{\pi^{\star}}_{\rho}}\mathbb{E}_{a\sim \pi^{\star} (\cdot \given s)}\left[\big(A^{\pi_{\theta}}(s,a)-(1-\gamma)(w^{\theta}_{\star})^\top\nabla_{\theta}\log\pi_{\theta}(a\given s)\big)^2\right]\leq \varepsilon_{\text{bias}}, \quad \text{for any $\theta\in\Rd$.}
\]
The final result follows from a telescoping sum on $k = 0,1,...,K-1$.

Several remarks are in order. The first term on the right-hand side of \eqref{equ: global convergence} reflects the function approximation error due to the parametrization $\pi_{\theta}$, and the second term is  of the form $\cO(\frac{1}{K})$. The third term depends on the stationary convergence. With Assumption \ref{assump: strong convexity}, it can be shown that\footnotemark[1] $\frac{1}{K}\sum_{k=0}^{K-1}\EE[\|w^k\|^2]\rightarrow 0$ for both NPG and SRVR-NPG. The proof follows from an optimization perspective and is inspired by the stationary convergence analysis of stochastic PG (see App. \ref{sec: stationary convergence}).
\footnotetext[1]{The stationary convergence of SRVR-PG has been established in \cite{xu2019sample}. }

With Assumption \ref{assump: strong convexity}, we can also show that the last term of \eqref{equ: global convergence} is small. Take stochastic PG as an example; then, we have $w^k = \frac{1}{N}\sum_{i=1}^N g(\tau^H_i|\theta^k)$, and
\begin{align*}
\frac{1}{K}\sum_{k=0}^{K-1}\|w^k - w^k_{\star}\|
&\leq \frac{1}{K}\sum_{k=0}^{K-1}\|w^k-\nabla {J}(\theta^k)\| + \frac{1}{K}\sum_{k=0}^{K-1}\left(1+\frac{1}{\mu_F}\right)\|\nabla {J}(\theta^k)\|.
\end{align*}
When $H$ and $N$ are large enough, $w^k$ is a low-variance estimator of $\nabla  J^H(\theta^k)$, and $\nabla  J^H(\theta^k)$ is close to $\nabla  J(\theta^k)$, this makes the first term above small. The second term also goes to $0$ as $\theta^k$ approaches stationarity. 

\vspace{-5pt}
\subsection{Global Convergence Results}
\vspace{-5pt}

By applying Proposition \ref{prop: global convergence} on the PG, NPG, SRVR-PG, and SRVR-NPG updates and analyzing their stationary convergence, we obtain their global convergence rates. In the following, we only keep the dependences on $\sigma^2$ (the variance of the gradient estimator), $W$ (variance of importance weight), $\frac{1}{1-\gamma}$ (the effective horizon) and $\varepsilon$ (target accuracy). The specific choice of the parameters and sample complexities, as well as the proof, can be found in the appendix. 

%

\begin{theorem}
\label{thm: PG global convergence}
In the stochastic PG \eqref{equ: PG update} with the truncated GPOMDP estimator \eqref{equ: truncated GPOMDP estimator}, take $\eta=\frac{1}{4L_J}$, $K=\cO\left(\frac{1}{(1-\gamma)^{2}\varepsilon^2}\right)$, $N=\cO\left(\frac{\sigma^2}{\varepsilon^2}\right)$, and $H =\cO\left(\log(\frac{1}{(1-\gamma)\varepsilon})\right)$. Then, we have
\begin{align*}
\begin{split}
J(\pi^{\star})-\frac{1}{K}\sum_{k=0}^{K-1} \EE[J(\theta^k)]&\leq \frac{\sqrt{\varepsilon_{\text{bias}}}}{1-\gamma}+\varepsilon.
\end{split}
\end{align*}
In total, stochastic PG samples $\mathcal{O}\left(\frac{\sigma^2}{(1-\gamma)^2\varepsilon^4}\right)$ trajectories. 
\end{theorem}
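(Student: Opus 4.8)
## Proof Proposal for Theorem \ref{thm: PG global convergence}

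The plan is to specialize the general framework in Proposition \ref{prop: global convergence} to the stochastic PG update $\theta^{k+1}=\theta^k+\eta w^k$ with $w^k=\frac{1}{N}\sum_{i=1}^N g(\tau_i^H\mid\theta^k)$, and then bound each of the four terms on the right-hand side of \eqref{equ: global convergence} under the stated choices of $\eta$, $K$, $N$, $H$. The first term $\frac{\sqrt{\varepsilon_{\text{bias}}}}{1-\gamma}$ is exactly the stated bias term and needs no work. The second term is $\frac{1}{\eta K}\,\mathbb{E}_{s\sim d^{\pi^\star}_\rho}[\mathrm{KL}(\pi^\star(\cdot\mid s)\|\pi_{\theta^0}(\cdot\mid s))]$; since $\eta=\frac{1}{4L_J}$ is an absolute constant (the smoothness constant $L_J$ of $J$ depends only on $G,M,R,\gamma$, so I would first record/invoke that $J$ is $L_J$-smooth — a standard fact under Assumption \ref{assump: conditions on score function}), and the KL is bounded by a constant, choosing $K=\mathcal{O}\big(\frac{1}{(1-\gamma)^2\varepsilon^2}\big)$ makes this term $\le \varepsilon/3$ (tracking the $\frac{1}{1-\gamma}$ dependence through $L_J$).

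The core of the argument is the stationary-convergence estimate that simultaneously controls the third term $\frac{M\eta}{2K}\sum_k\|w^k\|^2$ and feeds the fourth term. First I would take expectations and use the standard descent/ascent inequality for $L_J$-smooth $J$ along $\theta^{k+1}=\theta^k+\eta w^k$, together with $\mathbb{E}[w^k\mid\theta^k]=\nabla J^H(\theta^k)$ and the variance bound $\mathbb{E}\|w^k-\nabla J^H(\theta^k)\|^2\le\sigma^2/N$ from Assumption \ref{assump: variance}. With $\eta=\frac{1}{4L_J}$ this yields, after telescoping over $k=0,\dots,K-1$ and using that $J$ is bounded (by $\frac{R}{1-\gamma}$), a bound of the form $\frac{1}{K}\sum_k\mathbb{E}\|\nabla J^H(\theta^k)\|^2 \le \mathcal{O}\big(\frac{L_J}{K}\big)+\mathcal{O}\big(\frac{\sigma^2}{N}\big)$, and similarly $\frac{1}{K}\sum_k\mathbb{E}\|w^k\|^2\le \mathcal{O}\big(\|\nabla J\|^2\text{-avg}\big)+\mathcal{O}(\sigma^2/N)$. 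I also need $\|\nabla J^H-\nabla J\|\le\mathcal{O}(\gamma^H)$ (truncation bias), which is why $H=\mathcal{O}(\log\frac{1}{(1-\gamma)\varepsilon})$ suffices to push that gap below $\varepsilon$-order. Plugging $K=\mathcal{O}(\frac{1}{(1-\gamma)^2\varepsilon^2})$ and $N=\mathcal{O}(\sigma^2/\varepsilon^2)$ makes both $\frac{M\eta}{2K}\sum_k\mathbb{E}\|w^k\|^2$ and $\frac{G}{K}\sum_k\mathbb{E}\|w^k-w^k_\star\|$ of order $\varepsilon$: for the fourth term I would use the decomposition already sketched after Proposition \ref{prop: global convergence}, namely $\|w^k-w^k_\star\|\le\|w^k-\nabla J(\theta^k)\|+(1+\frac{1}{\mu_F})\|\nabla J(\theta^k)\|$ (this is where Assumption \ref{assump: strong convexity} enters, via $\|F_\rho^{-1}\|\le 1/\mu_F$), then bound $\mathbb{E}\|w^k-\nabla J(\theta^k)\|\le\sqrt{\sigma^2/N}+\mathcal{O}(\gamma^H)$ and apply Jensen to turn the averaged $\|\nabla J(\theta^k)\|^2$ bound into an averaged $\|\nabla J(\theta^k)\|$ bound.

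Assembling the four pieces and summing the $\varepsilon/3$-type contributions gives $J(\pi^\star)-\frac{1}{K}\sum_k\mathbb{E}[J(\theta^k)]\le\frac{\sqrt{\varepsilon_{\text{bias}}}}{1-\gamma}+\varepsilon$. The total sample count is $N$ trajectories per iteration times $K$ iterations, i.e. $\mathcal{O}\big(\frac{\sigma^2}{\varepsilon^2}\cdot\frac{1}{(1-\gamma)^2\varepsilon^2}\big)=\mathcal{O}\big(\frac{\sigma^2}{(1-\gamma)^2\varepsilon^4}\big)$ (the horizon $H$ only contributes a logarithmic factor, absorbed into the $\mathcal{O}$). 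The main obstacle I anticipate is bookkeeping the $\frac{1}{1-\gamma}$ dependences cleanly — in particular making sure $L_J$'s dependence on $\frac{1}{1-\gamma}$ is tracked so that the choice $K=\mathcal{O}(\frac{1}{(1-\gamma)^2\varepsilon^2})$ genuinely kills the second and third terms — and ensuring the truncation-bias terms $\mathcal{O}(\gamma^H)$ are consistently propagated through both the stationary bound and the $w^k_\star$ comparison so that the single choice $H=\mathcal{O}(\log\frac{1}{(1-\gamma)\varepsilon})$ controls all of them at once.
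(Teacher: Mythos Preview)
Your proposal is correct and follows essentially the same approach as the paper's proof: specialize Proposition~\ref{prop: global convergence} to the stochastic PG update, control the KL term by the choice of $K$, control $\frac{M\eta}{2K}\sum_k\mathbb{E}\|w^k\|^2$ and $\frac{1}{K}\sum_k\mathbb{E}\|\nabla J^H(\theta^k)\|^2$ via the standard smooth-ascent telescoping argument (the paper's Theorem~\ref{thm: PG stationary convergence}), and handle $\frac{G}{K}\sum_k\mathbb{E}\|w^k-w^k_\star\|$ through the decomposition $\|w^k-w^k_\star\|\le\|w^k-\nabla J(\theta^k)\|+(1+\tfrac{1}{\mu_F})\|\nabla J(\theta^k)\|$ together with Jensen, the variance bound $\sigma^2/N$, and the truncation-bias estimate from Lemma~\ref{lem: smoothness of objective}. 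The only cosmetic difference is that the paper squares first and applies Jensen at the end, whereas you describe the triangle-inequality version directly; the content is the same.
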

\begin{remark}
$L_J = \frac{MR}{(1-\gamma)^2}$ is the Lipschitz constant of $\nabla J$, see Lemma \ref{lem: smoothness of objective} for details.
\end{remark}
\begin{remark}
Theorem \ref{thm: PG global convergence} improves the result of \cite[Thm. 6.11]{agarwal2019optimality} from (impractical) full gradients to sample-based stochastic gradients. 


\end{remark}

\begin{theorem}
\label{thm: NPG global convergence}
In the NPG update \eqref{equ: NPG update_2}, let us apply $\cO\left(\frac{1}{(1-\gamma)^4\varepsilon^2}\right)$ iterations of SGD as in Procedure \ref{alg: SGD for NPG subproblem} to obtain an update direction. In addition, take $\eta = \frac{\mu_F^2}{4G^2L_J}$ and $K=\cO\left(\frac{1}{(1-\gamma)^2\varepsilon}\right)$. Then,
\begin{align*}
\begin{split}
J^{\star}-\frac{1}{K}\sum_{k=0}^{K-1} \EE[J(\theta^k)]&\leq \frac{\sqrt{\varepsilon_{\text{bias}}}}{1-\gamma}+\varepsilon.
\end{split}
\end{align*}
In total, NPG samples $\mathcal{O}\left(\frac{1}{(1-\gamma)^6\varepsilon^3}\right)$ trajectories. 
\end{theorem}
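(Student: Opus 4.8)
The plan is to invoke Proposition~\ref{prop: global convergence} with $w^k$ being the (approximate) NPG direction, and then control each of the four terms on the right-hand side of \eqref{equ: global convergence}. The first term is exactly $\frac{\sqrt{\varepsilon_{\text{bias}}}}{1-\gamma}$, so nothing to do there. The second term is $\frac{1}{\eta K}\EE_{s\sim d^{\pi^\star}_\rho}[\mathrm{KL}(\pi^\star(\cdot\given s)\|\pi_{\theta^0}(\cdot\given s))]$, which with the stated $\eta=\frac{\mu_F^2}{4G^2L_J}$ and $K=\cO((1-\gamma)^{-2}\varepsilon^{-1})$ is $\cO(\varepsilon)$ once we absorb the (bounded) initial KL and the $1/\eta = \cO(1/(1-\gamma)^2)$ factor into constants. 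So the content is in the third term $\frac{M\eta}{2K}\sum_k\|w^k\|^2$ and the fourth term $\frac{G}{K}\sum_k\|w^k-w^k_\star\|$.

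For the third term I would establish a \emph{stationary convergence} bound $\frac{1}{K}\sum_{k=0}^{K-1}\EE[\|w^k\|^2]\le\varepsilon'$ for the NPG iterates, following the optimization-style argument referenced in App.~\ref{sec: stationary convergence}. The key identity is that, since $w^k_\star = F^{-1}_\rho(\theta^k)\nabla J(\theta^k)$ and $F_\rho(\theta^k)\succcurlyeq\mu_F I$ (Assumption~\ref{assump: strong convexity}) while also $F_\rho(\theta^k)\preccurlyeq G^2 I$ (from $\|\nabla\log\pi_\theta\|\le G$ in Assumption~\ref{assump: conditions on score function}), the exact NPG direction is a ``good'' ascent direction: $\langle \nabla J(\theta^k), w^k_\star\rangle = \nabla J(\theta^k)^\top F^{-1}_\rho(\theta^k)\nabla J(\theta^k) \ge \frac{1}{G^2}\|\nabla J(\theta^k)\|^2$, and $\|w^k_\star\|\le\frac{1}{\mu_F}\|\nabla J(\theta^k)\|$. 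Plugging the actual (SGD-approximated) direction $w^k$ into the $L_J$-smoothness ascent inequality for $J$ and telescoping over $k=0,\dots,K-1$, one gets $\frac{1}{K}\sum_k\|\nabla J(\theta^k)\|^2 \le \cO\big(\frac{1}{\eta K}(J^\star-J(\theta^0))\big) + (\text{SGD error})$. Choosing the number of inner SGD steps as $\cO((1-\gamma)^{-4}\varepsilon^{-2})$ makes the per-iteration subproblem error $\EE\|w^k-w^k_\star\|^2$ small (this is a strongly convex quadratic minimization, so $\cO(1/t)$ rate with $t$ steps, and $\mu_F$ enters the conditioning), and converting $\|\nabla J\|^2\to\|w^k\|^2$ costs only the $1/\mu_F^2$ factor plus the subproblem error. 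This yields $\frac{1}{K}\sum_k\EE[\|w^k\|^2]=\cO(\varepsilon)$ after the choice $K=\cO((1-\gamma)^{-2}\varepsilon^{-1})$, so the third term is $\cO(\varepsilon)$.

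For the fourth term, write $\|w^k-w^k_\star\| \le \|w^k - F^{-1}_\rho(\theta^k)u^k\| + \|F^{-1}_\rho(\theta^k)(u^k-\nabla J^H(\theta^k))\| + \|F^{-1}_\rho(\theta^k)(\nabla J^H(\theta^k)-\nabla J(\theta^k))\|$ where $u^k=\frac1N\sum_i g(\tau_i^H\given\theta^k)$. The first piece is the SGD subproblem error, controlled by the inner-loop iteration count as above; the second is $\le\frac{1}{\mu_F}\|u^k-\EE u^k\|$, with variance $\sigma^2/N$ by Assumption~\ref{assump: variance}; the third is the truncation bias, exponentially small in $H$ (Lemma on GPOMDP truncation), killed by $H=\cO(\log\frac{1}{(1-\gamma)\varepsilon})$. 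By Jensen, $\frac{1}{K}\sum_k\EE\|w^k-w^k_\star\| \le (\frac{1}{K}\sum_k\EE\|w^k-w^k_\star\|^2)^{1/2}$, and all three contributions become $\cO(\varepsilon)$ with $N=\cO(\sigma^2/\varepsilon^2)$ (suppressed in the theorem statement, which only tracks $1/(1-\gamma)$ and $\varepsilon$; actually the stated sample total $\cO((1-\gamma)^{-6}\varepsilon^{-3})$ comes from $K\cdot(\text{inner SGD steps})$ with each SGD step using $\cO(1)$ trajectories, so $N$ per outer step is absorbed). Summing the four bounds gives $J^\star - \frac1K\sum_k\EE[J(\theta^k)] \le \frac{\sqrt{\varepsilon_{\text{bias}}}}{1-\gamma}+\cO(\varepsilon)$, and rescaling $\varepsilon$ by a constant finishes it; the sample count is $K$ outer iterations each running $\cO((1-\gamma)^{-4}\varepsilon^{-2})$ SGD steps, i.e.\ $\cO((1-\gamma)^{-6}\varepsilon^{-3})$ trajectories.

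The main obstacle I anticipate is the \emph{coupling} between the third and fourth terms: the stationary bound $\frac1K\sum_k\|w^k\|^2\to0$ is itself derived using the smoothness ascent inequality in which the \emph{error} $\|w^k-w^k_\star\|$ appears (since $w^k\ne w^k_\star$), so one must set up the telescoping argument so that the subproblem/estimation errors enter additively and can be made summably small by the inner-loop count and batch size \emph{before} the stationary bound is invoked — essentially a careful bookkeeping of how the $\cO(\varepsilon^2)$-scale per-step errors propagate, and verifying that the choice $\eta=\mu_F^2/(4G^2L_J)$ is exactly what makes the descent coefficient positive after accounting for the $G^2$ (upper) and $\mu_F$ (lower) spectral bounds on $F_\rho$. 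The truncation-bias and variance estimates are routine given the stated assumptions.
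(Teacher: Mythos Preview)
Your overall architecture matches the paper exactly: apply Proposition~\ref{prop: global convergence}, bound the KL term by the choice of $K$, bound $\frac{1}{K}\sum_k\|w^k\|^2$ via a stationary-point argument (smoothness ascent with the preconditioned direction, using $\mu_F I\preccurlyeq F_\rho\preccurlyeq G^2 I$ and the stepsize $\eta=\mu_F^2/(4G^2L_J)$ to keep the descent coefficient positive), and bound $\frac{1}{K}\sum_k\|w^k-w^k_\star\|$ by the inner SGD accuracy. Your ``main obstacle'' paragraph is precisely the bookkeeping the paper does in App.~\ref{app: NPG stationary} and~\ref{app: NPG global}.

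The one place you overcomplicate things is the fourth term. You decompose through an auxiliary batch estimate $u^k=\frac{1}{N}\sum_i g(\tau_i^H\given\theta^k)$, introducing a $\sigma^2/N$ variance piece and a truncation-bias piece. But for NPG (unlike SRVR-NPG), there is no such $u^k$: the SGD subproblem in Procedure~\ref{alg: SGD for NPG subproblem} is the compatible function approximation loss $L_{\nu^{\pi_{\theta^k}}}(w;\theta^k)$ itself, whose \emph{exact minimizer} is already $w^k_\star=F_\rho^{-1}(\theta^k)\nabla J(\theta^k)$ (no truncation, no batch). Hence $\|w^k-w^k_\star\|$ is purely the SGD subproblem error, and the paper simply requires $\EE\|w^k-w^k_\star\|^2\le(\varepsilon/3G)^2$ via Proposition~\ref{prop: SGD convergence NPG}, which costs $\cO((1-\gamma)^{-4}\varepsilon^{-2})$ samples per outer step. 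Your decomposition is not wrong, but the extra pieces are vacuous here and obscure why the total trajectory count is just $K\times(\text{inner SGD steps})$ with no separate $N$.
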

\begin{remark}
Compared with \cite[Coro. 6.10]{agarwal2019optimality}, Theorem \ref{thm: NPG global convergence} improves the sample complexity of NPG by $\cO(\varepsilon^{-1})$. This is because our stationary convergence analysis on NPG allows for a constant stepsize $\eta$, while \cite[Coro. 6.10]{agarwal2019optimality} applies a stepsize of $\eta=\mathcal{O}(1/\sqrt{K})$.  It is worth noting that the $\cO(\sqrt{\varepsilon_{\text{bias}}})$ term is the same as in \cite{agarwal2019optimality}, and we also apply the average SGD \cite{bach2013non} to solve the NPG subproblem \eqref{equ: NPG update_2}. 
\end{remark}

\begin{theorem}
\label{thm: SRVR-PG global convergence}
In SRVR-PG (Algorithm \ref{alg: SRVR-PG}), take $\eta=\frac{1}{8L_J}$, $S=\mathcal{O}\left(\frac{1}{(1-\gamma)^{2.5}\varepsilon}\right)$, $m=\mathcal{O}\left(\frac{(1-\gamma)^{0.5}}{\varepsilon}\right)$, $B=\mathcal{O}\left(\frac{W}{(1-\gamma)^{0.5}\varepsilon}\right)$, $N=\mathcal{O}\left(\frac{\sigma^2}{\varepsilon}\right)$, and $H =\cO\left(\log(\frac{1}{(1-\gamma)\varepsilon})\right)$. Then, we have
\begin{align*}
\begin{split}
J^{\star}-\frac{1}{Sm}\sum_{s=0}^{S-1}\sum_{t=0}^{m-1} \EE[J(\theta^{j+1}_t)]&\leq \frac{\sqrt{\varepsilon_{\text{bias}}}}{1-\gamma} +\varepsilon.
\end{split}
\end{align*}
In total, SRVR-PG samples $\mathcal{O}\left(\frac{W+\sigma^2}{(1-\gamma)^{2.5}\varepsilon^3}\right)$ trajectories. 
\end{theorem}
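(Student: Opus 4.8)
The plan is to instantiate the general framework of Proposition~\ref{prop: global convergence} with $w^k$ equal to the recursive semi-stochastic gradient $u^{j+1}_t$ produced by SRVR-PG (Algorithm~\ref{alg: SRVR-PG}) and with $w^k_\star = F_\rho^{-1}(\theta^k)\nabla J(\theta^k)$, after flattening the epoch/inner indices $(j,t)$ into a single counter $k=0,\dots,K-1$ with $K=Sm$. Under this substitution, \eqref{equ: global convergence} reduces the proof to controlling two averages: the averaged squared step $\frac1K\sum_k\E\|w^k\|^2$, which feeds the third term, and the averaged deviation from the exact NPG direction $\frac1K\sum_k\E\|w^k-w^k_\star\|$, which is the last term. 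The first right-hand side term of \eqref{equ: global convergence} is already the claimed $\sqrt{\varepsilon_{\text{bias}}}/(1-\gamma)$, and with $\eta=1/(8L_J)=\Theta((1-\gamma)^2)$ and $K=Sm=\mathcal{O}\!\big(1/((1-\gamma)^2\varepsilon^2)\big)$ the term $\frac{1}{\eta K}\,\E_{s\sim d^{\pi^\star}_\rho}[\text{KL}(\pi^\star(\cdot\given s)\|\pi_{\theta^0}(\cdot\given s))]$ is $\mathcal{O}(\varepsilon)$.

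\textbf{Step 1 (stationary convergence of SRVR-PG).} First I would establish, via an optimization-style argument, that $\frac1K\sum_k\E\|w^k\|^2=\mathcal{O}(\varepsilon)$ and $\frac1K\sum_k\E\|\nabla J(\theta^k)\|^2=\mathcal{O}(\varepsilon^2)$. Starting from the $L_J$-smoothness of $J$ (Lemma~\ref{lem: smoothness of objective}), the identity $\langle\nabla J^H(\theta^k),w^k\rangle=\tfrac12\|w^k\|^2+\tfrac12\|\nabla J^H(\theta^k)\|^2-\tfrac12\|w^k-\nabla J^H(\theta^k)\|^2$, and a telescoping sum over an epoch, one obtains after taking expectations an inequality of the shape
\[
\Big(1-\tfrac{C_W\eta^2 m}{B}\Big)\frac1K\sum_{k=0}^{K-1}\E\|w^k\|^2 \;\le\; \mathcal{O}\!\Big(\tfrac{J^\star-J(\theta^0)}{\eta K}\Big)+\mathcal{O}\!\Big(\tfrac{\sigma^2}{N}\Big)+(\text{truncation bias}),
\]
where the coupling constant $C_W$ arises from the core SRVR-PG recursive variance bound
\[
\E\big\|u^{j+1}_t-\nabla J^H(\theta^{j+1}_t)\big\|^2 \;\le\; \frac{\sigma^2}{N}+\frac{C_W\eta^2}{B}\sum_{\ell=0}^{t-1}\E\|w^{j+1}_\ell\|^2 .
\]
This bound is proved by a SARAH-type telescoping of the importance-weighted increments $g(\tau^H\given\theta^{j+1}_t)-g_w(\tau^H\given\theta^{j+1}_{t-1})$, using the score-function bounds of Assumption~\ref{assump: conditions on score function} and the importance-weight variance bound of Assumption~\ref{assump: importance sampling}, so that $C_W$ is polynomial in $G$, $M$, $W$ and $\tfrac1{1-\gamma}$ (this is the mechanism of \cite{xu2019sample}). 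Choosing $m,B$ so that $C_W\eta^2 m/B\le\tfrac12$ absorbs the recursive variance into the left-hand side, and $H=\mathcal{O}(\log\frac1{(1-\gamma)\varepsilon})$ makes the (geometrically small) truncation bias $\mathcal{O}(\varepsilon)$; plugging in the stated $S,m,B,N$ then yields the two stationarity estimates, and in particular the third term of \eqref{equ: global convergence}, being $\eta=\Theta((1-\gamma)^2)$ times $\mathcal{O}(\varepsilon)$, is $\mathcal{O}(\varepsilon)$.

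\textbf{Step 2 (the deviation term).} Next I would bound the last term of \eqref{equ: global convergence} by the decomposition
\[
\|w^k-w^k_\star\|\;\le\;\underbrace{\|w^k-\nabla J^H(\theta^k)\|}_{\text{recursive variance}}+\underbrace{\|\nabla J^H(\theta^k)-\nabla J(\theta^k)\|}_{\text{truncation bias}}+\underbrace{\big\|(I-F_\rho^{-1}(\theta^k))\nabla J(\theta^k)\big\|}_{\le\,(1+1/\mu_F)\|\nabla J(\theta^k)\|},
\]
where the last inequality uses Assumption~\ref{assump: strong convexity}. Averaging over $k$ and applying Jensen's inequality, the first term is at most $\big(\tfrac{\sigma^2}{N}+\tfrac{C_W\eta^2 m}{B}\,\mathcal{O}(\varepsilon^2)\big)^{1/2}=\mathcal{O}(\varepsilon)$ by Step~1, the second is $\mathcal{O}(\varepsilon)$ as above, and the third is at most $(1+1/\mu_F)\big(\frac1K\sum_k\E\|\nabla J(\theta^k)\|^2\big)^{1/2}=\mathcal{O}(\varepsilon)$ by Step~1. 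Hence $\frac{G}{K}\sum_k\E\|w^k-w^k_\star\|=\mathcal{O}(\varepsilon)$, and summing the four terms of \eqref{equ: global convergence} gives the claimed bound; the trajectory count $SN+SmB=\mathcal{O}\!\big((W+\sigma^2)/((1-\gamma)^{2.5}\varepsilon^3)\big)$ follows from the parameter choices.

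I expect the hard part to be Step~1 together with the parameter balancing: one must re-derive the SRVR-PG recursive variance estimate under importance weighting in the present normalization, and then choose $\eta,N,B,m,S$ so that simultaneously (i) the recursive-variance coefficient $C_W\eta^2 m/B$ stays below $1$, (ii) $\frac1K\sum_k\E\|\nabla J(\theta^k)\|^2=\mathcal{O}(\varepsilon^2)$ — required because the last term of \eqref{equ: global convergence} is only $\ell_1$ in $\|w^k-w^k_\star\|$, so via Jensen it demands $\varepsilon^2$-level control of $\|\nabla J\|^2$ rather than the weaker $\varepsilon$-level control that suffices for a stationary point — and (iii) $\eta=\Theta((1-\gamma)^2)$ stays compatible with $L_J$-smoothness. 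It is precisely this $\ell_1$-versus-$\ell_2$ gap that inflates the sample complexity from the $\mathcal{O}(\varepsilon^{-1.5})$ of \cite{xu2019sample} to $\mathcal{O}(\varepsilon^{-3})$.
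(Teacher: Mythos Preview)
Your proposal is correct and follows essentially the same approach as the paper: instantiate Proposition~\ref{prop: global convergence} with $w^k=u^{j+1}_t$, control $\|w^k-w^k_\star\|$ via the three-term split (recursive variance from Lemma~\ref{lem: nabla J and v}, truncation bias from Lemma~\ref{lem: smoothness of objective}, and the $(1+1/\mu_F)\|\nabla J\|$ piece from Assumption~\ref{assump: strong convexity}), feed in the SRVR-PG stationary estimate (equation~(B.14) of \cite{xu2019sample}), and pass from $\ell_2$ to $\ell_1$ by Jensen exactly as you describe. One remark: to make $(\sigma^2/N)^{1/2}=\mathcal{O}(\varepsilon)$ in your Step~2 you (and the paper's own proof in Appendix~\ref{app: SRVR-PG global}) actually need $N=\Theta(\sigma^2/\varepsilon^2)$ rather than the $\mathcal{O}(\sigma^2/\varepsilon)$ printed in the theorem statement---a typo there that does not affect the final $\mathcal{O}\big((W+\sigma^2)/((1-\gamma)^{2.5}\varepsilon^3)\big)$ trajectory count.
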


\begin{remark}
Theorem \ref{thm: SRVR-PG global convergence} establishes the global convergence of SRVR-PG proposed in \cite{xu2019sample}, where only  stationary convergence is shown. Also, compared with stochastic PG, SRVR-PG enjoys a better sample complexity thanks to its faster stationary convergence.
\end{remark}

\begin{theorem}
\label{thm: SRVR-NPG global convergence}
In SRVR-NPG (Algorithm \ref{alg: SRVR-NPG}), let us apply $\cO\left(\frac{1}{(1-\gamma)^4\varepsilon^2}\right)$ iterations of SGD as in Procedure \ref{alg: SGD for SRVR-NPG subproblem} to obtain an update direction. In addition, take $\eta=\frac{\mu_F}{16L_J}$, $S=\cO\left(\frac{1}{(1-\gamma)^{2.5}\varepsilon^{0.5}}\right)$, $m= \cO\left(\frac{(1-\gamma)^{0.5}}{\varepsilon^{0.5}}\right)$, $B=\cO\left(\frac{W}{(1-\gamma)^{0.5}\varepsilon^{1.5}}\right)$, $N=\cO\left(\frac{\sigma^2}{\varepsilon^2}\right)$, and $H =\cO\left(\log(\frac{1}{(1-\gamma)\varepsilon})\right)$. Then,
\begin{align*}
\begin{split}
J^{\star}-\frac{1}{Sm}\sum_{s=0}^{S-1}\sum_{t=0}^{m-1} \EE[J(\theta^{j+1}_t)]&\leq \frac{\sqrt{\varepsilon_{\text{bias}}}}{1-\gamma}+\varepsilon.
\end{split}
\end{align*}
In total, SRVR-NPG samples $\cO\left(\frac{W+\sigma^2}{(1-\gamma)^{2.5}\varepsilon^{2.5}} + \frac{1}{(1-\gamma)^{6}\varepsilon^{3}}\right)$ trajectories. 
\end{theorem}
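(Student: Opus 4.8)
The plan is to instantiate the general framework of Proposition~\ref{prop: global convergence} with $K=Sm$, identifying the iterate sequence with the inner iterates $\{\theta^{j+1}_t\}_{j=0,\dots,S-1;\ t=0,\dots,m-1}$ of Algorithm~\ref{alg: SRVR-NPG} and the direction $w^k$ with $w^{j+1}_t$. Taking expectations in \eqref{equ: global convergence}, it suffices to control three quantities: (i) the initialization term $\frac{1}{\eta Sm}\EE_{s\sim d^{\pi^\star}_\rho}[\text{KL}(\pi^\star(\cdot\given s)\|\pi_{\theta^0}(\cdot\given s))]$, which is $\cO(1/(\eta Sm))$ and is made $\le\varepsilon/3$ by the stated $S,m,\eta$; (ii) the stationary term $\frac{M\eta}{2Sm}\sum_{j,t}\EE[\|w^{j+1}_t\|^2]$; and (iii) the direction-error term $\frac{G}{Sm}\sum_{j,t}\EE[\|w^{j+1}_t-w^{j+1}_{t,\star}\|]$ with $w^{j+1}_{t,\star}=F_\rho^{-1}(\theta^{j+1}_t)\nabla J(\theta^{j+1}_t)$. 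The remaining work is to bound (ii) and (iii), which are coupled through the variance of the recursive estimator $u^{j+1}_t$.

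For (ii), I would first record a SARAH-type variance recursion for $u^{j+1}_t$: using the unbiasedness of the importance-weighted correction in \eqref{equ: weighted GPOMDP} together with Assumptions~\ref{assump: conditions on score function} and \ref{assump: importance sampling} (the latter bounding the second moment of the importance weight through $W$), one obtains, within epoch $j+1$,
\begin{align*}
\EE\big[\|u^{j+1}_t-\nabla J^H(\theta^{j+1}_t)\|^2\big]\ \le\ \frac{\sigma^2}{N}+\frac{C_g}{B}\sum_{i=1}^{t}\eta^2\,\EE\big[\|w^{j+1}_{i-1}\|^2\big],
\end{align*}
where $C_g$ depends only on $G,M,R,W$ and $\frac{1}{1-\gamma}$, and $\nabla J^H$ denotes the truncated-horizon surrogate gradient, differing from $\nabla J$ by an $\cO(\gamma^H)$ bias. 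Next, from the $L_J$-smoothness of $J$ (Lemma~\ref{lem: smoothness of objective}), the update $\theta^{j+1}_{t+1}=\theta^{j+1}_t+\eta w^{j+1}_t$, and the fact that $w^{j+1}_t$ approximates $F_\rho^{-1}(\theta^{j+1}_t)u^{j+1}_t$ with $F_\rho(\theta^{j+1}_t)\succcurlyeq\mu_F I_d$ (Assumption~\ref{assump: strong convexity}), I would derive an ascent inequality of the form
\begin{align*}
\EE[J(\theta^{j+1}_{t+1})]\ \ge\ \EE[J(\theta^{j+1}_t)]+c_1\eta\,\EE[\|w^{j+1}_t\|^2]-c_2\eta\,\EE[\|u^{j+1}_t-\nabla J(\theta^{j+1}_t)\|^2]-c_3\eta\,\EE[\|w^{j+1}_t-F_\rho^{-1}(\theta^{j+1}_t)u^{j+1}_t\|^2],
\end{align*}
valid for $\eta\le\mu_F/(16L_J)$, with $c_1,c_2,c_3>0$ depending on $\mu_F,G,L_J$. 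Telescoping over $t=0,\dots,m-1$ and over epochs $j=0,\dots,S-1$ (using $\theta^{j+1}_0=\theta^j_m$ and $J\le J^\star$), substituting the variance recursion, and choosing $m$ small enough relative to $B$ that $C_g m\eta^2/B\le c_1/2$ so the accumulated-variance term is absorbed into the left-hand side, yields $\frac{1}{Sm}\sum_{j,t}\EE[\|w^{j+1}_t\|^2]\le\cO\!\big(\frac{J^\star-J(\theta_0)}{\eta Sm}+\frac{\sigma^2}{N}+\varepsilon_{\text{sgd}}+\gamma^H\big)$, where $\varepsilon_{\text{sgd}}$ is the squared accuracy of \texttt{SRVR-NPG-SGD}.

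For (iii), I would split $\|w^{j+1}_t-w^{j+1}_{t,\star}\|\le\|w^{j+1}_t-F_\rho^{-1}(\theta^{j+1}_t)u^{j+1}_t\|+\frac{1}{\mu_F}\|u^{j+1}_t-\nabla J^H(\theta^{j+1}_t)\|+\frac{1}{\mu_F}\|\nabla J^H(\theta^{j+1}_t)-\nabla J(\theta^{j+1}_t)\|$. The first term is the error of \texttt{SRVR-NPG-SGD} on \eqref{equ: SRVR-NPG subproblem}; since that subproblem is $\mu_F$-strongly convex with bounded-variance stochastic gradients, running $\cO(1/((1-\gamma)^4\varepsilon^2))$ SGD steps makes its expected squared error $\cO((1-\gamma)^4\varepsilon^2)$, hence by Jensen its expectation is $\cO((1-\gamma)^2\varepsilon)$. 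The second term is controlled by the square root of the variance recursion above, so averaging over $j,t$ and using the choices of $N,B$ keeps it $\cO(\varepsilon)$; the third term is $\cO(\gamma^H)$, negligible for $H=\cO(\log\frac{1}{(1-\gamma)\varepsilon})$. Assembling (i)--(iii) with the stated parameters gives $J^\star-\frac{1}{Sm}\sum_{j,t}\EE[J(\theta^{j+1}_t)]\le\frac{\sqrt{\varepsilon_{\text{bias}}}}{1-\gamma}+\varepsilon$, and counting samples --- $N$ trajectories per epoch (line~3), $B$ per inner step (line~7), and $\cO(1/((1-\gamma)^4\varepsilon^2))$ cheap trajectories per call to \texttt{SRVR-NPG-SGD} --- gives $SN+SmB+(Sm+S)\cdot\cO(1/((1-\gamma)^4\varepsilon^2))=\cO\big(\frac{W+\sigma^2}{(1-\gamma)^{2.5}\varepsilon^{2.5}}+\frac{1}{(1-\gamma)^6\varepsilon^3}\big)$.

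The main obstacle I expect is closing the coupling in step (ii): the recursive estimator's variance at inner step $t$ depends on all past squared step norms $\|w^{j+1}_{i-1}\|^2$ in the epoch, which are precisely what the ascent argument seeks to bound; balancing this forces the epoch length $m$ to be tied to the minibatch size $B$ (the condition $C_g m\eta^2/B\le c_1/2$), and this trade-off is what produces the $\varepsilon^{-0.5}$ epoch length and $\varepsilon^{-1.5}$ minibatch in the theorem. A secondary technicality is that the whole argument runs on the truncated surrogate $J^H$ and is transferred to $J$ via the exponentially small horizon-truncation bias, and that the inexactness of \texttt{SRVR-NPG-SGD} must be propagated through both the ascent lemma and the direction-error bound with mutually consistent accuracy targets.
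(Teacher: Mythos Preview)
Your proposal is correct and mirrors the paper's proof closely: the paper likewise instantiates Proposition~\ref{prop: global convergence} with $K=Sm$, bounds the KL, stationary, and direction-error terms separately, and handles the latter two via the SARAH-type variance recursion for $u^{j+1}_t$ (Lemma~\ref{lem: nabla J and v}) combined with an ascent inequality derived from $L_J$-smoothness and $F_\rho\succcurlyeq\mu_F I$ (see \eqref{equ: V ascent SRVR-NPG} and \eqref{equ: SRVR-NPG final}), with the coupling resolved by the same $B\gtrsim C_\gamma m\eta/L_J$ condition you identify. The only cosmetic difference is that the paper's ascent lemma tracks $\|\theta^{j+1}_{t+1}-\theta^{j+1}_t\|^2$ and $\|\nabla J^H(\theta^{j+1}_t)\|^2$ as separate positive terms (then uses $\|w^{j+1}_t\|^2=\eta^{-2}\|\theta^{j+1}_{t+1}-\theta^{j+1}_t\|^2$) rather than isolating $\|w^{j+1}_t\|^2$ directly, and the paper works squared-then-Jensen throughout for the direction-error term rather than splitting in norms; both routes yield the same parameter choices and sample count.
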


\begin{remark}
Compared with SRVR-PG, our SRVR-NPG has a better dependence on $W$ and $\sigma^2$, which could be large in practice (especially $W$). The current sample complexity of SRVR-NPG is not better than our (improved) result of NPG since, in our analysis, the advantage of variance reduction is offset by the cost of solving the subproblems.
\end{remark}

\section{Numerical Experiments}
\label{sec: experiments}
In this section, we compare the numerical performances of stochastic PG, NPG, SRVR-PG, and SRVR-NPG. Specifically, we test on benchmark reinforcement learning environments Cartpole and Mountain Car. Our implementation is based on the implementation of SRVPG\footnotemark[1] and SRVR-PG\footnotemark[2], and can be found in the supplementary material.

\footnotetext[1]{\url{https://github.com/Dam930/rllab}}
\footnotetext[2]{\url{https://github.com/xgfelicia/SRVRPG}}

For both tasks, we apply a Gaussian policy of the form $\pi_{\theta}(a\given s) = \frac{1}{\sqrt{2\pi}}\exp\left(-\frac{(\mu_{\theta}(s) - a)^2}{2\sigma^2}\right)$
where the mean $\mu_{\theta}(s)$ is modeled by a neural network with Tanh as the activation function. 

For the Cartpole problem, we apply a neural network of size $32\times 1$ and a horizon of $H = 100$. In addition, each training algorithm uses $5000$ trajectories in total. For the Mountain Car problem, we apply a neural network of size $64\times 1$ and take $H = 1000$. $3000$ trajectories are allowed for each algorithm. The numerical performance comparison, as well as the settings of algorithm-specific parameters, can be found in Figures \ref{fig: cartpole} and \ref{fig: mountain car}. In App. \ref{app: implementation details}, we provide more implementation details.


\newlength{\halfwidth}
\setlength{\halfwidth}{\dimexpr 0.45\textwidth-\tabcolsep}

\begin{figure}[htbp]
\centering
\begin{tabular}{@{}p{\halfwidth}p{\halfwidth}@{}}
\centering
  \raisebox{-\height}{\includegraphics[width=1\linewidth]{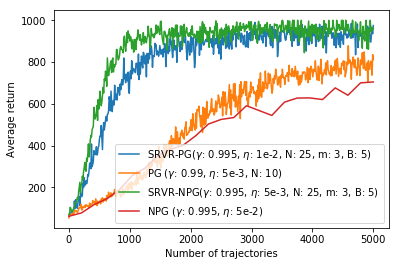}}&
    \raisebox{-\height}{\includegraphics[width=1\linewidth]{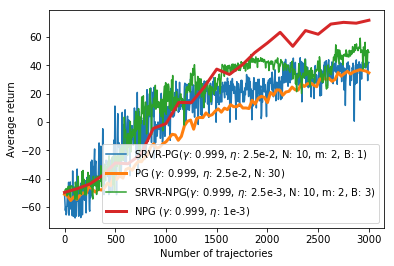}}\\
  \caption{Numerical Performances on Cartpole. For PG, SRVR-PG and SRVR-NPG, we report the undiscounted average return averaged over 10 runs. For NPG, we report the averaged return over 40 runs. Overall, SRVR-NPG has the best performance.}
\label{fig: cartpole}&
  \caption{Numerical Performances on Mountain Car. For PG, SRVR-PG and SRVR-NPG, we report the undiscounted average return averaged over 10 runs. For NPG, we report the averaged return over 40 runs. Overall, NPG has the best performance.}
\label{fig: mountain car}
\end{tabular}
\end{figure}
\vspace{-5pt}
\section{Concluding Remarks}
\label{sec:conclusions}
\vspace{-5pt}

In this work, we have introduced a framework for analyzing the global convergence of (natural) PG methods and their variance-reduced variants, under the assumption that the Fisher information matrix is positive definite. We have established the sample complexity for the global convergence of stochastic PG and its variance-reduced variant SRVR-PG, and improved the sample complexity of NPG. In addition, we have introduced SRVR-NPG, which incorporates variance-reduction into NPG, and enjoys both  global convergence guarantee and  an efficient sample complexity. Our improved analysis hinges on exploiting the advantages of previous analyses on (variance reduced) PG and NPG methods, which may be of independent interest, and can be used to design faster variance-reduced NPG methods in the future.  
  

\newpage
\onecolumn 

\section*{Broader Impact}
 

The results of this paper improves the performance of policy-gradient methods for reinforcement learning, as well as our understanding to the existing methods. Through reinforcement learning, our study will also benefit several research communities such as machine learning and robotics. We do not believe that the results in this work will cause any ethical issue, or put anyone at a disadvantage in our society.

\section*{Acknowledgements}
Yanli Liu and Wotao Yin were partially supported by the Office of Naval Research (ONR) Grant N000141712162. Yanli Liu was also supported by UCLA Dissertation Year Fellowship.  Kaiqing Zhang and Tamer Ba\c{s}ar were supported in part by the US Army Research Laboratory (ARL) Cooperative Agreement W911NF-17-2-0196, and in part by the Office of Naval Research (ONR) MURI Grant N00014-16-1-2710.

We would like to thank Rui Yuan for his suggestions to improve the proof of Lemma B.1 and Proposition G.1.
 


\bibliographystyle{plain}
\bibliography{main}


\newpage
\appendix
\onecolumn

~\\
\centerline{{\fontsize{13.5}{13.5}\selectfont \textbf{Supplementary Materials for ``An Improved Analysis of (Variance-}}}

\vspace{6pt}
 \centerline{\fontsize{13.5}{13.5}\selectfont \textbf{
Reduced) Policy Gradient and Natural Policy Gradient Methods''}}
 \vspace{10pt}

\section{Derivation of Previous Complexity Bounds}
\label{app: previous results}
In this section, we briefly explain how to derive the sample complexities bounds in the first line of Table \ref{table: summary of results}.

In the most recent version of \cite{agarwal2019optimality}, a complexity bound of $\mathcal{O}(\varepsilon^{-6})$ can be obtained the taking $N = \mathcal{O}(\varepsilon^{-4})$ and $N = \mathcal{O}(\varepsilon^{-2})$ in its Corollary 6.2. Note this complexity bound can be improved to $\mathcal{O}(\varepsilon^{-4})$ if a uniform upper bound for exact NPG update directions is applied. In this case, one can apply the convergence bound of SGD instead of Projected SGD for the NPG subproblem. In this paper, we establish an upper bound for $\|\nabla J(\theta)\|$ in Lemma \ref{lem: smoothness of objective}. Therefore the exact NPG update direction is also upper bounded thanks to Assumption \ref{assump: strong convexity}. 

For \cite{wang2019neural}, the sample complexity bound of $\mathcal{O}(T_{TD}\varepsilon^{-2})$ is achieved by its Theorem 4.13. To be specific, one takes $T = \mathcal{O}(\varepsilon^{-2})$ and $T_{\text{TD}} = \mathcal{O}(m)$ number of temporal difference updates at each iteration. Here, $m$ is width of the neural network. 

Note that in the proof of its Corollary 4.14, we can choose $m = \mathcal{O}(T^4)$ (instead of $\mathcal{O}(T^6)$) to have a convergence bound of the form $\mathcal{O}(\sqrt{\varepsilon_0}) + \varepsilon$ (instead of $\mathcal{O}(\varepsilon)$), which is similar to our $\frac{\sqrt{\varepsilon_{\text{bias}}}}{1-\gamma} + \varepsilon$ convergence bound.

For \cite{liu2019neural}, by the Corollary 4.10 therein, one needs to take $K = \mathcal{O}(\varepsilon^{-2})$ and $T = \mathcal{O}(K^3) = \mathcal{O}(\varepsilon^{-6})$, which results in a total sample complexity of $\mathcal{O}(\varepsilon^{-8})$.

For \cite{shani2019adaptive}, its Theorem 5 (item 1) gives a sample complexity of $\sum_{k=1}^N M_k = \mathcal{O}(\varepsilon^{-4})$, where we have applied $N = \mathcal{O}(\varepsilon^{-2})$ and $M_k = \mathcal{O}(\varepsilon^{-2})$.

\section{Helper Lemmas}\label{sec:help_lemma} 

In this section, we lay out several results that will be useful in later analyses and proofs.

\subsection{Properties of PG Estimator}

First, for any $H>1$,  we define the $H$-horizon truncated versions of the return $J(\theta)$ as
\#\label{equ:def_truncated_return} J^H(\theta):=\EE_{s_0\sim\rho}\Bigg(\sum_{t=0}^{H-1} \gamma^t r_t\Bigg),
\#
where the expectation is taken over the trajectories, starting from the state distribution $\rho$.
Now we establish several properties of the GPOMDP policy gradient estimators and the  return functions. 

\begin{lemma}
\label{lem: smoothness of objective}
	Recall the GPOMDP policy gradient estimate given in \eqref{equ:GPOMDP_surro}. The following properties hold:
	\begin{itemize}
		\item If the infinite-sum in  \eqref{equ:GPOMDP_surro} is well defined, $g(\tau_i\given\theta)$ in \eqref{equ:GPOMDP_surro} is an unbiased estimate of the PG  $\nabla J(\theta)$. Similarly, the truncated GPOMDP estimate $g(\tau_i^H\given\theta)$ given by \eqref{equ: truncated GPOMDP estimator} is an unbiased estimate  of the PG $\nabla J^H(\theta)$. 
		\item $J(\theta), J^H(\theta)$ are $L_J$-smooth, where $L_J = \frac{MR}{(1-\gamma)^2} + \frac{2G^2 R}{(1-\gamma)^3}$. Furthermore, we have $\max\big\{\|\nabla {J}(\theta)\|,\|\nabla {J}^H(\theta)\|\big\}\leq \frac{ GR}{(1-\gamma)^2}$.
		\item We also have $\|\nabla J^H(\theta)-\nabla  J(\theta)\|\leq GR \left(\frac{H+1}{1-\gamma}+\frac{\gamma}{(1-\gamma)^2}\right)\gamma^H$.
	\end{itemize}
\end{lemma}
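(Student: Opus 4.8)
The plan is to establish each of the three bullet points using the structure of the GPOMDP estimator and elementary estimates on the score function under Assumption \ref{assump: conditions on score function}.

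First, for the unbiasedness claim, I would follow the standard policy gradient theorem argument: writing $J(\theta) = \EE_{\tau}\left[\sum_{h=0}^\infty \gamma^h r(s_h,a_h)\right]$ and differentiating under the expectation, one uses the log-derivative trick $\nabla_\theta p^H_\rho(\tau\given\theta) = p^H_\rho(\tau\given\theta)\nabla_\theta \log p^H_\rho(\tau\given\theta)$ together with the factorization $\nabla_\theta \log p^H_\rho(\tau\given\theta) = \sum_{t} \nabla_\theta \log \pi_\theta(a_t\given s_t)$ (the $\rho$ and $\PP$ factors drop out). Collecting terms multiplying each $\gamma^h r(s_h,a_h)$ and discarding the future score terms (whose conditional expectation vanishes by the usual causality/baseline argument) yields exactly \eqref{equ:GPOMDP_surro}. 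Justifying the interchange of $\nabla_\theta$ and the infinite sum/expectation needs dominated convergence, which is where boundedness of the reward by $R$ and of the score by $G$ (Assumption \ref{assump: conditions on score function}, item 1) enter. The truncated statement for $g(\tau_i^H\given\theta)$ versus $\nabla J^H(\theta)$ is the same computation with a finite sum, so no convergence subtlety arises.

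Second, for the smoothness and gradient-norm bounds, I would work directly with the truncated estimator and then pass to the limit. For the norm bound: $\|\nabla J^H(\theta)\| = \|\EE[g(\tau^H\given\theta)]\| \le \EE\|g(\tau^H\given\theta)\|$, and term-by-term $\left\|\left(\sum_{t=0}^h \nabla_\theta\log\pi_\theta(a_t\given s_t)\right)\gamma^h r(s_h,a_h)\right\| \le (h+1)G R\gamma^h$; summing $\sum_{h=0}^\infty (h+1)\gamma^h = 1/(1-\gamma)^2$ gives $\|\nabla J^H(\theta)\| \le GR/(1-\gamma)^2$, uniformly in $H$, hence also for $\nabla J(\theta)$. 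For $L_J$-smoothness I would bound $\|\nabla J^H(\theta_1) - \nabla J^H(\theta_2)\|$ by differencing the expectations $\EE_{p^H_\rho(\cdot\given\theta_1)}[g(\cdot\given\theta_1)] - \EE_{p^H_\rho(\cdot\given\theta_2)}[g(\cdot\given\theta_2)]$; this splits into (a) the change in the integrand, controlled by the Lipschitz bound $M$ on the score (item 2 of Assumption \ref{assump: conditions on score function}) times the same $\sum_h (h+1)\gamma^h$ factor, giving the $MR/(1-\gamma)^2$ piece, and (b) the change in the trajectory measure, where $\|\nabla_\theta \log p^H_\rho(\tau\given\theta)\| \le (H)G$ combined with the magnitude bound on $g$ produces, after the geometric sums, the $2G^2R/(1-\gamma)^3$ piece. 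An alternative cleaner route is to compute/bound $\|\nabla^2 J^H(\theta)\|$ directly using $\nabla^2_\theta \log\pi_\theta$ and $\nabla_\theta\log\pi_\theta (\nabla_\theta\log\pi_\theta)^\top$ terms.

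Third, for the truncation error $\|\nabla J^H(\theta) - \nabla J(\theta)\|$, I would write the difference as the tail $\sum_{h=H}^\infty \left(\sum_{t=0}^h \nabla_\theta\log\pi_\theta(a_t\given s_t)\right)\gamma^h r(s_h,a_h)$ in expectation (being careful that the trajectory distributions for the two estimators agree on the first $H$ steps so only the tail survives), take norms inside, use $(h+1)GR\gamma^h$ per term, and evaluate $\sum_{h=H}^\infty (h+1)\gamma^h = \left(\frac{H+1}{1-\gamma} + \frac{\gamma}{(1-\gamma)^2}\right)\gamma^H$ by splitting $h+1 = (h-H) + (H+1)$ and summing the resulting shifted geometric and arithmetico-geometric series.

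The main obstacle I anticipate is bookkeeping in the $L_J$ bound — correctly accounting for the score-Lipschitz contribution versus the measure-change contribution and chasing the constants through the geometric sums to land exactly on $\frac{MR}{(1-\gamma)^2} + \frac{2G^2R}{(1-\gamma)^3}$ — along with rigorously justifying the dominated-convergence interchange for the genuinely infinite-horizon (untruncated) estimator; everything else is routine term-wise estimation.
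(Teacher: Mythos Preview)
Your proposal is correct and matches the paper's approach: the paper simply cites \cite{baxter2001infinite} for the unbiasedness and Proposition~4.2 of \cite{xu2019samplearxiv} for the smoothness and gradient-norm bounds, and for the truncation error it carries out exactly the tail-sum calculation you describe, bounding $\|g(\tau_i^H\given\theta)-g(\tau_i\given\theta)\|$ term-by-term by $GR(h+1)\gamma^h$ and summing $\sum_{h=H}^\infty(h+1)\gamma^h$. Your write-up is in fact more detailed than the paper's, which defers the first two bullets to the literature.
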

\begin{proof}
	 the unbiasedness of $g(\tau_i\given \theta)$ follows directly from \cite{baxter2001infinite}.  A similar decomposition can also be done for its truncated version $g(\tau_i^H\given \theta)$.
	
	
	The second argument follows directly from the Proposition 4.2 in \cite{xu2019samplearxiv}.
	
	For the third argument, one can calculate that
	\begin{align*}
	\|g(\tau_i^H\given \theta)- g(\tau_i\given \theta)\| & = \left\Vert\sum_{h=H}^{\infty} \left(\sum_{t=0}^{h}\nabla_{\theta}\log \pi_{\theta}(a^i_t\given s^i_t)\right)\left(\gamma^h r(s^i_h, a^i_h)\right)\right\Vert\\
	& \leq \|GR \sum_{h=H}^{\infty} (h+1) \gamma^h\|\\
	& = GR \left(\frac{H+1}{1-\gamma}+\frac{\gamma}{(1-\gamma)^2}\right)\gamma^H
	\end{align*}
	This rest of the proof follows from the unbiasedness of $g(\tau_i|\theta)$ and $g(\tau_i^H|\theta)$ for estimating $\nabla J(\theta)$ and $\nabla J^H(\theta)$, respectively. 
\end{proof}

\subsection{On the Positive Definiteness of $F_{\rho}(\theta)$}\label{sec:append_justify_Fisher}

Now we remark that the positive definiteness on the Fisher information matrix induced by $\pi_\theta$, as stated in Assumption \ref{assump: strong convexity}, is not  restricted. Assumption \ref{assump: strong convexity}  essentially states that $F(\theta)$ behaves well as a preconditioner in the NPG update \eqref{equ: NPG update_2}. 
This is a common (and minimal)  requirement for the convergence of preconditioned algorithms in both convex and nonconvex settings in the optimization realm \cite{byrd2016stochastic,moritz2016linearly, gower2016stochastic, wang2017stochastic, liu2019acceleration}. 

In the RL realm, one common example of policy parametrizations that can satisfy  this assumption is the Gaussian policy \cite{williams1992simple,duan2016benchmarking,papini2018stochastic,xu2019sample}, where $\pi_\theta(\cdot\given s)=\cN(\mu_\theta(s),\Sigma)$ with mean parametrized linearly as $\mu_\theta(s)=\phi(s)^\top \theta$, where $\phi(s)$ denotes some feature matrix of proper dimensions, $\theta$ is the coefficient vector, and $\Sigma\succ 0$ is some fixed covariance matrix. 
Suppose the action $a\in\cA\subseteq \RR^A$ and recall $\theta\in\RR^d$. Thus, $\phi(s)\in\RR^{d\times A}$.  
In this case, the Fisher information at each $s$ becomes $\phi(s)\Sigma^{-1}\phi(s)^{\top}$, independent of $\theta$, and is positive definite if $\phi(s)$ is full-row-rank. For the case $d<A$, which is usually the case as a lower-dimensional (than $a$) parameter $\theta$ is used, this can be achieved by designing the rows of $\phi(s)$ to be linearly independent,  a common requirement for linear function approximation settings  \cite{tsitsiklis1997analysis,melo2008analysis,sutton2009fast}.

For $\mu_\theta(s)$ being nonlinear functions of $\theta$, e.g., neural networks, the positive definiteness can still be satisfied, if the Jacobian of $\mu_\theta(s)$ at all $\theta$ uniformly satisfies the aforementioned conditions of $\phi(s)$ (the Jacobian in the linear case). 
In addition, beyond Gaussian policies, with the same conditions mentioned above on the feature $\phi(s)$ or the Jacobian of  $\mu_\theta(s)$, Assumption \ref{assump: strong convexity} also holds more generally for any full-rank exponential family parametrization with mean parametrized by $\mu_\theta(s)$, as the Fisher information matrix, in this case, is also positive definite, in replace of the covariance matrix $\phi(s)\Sigma^{-1}\phi(s)$ in the Gaussian case  \cite{dasgupta2011exponential}. 

Indeed, the Fisher information matrix is positive definite for any \emph{regular} statistical model \cite{kullback1997information}. In the pioneering NPG work \cite{kakade2002natural}, $F(\theta)$ is directly assumed to be positive definite. So is in the follow-up works on  natural actor-critic algorithms \cite{peters2008natural,bhatnagar2009natural}. 
In fact, this way, $F_{\rho}(\theta)$ will define a valid Riemannian metric on the  parameter space, which has been used for interpreting the desired convergence properties of natural gradient methods  \cite{amari1998natural,martens2014new}. In sum, the positive definiteness on the Fisher preconditioning matrix is common and not restrictive.

%
%
%
%

\section{SGD and Sampling Procedures} 
\label{app: sampling}
\subsection{SGD for Solving the Subproblems of NPG and SRVR-NPG}
Similar to the Algorithm 1 of \cite{agarwal2019optimality}, we also apply the averaged SGD algorithm as in \cite{bach2013non} to solve the subproblems of NPG and SRVR-NPG. 
\begin{procedure}[H]    
\caption{NPG-SGD}
\label{alg: SGD for NPG subproblem}
    \textbf{Input:} number of iterations $T$, stepsize $\alpha>0$, objective function $l(w)$, initialization $w_0=0$.
    \begin{algorithmic}[1]
        \For{$t\leftarrow 0,...,T-1$}{}
        \State{$w_{t+1} = w_t - \alpha \tilde{\nabla}l(w_t)$;}
        \Comment{$l(w)$ is defined in \eqref{equ: subproblem objective of NPG}, $\tilde{\nabla}l(w_t)$ is defined in \eqref{equ: subproblem stochastic gradient of NPG}.}
        \EndFor
        \State\Return{$w_{\text{out}} = \frac{1}{T}\sum_{t=1}^{T}w_t$.}
    \end{algorithmic}
\end{procedure}

\begin{procedure}[H]    
\caption{SRVR-NPG-SGD}
\label{alg: SGD for SRVR-NPG subproblem}
    \textbf{Input:} number of iterations $T$, stepsize $\alpha>0$, objective function $l(w)$, initialization $w_0=0$.
    \begin{algorithmic}[1]
        \For{$t\leftarrow 0,...,T-1$}{}
        \State{$w_{t+1} = w_t - \alpha \tilde{\nabla}l(w_t)$;}
        \Comment{$l(w)$ is defined in \eqref{equ: subproblem objective of SRVR-NPG}, $\tilde{\nabla}l(w_t)$ is defined in \eqref{equ: subproblem stochastic gradient of SRVR-NPG}.}
        \EndFor
        \State\Return{$w_{\text{out}} = \frac{1}{T}\sum_{t=1}^{T}w_t$.}
    \end{algorithmic}
\end{procedure}



For NPG, its subproblem \eqref{equ: NPG update_2} is of the form
\begin{align*}
w^k&\in \argmin_{w\in\Rd} L_{\nu^{\pi_{\theta^k}}}(w; \theta^k) =\EE_{(s,a)\sim\nu^{\pi_{\theta^k}}}\left[\big(A^{\pi_{\theta^k}}(s,a)-(1-\gamma)w^\top\nabla_{\theta}\log\pi_{\theta^k}(a\given s)\big)^2\right],
\end{align*}
where 
\[
\nu^{\pi_{\theta^k}}(s,a) = (1-\gamma)\EE_{(s_0,a_0)\sim\rho}\sum_{t=0}^\infty\gamma^t \PP(s_t=s,a_t=a\given s_0,a_0,\pi_{\theta^k}).
\]
In Procedure \ref{alg: SGD for NPG subproblem}, let us set 
\begin{align}
\label{equ: subproblem objective of NPG}
l(w) = \frac{1}{2(1-\gamma)^2}L_{\nu^{\pi_{\theta^k}}}(w; \theta^k). 
\end{align}
Then, we can obtain a stochastic gradient at $w_t$ by
\begin{align}
\label{equ: subproblem stochastic gradient of NPG}
\begin{split}
\tilde{\nabla} l(w_t) &=\left((w_{t})^T\nabla_\theta \log\pi_{\theta^k}(a|s)-\frac{1}{1-\gamma}\hat{A}^{\pi_{\theta^k}}(s,a)\right)\nabla_{\theta}\log\pi_{\theta^k}(a|s)\\
\end{split}
\end{align}
where $(s,a)\sim \nu^{\pi_{\theta^k}}$, and $\hat{A}^{\pi_{\theta^k}}(s,a)$ is an unbiased estimate of $A^{\pi_{\theta^k}}(s,a)$. We will describe how to obtain $(s,a)\sim \nu^{\pi_{\theta^k}}$ and $\hat{A}^{\pi_{\theta^k}}(s,a)$ in App. \ref{app: sampling details}.

Following Corollary 6.10 of \cite{agarwal2019optimality}, we can verify that $\tilde{\nabla}l(w_t)$ is an unbiased estimate of $\nabla {l}(w_t)$.

For SRVR-NPG, its subproblem \eqref{equ: SRVR-NPG subproblem} is of the form 
\begin{align*}
w^{j+1}_{t}\approx \argmin_{w}\left\{\underset{(s,a)\sim \nu^{\pi_{\theta^{j+1}_t}}}{\mathbb{E}}\left[\big( w^T\nabla_{\theta}\log \pi_{\theta^{j+1}_t}(a\given s)\big)^2\right]-2\langle  w, u^{j+1}_t\rangle\right\},
\end{align*}
where 
\[
\nu^{\pi_{\theta^{j+1}_t}}(s,a) = (1-\gamma)\EE_{(s_0,a_0)\sim\rho}\sum_{t=0}^\infty\gamma^t \PP(s_t=s,a_t=a\given s_0,a_0,\pi_{\theta^{j+1}_t}).
\]
In Procedure \ref{alg: SGD for SRVR-NPG subproblem}, let us set
\begin{align}
\label{equ: subproblem objective of SRVR-NPG}
l(w) = \frac{1}{2}\left(\underset{(s,a)\sim \nu^{\pi_{\theta^{j+1}_t}}}{\mathbb{E}}\left[\big(w^T\nabla_{\theta}\log \pi_{\theta^{j+1}_t}(a\given s)\big)^2\right]-2\langle  w, u^{j+1}_t\rangle\right).
\end{align}
Then, a stochastic gradient $\tilde{\nabla} l (w_t)$ is given by
\begin{align}
\label{equ: subproblem stochastic gradient of SRVR-NPG}
\tilde{\nabla} l(w_t) =\left((w_{t})^T\nabla_\theta \log\pi_{\theta^{j+1}_t}(a|s)\right)\nabla_{\theta}\log\pi_{\theta^{j+1}_t}(a|s)-u^{j+1}_t.
\end{align}
where $(s,a)\sim \nu^{\pi_{\theta^{j+1}_t}}$ is obtained in a similar way as above. It is straightforward to verify that $\tilde{\nabla}l(w_t)$ is an unbiased estimate of $\nabla {l}(w_t)$. 
\subsection{Sampling Procedures}
\label{app: sampling details}
Sampling $(s,a)\sim \nu^{\pi_{\theta^k}}$ and  Obtaining $\hat{A}^{\pi_{\theta^k}}(s,a)$ can be done in a standard way, for example, by apply Algorithm 3 of \cite{agarwal2019optimality}. Both of them needs to sample $\frac{1}{1-\gamma}$ state-action pairs in expectation.
%
%
%
%



\section{SRVR-PG Algorithm}
\label{app: SRVR-PG algorithm}

The Stochastic Recursive Variance-Reduced PG (SRVR-PG) algorithm is introduced in \cite{xu2019sample}, where a recursively updated semi-stochastic gradient $u^{j+1}_t$ is applied as an update direction. 
\begin{algorithm}[H]    
\caption{Stochastic Recursive Variance Reduced Policy Gradient (SRVR-PG)}
\label{alg: SRVR-PG}
    \textbf{Input:} number of epochs $S$, epoch size $m$, stepsize $\eta$, batch size $N$, minibatch size $B$, truncation horizon $H$, initial parameter $\theta^0_m=\theta_0\in\Rd.$
    \begin{algorithmic}[1]
        \For{$j\leftarrow 0,...,S-1$}{}
        \State{$\theta^{j+1}_0=\theta^j_m$;}
        \State{Sample $N$ trajectories $\{\tau^H_i\}_{i=1}^N$ from $p^H_{\rho}(\cdot|\theta^{j+1}_0)$;} 
        \State{$u^{j+1}_0=\frac{1}{N}\sum_{i=1}^N g(\tau^H_i|\theta^{j+1}_0)$;}
        \State{$\theta^{j+1}_1 = \theta^{j+1}_0 -\eta \nu^{j+1}_0$;}
        \For{$t \leftarrow 1,...,m-1$}{}
        \State{Sample $B$ trajectories $\{\tau^H_j\}_{j=1}^{B}$ from $p^H_{\rho}(\cdot|\theta^{j+1}_t)$;} 
        \State{$u^{j+1}_t=u^{j+1}_{t-1}+\frac{1}{B}\sum_{j=1}^B \left(g(\tau^H_j|\theta^{j+1}_t)-g_w(\tau^H_j|\theta^{j+1}_{t-1})\right)$;}
        \State{$\theta^{j+1}_{t+1} = \theta^{j+1}_t - \eta u^{j+1}_t$;}
        \EndFor
        \EndFor
        \State\Return{$\theta_{\text{out}}$ chosen uniformly from $\{\theta\}_{j=1,...,S; t=0,...,m-1.}$}
    \end{algorithmic}
\end{algorithm}
Here, the gradient estimators $g$ and $g_w$ are defined in \eqref{equ: truncated GPOMDP estimator} and \eqref{equ: weighted GPOMDP}, respectively.

\section{Stationary Convergence}
\label{sec: stationary convergence}
%
%

In this section, we proceed to establish the stationary convergence of stochastic PG, NPG, SRVR-PG, and SRVR-NPG from an optimization perspective. 

The stationary convergence of stochastic PG follows from the analysis of SGD. For SRVR-PG, we adapt its analysis in \cite{xu2017stochastic}. 

For NPG and SRVR-NPG, the Fisher information matrix $F(\theta)$ is applied as a preconditioner on top of PG and SRVR-PG, respectively. Regarding $F(\theta)$, we know from Assumptions \ref{assump: strong convexity} and \ref{assump: conditions on score function} that
\[
\mu_F I_d \preccurlyeq  F(\theta) \preccurlyeq G^2 I_d\,\,\,\text{for any}\,\,\,\theta \in \Rd.
\]
Since $\mu_F>0$, we know that $F(\theta)$ defines a nice metric around $\theta$. Consequently, with the analysis of gradient methods in nonconvex optimization, one can show that NPG (SRVR-NPG) has a similar iteration complexity compared with PG (SRVR-PG), although at each iteration, a subproblem needs to be solved in order to obtain an approximate preconditioned update direction.

We next present the stationary convergence results, and prove them in the subsequent sections. These results are established for $J^H(\theta)$ or $J(\theta)$, and we will apply the intermediate results in their proof to establish the global convergence on $J(\theta)$ (up to function approximation errors due to policy parametrizations).

\begin{theorem}
\label{thm: PG stationary convergence}
In the stochastic PG update \eqref{equ: PG update}, by choosing $\eta=\frac{1}{4 L_J}$, $K=\frac{32L_J(J^{H,\star}-J^H(\theta_0))}{\varepsilon},$ and $N = \frac{6\sigma^2}{\varepsilon}$, we have
    \begin{align*}
\frac{1}{K}\sum_{k=0}^{K-1} \mathbb{E}[\|\nabla J^H(\theta^{k})\|^2]\leq \varepsilon.
\end{align*}
In total, stochastic PG samples $\mathcal{O}\left(\frac{\sigma^2}{(1-\gamma)^2\varepsilon^2}\right)$ trajectories.
\end{theorem}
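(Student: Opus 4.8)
The plan is to treat the stochastic PG update \eqref{equ: PG update} as plain SGD for maximizing the $L_J$-smooth function $J^H(\theta)$, using the fact (Lemma \ref{lem: smoothness of objective}) that the truncated GPOMDP estimator $g(\tau_i^H\mid\theta)$ is an unbiased estimate of $\nabla J^H(\theta)$ with variance at most $\sigma^2$ (Assumption \ref{assump: variance}). First I would apply the smoothness (descent/ascent) inequality to consecutive iterates: since $\theta^{k+1}=\theta^k+\eta\,\hat g^k$ with $\hat g^k=\frac1N\sum_{i=1}^N g(\tau_i^H\mid\theta^k)$, the $L_J$-smoothness gives
\[
J^H(\theta^{k+1})\;\ge\; J^H(\theta^k)+\eta\langle\nabla J^H(\theta^k),\hat g^k\rangle-\frac{L_J\eta^2}{2}\|\hat g^k\|^2.
\]
Taking conditional expectation given $\theta^k$ and using $\mathbb{E}[\hat g^k]=\nabla J^H(\theta^k)$ and $\mathbb{E}\|\hat g^k\|^2\le\|\nabla J^H(\theta^k)\|^2+\sigma^2/N$ (the minibatch of $N$ i.i.d.\ trajectories divides the variance by $N$), I get
\[
\mathbb{E}[J^H(\theta^{k+1})]\;\ge\;\mathbb{E}[J^H(\theta^k)]+\Big(\eta-\frac{L_J\eta^2}{2}\Big)\mathbb{E}\|\nabla J^H(\theta^k)\|^2-\frac{L_J\eta^2\sigma^2}{2N}.
\]

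Next I would plug in $\eta=\frac{1}{4L_J}$, so that $\eta-\frac{L_J\eta^2}{2}=\frac{1}{4L_J}-\frac{1}{32L_J}=\frac{7}{32L_J}\ge\frac{1}{8L_J}$, rearrange to isolate $\mathbb{E}\|\nabla J^H(\theta^k)\|^2$, and telescope over $k=0,\dots,K-1$. This yields
\[
\frac{1}{K}\sum_{k=0}^{K-1}\mathbb{E}\|\nabla J^H(\theta^k)\|^2\;\le\;\frac{8L_J\big(J^{H,\star}-J^H(\theta_0)\big)}{K}+\frac{\sigma^2}{2N}\cdot 8L_J\eta\;=\;\frac{8L_J\big(J^{H,\star}-J^H(\theta_0)\big)}{K}+\frac{\sigma^2}{N},
\]
where $J^{H,\star}\ge\mathbb{E}[J^H(\theta^K)]$ bounds the telescoped terminal value. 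Choosing $K=\frac{32L_J(J^{H,\star}-J^H(\theta_0))}{\varepsilon}$ makes the first term at most $\varepsilon/4\le\varepsilon/2$ and $N=\frac{6\sigma^2}{\varepsilon}$ makes the second term at most $\varepsilon/6\le\varepsilon/2$ (the constants $32$ and $6$ leave slack so the two pieces sum to at most $\varepsilon$); hence $\frac1K\sum_k\mathbb{E}\|\nabla J^H(\theta^k)\|^2\le\varepsilon$. Finally, for the sample complexity, note $K=\mathcal{O}\!\big(\frac{L_J}{\varepsilon}\big)=\mathcal{O}\!\big(\frac{1}{(1-\gamma)^2\varepsilon}\big)$ (absorbing $J^{H,\star}-J^H(\theta_0)=\mathcal{O}(1/(1-\gamma))$ and $L_J=\mathcal{O}(1/(1-\gamma)^3)$ as noted after Theorem \ref{thm: PG global convergence}; I would keep only the dependence that the statement tracks), and each iteration uses $N=\mathcal{O}(\sigma^2/\varepsilon)$ trajectories, so the total is $K\cdot N=\mathcal{O}\!\big(\frac{\sigma^2}{(1-\gamma)^2\varepsilon^2}\big)$.

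The only genuinely delicate point is the variance bookkeeping: Assumption \ref{assump: variance} bounds the variance of a single-trajectory estimator, and I must be careful that averaging $N$ conditionally i.i.d.\ trajectories (drawn from $p^H_\rho(\cdot\mid\theta^k)$ given $\theta^k$) gives $\mathrm{Var}(\hat g^k\mid\theta^k)\le\sigma^2/N$, and that this is exactly the quantity entering $\mathbb{E}[\|\hat g^k\|^2\mid\theta^k]=\|\nabla J^H(\theta^k)\|^2+\mathrm{Var}(\hat g^k\mid\theta^k)$. Everything else is the standard nonconvex SGD telescoping argument; the constants in the theorem ($32$, $6$) are simply chosen to close the two error budgets, and the horizon $H$ plays no role here because the whole argument is carried out for the surrogate objective $J^H$ rather than $J$ (the $H$-dependence is deferred to the global-convergence theorem, where $\|\nabla J^H-\nabla J\|$ is controlled via the third bullet of Lemma \ref{lem: smoothness of objective}).
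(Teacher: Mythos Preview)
Your proposal is correct and follows essentially the same standard nonconvex-SGD argument as the paper's proof in App.~\ref{app: PG stationary} (smoothness inequality for $J^H$, variance bound $\sigma^2/N$, telescoping, then plugging in $\eta,K,N$). The only minor difference is that you take conditional expectation immediately so the cross term $\langle\nabla J^H(\theta^k),\hat g^k-\nabla J^H(\theta^k)\rangle$ vanishes by unbiasedness, whereas the paper bounds it pointwise via Young's inequality and obtains the slightly weaker coefficient $\tfrac{\eta}{2}-L_J\eta^2$ in place of your $\eta-\tfrac{L_J\eta^2}{2}$; your route is therefore a bit sharper, and the prescribed constants $K=\tfrac{32L_J(J^{H,\star}-J^H(\theta_0))}{\varepsilon}$, $N=\tfrac{6\sigma^2}{\varepsilon}$ close with extra slack.
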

\begin{theorem}
\label{thm: NPG stationary convergence}
In the NPG update \eqref{equ: NPG update_2}, let us apply $\mathcal{O}\left(\frac{1}{(1-\gamma)^4\varepsilon}\right)$ iterations of SGD as in Procedure \ref{alg: SGD for NPG subproblem} to obtain an update direction $w^k$. 
In addition, let us take $\eta = \frac{\mu_F^2}{4G^2L_J}$ and $K=\frac{32L_JG^4(J^{\star}- J(\theta_0)}{\mu_F^2\varepsilon}$. Then, we have
\begin{align*}
\frac{1}{K}\sum_{k=0}^{K-1} \mathbb{E}[\|\nabla J(\theta^{k})\|^2]
&\leq \varepsilon.
\end{align*}
In total, NPG samples $\mathcal{O}\left(\frac{1}{(1-\gamma)^6\varepsilon^2}\right)$
trajectories.

\end{theorem}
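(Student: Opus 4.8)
The plan is to treat the NPG update $\theta^{k+1}=\theta^k+\eta w^k$ in \eqref{equ: NPG update_2} as an \emph{inexact} preconditioned gradient ascent on $J$, so that the argument parallels the stationary analysis of stochastic PG with the Fisher matrix playing the role of the preconditioner. First I would apply the $L_J$-smoothness of $J$ from Lemma \ref{lem: smoothness of objective} to get
\[
J(\theta^{k+1}) \;\ge\; J(\theta^k) + \eta\,\langle \nabla J(\theta^k),\, w^k\rangle \;-\; \tfrac{L_J\eta^2}{2}\,\|w^k\|^2 .
\]
Writing $w^k = w^k_\star + \delta^k$ with $w^k_\star = F_\rho^{-1}(\theta^k)\nabla J(\theta^k)$ the exact NPG direction and $\delta^k := w^k - w^k_\star$, I would use the two-sided spectral bound $\mu_F I_d \preccurlyeq F_\rho(\theta) \preccurlyeq G^2 I_d$ — the lower bound is Assumption \ref{assump: strong convexity}, the upper bound follows from $\|\nabla_\theta\log\pi_\theta\|\le G$ in Assumption \ref{assump: conditions on score function} — to deduce $\langle\nabla J(\theta^k),w^k_\star\rangle \ge \tfrac{1}{G^2}\|\nabla J(\theta^k)\|^2$ and $\|w^k_\star\|\le \tfrac{1}{\mu_F}\|\nabla J(\theta^k)\|$. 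Combining this with Young's inequality on the residual inner product $\langle\nabla J(\theta^k),\delta^k\rangle$ and $\|w^k\|^2\le 2\|w^k_\star\|^2+2\|\delta^k\|^2$, one checks that the choice $\eta=\frac{\mu_F^2}{4G^2L_J}$ is exactly calibrated so that the coefficient of $\|\nabla J(\theta^k)\|^2$ survives as a positive multiple of $\eta/G^2$, at the price of an error term proportional to $G^2\eta\,\|\delta^k\|^2$.

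The crux is then to control $\mathbb{E}\|\delta^k\|^2$, i.e.\ the quality of the inexact direction produced by the averaged-SGD routine Procedure \ref{alg: SGD for NPG subproblem}. Here I would use three facts about the subproblem $l(w)=\frac{1}{2(1-\gamma)^2}L_{\nu^{\pi_{\theta^k}}}(w;\theta^k)$: (i) it is a quadratic with Hessian $F_\rho(\theta^k)$, hence $\mu_F$-strongly convex and $G^2$-smooth; (ii) by the policy gradient theorem its gradient is $\nabla_w l(w)=F_\rho(\theta^k)w-\nabla J(\theta^k)$, so its unique minimizer is precisely $w^k_\star$; (iii) the stochastic gradient $\tilde\nabla l(w_t)$ of \eqref{equ: subproblem stochastic gradient of NPG} is unbiased, with second moment bounded by an $\mathcal{O}((1-\gamma)^{-4})$ quantity, using $\|\nabla_\theta\log\pi_\theta\|\le G$, the boundedness $|\hat A^{\pi_{\theta}}| = \mathcal{O}((1-\gamma)^{-1})$ of the advantage estimator, and $\|w^k_\star\| = \mathcal{O}((1-\gamma)^{-1})$ (from $\|\nabla J\|\le GR/(1-\gamma)^2$ in Lemma \ref{lem: smoothness of objective} and Assumption \ref{assump: strong convexity}). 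The standard rate for averaged SGD on a strongly convex objective \cite{bach2013non} then gives $\mathbb{E}\|w^k-w^k_\star\|^2 = \mathcal{O}\!\big(\tfrac{1}{(1-\gamma)^4\mu_F^2\,T}\big)$, so running $T=\mathcal{O}\!\big(\tfrac{1}{(1-\gamma)^4\varepsilon}\big)$ inner iterations makes this smaller than any prescribed constant times $\varepsilon$. A minor point: $\theta^k$ and the SGD noise used at outer step $k$ are independent, so the bound is valid conditionally on $\theta^k$ and expectations may be taken freely.

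The conclusion follows by telescoping the descent inequality over $k=0,\dots,K-1$: taking expectations gives $\frac{\eta}{2G^2}\sum_k \mathbb{E}\|\nabla J(\theta^k)\|^2 \le J^\star - J(\theta_0) + \tfrac{5G^2\eta}{4}\sum_k\mathbb{E}\|\delta^k\|^2$, and dividing by $K$ yields a bound of the form $\frac{2G^2(J^\star-J(\theta_0))}{\eta K} + \tfrac{5G^4}{2}\cdot\frac1K\sum_k\mathbb{E}\|\delta^k\|^2$. Substituting $\eta=\frac{\mu_F^2}{4G^2L_J}$ turns the first term into $\frac{8G^4L_J(J^\star-J(\theta_0))}{\mu_F^2 K}$, which the stated $K=\frac{32L_JG^4(J^\star-J(\theta_0))}{\mu_F^2\varepsilon}$ makes at most $\varepsilon/4$; the second term is at most $3\varepsilon/4$ once $T$ is taken as above (the $\mathcal{O}$ in $T$ absorbing the remaining $G^4/\mu_F^2$ factor), giving the claimed $\frac1K\sum_k\mathbb{E}\|\nabla J(\theta^k)\|^2\le\varepsilon$. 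The total trajectory count then follows by multiplying $K$ by the per-iteration sampling cost of Procedure \ref{alg: SGD for NPG subproblem} — $T$ SGD steps, each requiring $\mathcal{O}(1/(1-\gamma))$ state-action pairs (equivalently $\mathcal{O}(1)$ trajectory) to draw $(s,a)\sim\nu^{\pi_{\theta^k}}$ and an unbiased $\hat A^{\pi_{\theta^k}}$ as in App.~\ref{app: sampling details} — and substituting $L_J=\mathcal{O}((1-\gamma)^{-3})$ and $J^\star-J(\theta_0)=\mathcal{O}((1-\gamma)^{-1})$. I expect the main obstacle to be the second paragraph: bounding $\mathbb{E}\|\delta^k\|^2$ rigorously requires that the SGD iterates on the subproblem stay in a bounded region so $\tilde\nabla l$ has a uniformly bounded second moment (this is why \cite{agarwal2019optimality} uses projected SGD, and one must either import that or argue the unprojected iterates stay in a ball of radius $\mathcal{O}(\|w^k_\star\|)$), together with careful tracking of how the $(1-\gamma)$ factors enter through $\hat A$ and $w^k_\star$ — this dependence is exactly what dictates the $\mathcal{O}((1-\gamma)^{-4}\varepsilon^{-1})$ inner budget and, in turn, the final sample complexity.
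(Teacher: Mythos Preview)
Your proposal is correct and follows essentially the same approach as the paper's proof in App.~\ref{app: NPG stationary}: both start from the $L_J$-smoothness of $J$, decompose the update direction as $w^k = w^k_\star + \delta^k$ with $w^k_\star = F_\rho^{-1}(\theta^k)\nabla J(\theta^k)$, exploit the two-sided spectral bound $\mu_F I \preccurlyeq F_\rho(\theta^k)\preccurlyeq G^2 I$ to extract a positive $\tfrac{\eta}{G^2}\|\nabla J(\theta^k)\|^2$ term, and then telescope after controlling $\mathbb{E}\|\delta^k\|^2$ via the Bach--Moulines averaged-SGD rate (this is the paper's Proposition~\ref{prop: SGD convergence NPG}). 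The only cosmetic difference is that the paper phrases the decomposition in terms of $\theta^{k+1}_\star = \theta^k + \eta w^k_\star$ rather than $\delta^k$, but since $\theta^{k+1}-\theta^{k+1}_\star = \eta\delta^k$ the two are equivalent. On your closing concern about whether unprojected SGD stays bounded: the paper sidesteps this exactly as you suspect is needed, by invoking the noise model of \cite{bach2013non} (i.e.\ $\mathbb{E}[g_\star g_\star^\top]\preccurlyeq \xi^2 F_\rho(\theta^k)$ at the optimum) rather than a uniform second-moment bound, and computing $\xi = \mathcal{O}((1-\gamma)^{-2})$ explicitly from $\|w^k_\star\|\le GR/(\mu_F(1-\gamma)^2)$ and $|\hat A|=\mathcal{O}((1-\gamma)^{-1})$.
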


\begin{corollary}(Theorem 4.5 of \cite{xu2019sample})
\label{thm: SRVR-PG stationary convergence}
In SRVR-PG (Algorithm \ref{alg: SRVR-PG}), take $\eta = \frac{1}{4L_J}$, 
$N = \frac{12\sigma^2}{\varepsilon}$, $S = \frac{64MR (J^{\star}-J(\theta^0))}{(1-\gamma)^{2.5}\varepsilon^{0.5}}$, $m = \frac{(1-\gamma)^{0.5}}{\varepsilon^{0.5}}$, and $B = \frac{72\eta G^2(2G^2+M)(W+1)\gamma}{M(1-\gamma)^3}m$. Then, we have
\[
\frac{1}{Sm}\sum_{s=0}^{S-1}\sum_{t=0}^{m-1}\EE\|\nabla J^H(\theta^{j+1}_t)\|^2 \leq \varepsilon.
\]
In total, SRVR-PG samples $\mathcal{O}\left(\frac{W+\sigma^2}{(1-\gamma)^{2.5}\varepsilon^{1.5}}\right)$  trajectories.
\end{corollary}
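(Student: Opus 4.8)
The plan is to treat SRVR-PG as a SARAH/SPIDER-type variance-reduced first-order method \emph{maximizing} the $L_J$-smooth truncated objective $J^H$ (Lemma \ref{lem: smoothness of objective}), and to follow the nonconvex SARAH analysis of \cite{xu2017stochastic,xu2019sample} adapted to the policy-gradient setting. First I would write the per-step smoothness (ascent) inequality: since $J^H$ is $L_J$-smooth and the update moves $\theta^{j+1}_t$ by $\eta u^{j+1}_t$ along the estimator $u^{j+1}_t$ of $\nabla J^H$, we obtain
\[
J^H(\theta^{j+1}_{t+1}) \ge J^H(\theta^{j+1}_t) + \eta\langle \nabla J^H(\theta^{j+1}_t), u^{j+1}_t\rangle - \frac{L_J\eta^2}{2}\|u^{j+1}_t\|^2.
\]
Applying the identity $\langle a,b\rangle=\frac{1}{2}(\|a\|^2+\|b\|^2-\|a-b\|^2)$ with $a=\nabla J^H(\theta^{j+1}_t)$ and $b=u^{j+1}_t$ splits the inner product into the gradient-norm term $\|\nabla J^H(\theta^{j+1}_t)\|^2$ we wish to bound, a stepsize-penalty term $\|u^{j+1}_t\|^2$, and a \emph{variance} term $\|u^{j+1}_t-\nabla J^H(\theta^{j+1}_t)\|^2$ that the variance reduction is designed to suppress.

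The crux, and the step I expect to be the main obstacle, is the second one: bounding the variance of the recursive estimator. The SARAH recursion $u^{j+1}_t=u^{j+1}_{t-1}+\frac{1}{B}\sum_{i=1}^B(g(\tau^H_i\given\theta^{j+1}_t)-g_w(\tau^H_i\given\theta^{j+1}_{t-1}))$, together with the unbiasedness of the importance-weighted estimator, yields a telescoping control of $\EE\|u^{j+1}_t-\nabla J^H(\theta^{j+1}_t)\|^2$. The key lemma I need is a one-step estimate $\EE\|g(\tau^H\given\theta_2)-g_w(\tau^H\given\theta_1)\|^2 \le C\|\theta_2-\theta_1\|^2$, whose constant $C$ depends on $G, M$ (Assumption \ref{assump: conditions on score function}), the importance-weight variance $W$ (Assumption \ref{assump: importance sampling}), and $R,\gamma$. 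Establishing this lemma is delicate because, unlike in a fixed finite-sum, the trajectory distribution changes with $\theta$, so one must expand $g_w$ into its score-function and importance-weight factors and simultaneously control the Lipschitz variation of the truncated score sums and the second moment of the importance weights; Assumption \ref{assump: importance sampling} enters precisely here. Combining this lemma with $\theta^{j+1}_\ell-\theta^{j+1}_{\ell-1}=\eta u^{j+1}_{\ell-1}$ and the large-batch initialization $\EE\|u^{j+1}_0-\nabla J^H(\theta^{j+1}_0)\|^2\le \sigma^2/N$ (Assumption \ref{assump: variance}) gives, within each epoch,
\[
\EE\|u^{j+1}_t-\nabla J^H(\theta^{j+1}_t)\|^2 \le \frac{\sigma^2}{N} + \frac{C\eta^2}{B}\sum_{\ell=0}^{t-1}\EE\|u^{j+1}_\ell\|^2.
\]

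Finally I would plug this variance bound into the descent inequality, sum over the inner index $t=0,\dots,m-1$ and over epochs $j=0,\dots,S-1$, and telescope the left-hand side against $J^{H,\star}-J^H(\theta^0)$. With $\eta=\frac{1}{4L_J}$ and $B$ proportional to $m$ through the stated $(W+1)$-dependent constant, the accumulated $\sum_\ell\EE\|u^{j+1}_\ell\|^2$ terms are absorbed, so the averaged squared gradient is bounded by $\frac{\mathrm{const}\cdot(J^{H,\star}-J^H(\theta^0))}{\eta Sm}+\frac{\mathrm{const}\cdot\sigma^2}{N}$; substituting the prescribed $S,m,N$ makes each piece at most $\varepsilon$, proving $\frac{1}{Sm}\sum_{j,t}\EE\|\nabla J^H(\theta^{j+1}_t)\|^2\le\varepsilon$. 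The sample count is $S(N+(m-1)B)$, and with $SN=\mathcal{O}(\sigma^2(1-\gamma)^{-2.5}\varepsilon^{-1.5})$ and $SmB=\mathcal{O}((W+1)(1-\gamma)^{-2.5}\varepsilon^{-1.5})$ the total is $\mathcal{O}((W+\sigma^2)(1-\gamma)^{-2.5}\varepsilon^{-1.5})$, as claimed. Since this corollary simply restates Theorem 4.5 of \cite{xu2019sample}, I would ultimately invoke that analysis directly, re-deriving only the constants in our notation.
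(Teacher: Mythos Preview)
Your proposal is correct and takes essentially the same approach as the paper: both simply invoke Theorem~4.5 of \cite{xu2019sample} (whose SARAH-style smoothness-plus-recursive-variance argument you accurately sketch, including the role of Lemma~\ref{lem: nabla J and v} for the variance term) and then substitute the prescribed $\eta,N,S,m,B$ to make each piece of the bound $\frac{8(J^{\star}-J^H(\theta_0))}{\eta Sm}+\frac{6\sigma^2}{N}$ at most $\varepsilon/2$, finally computing $S(N+mB)=\cO\big((W+\sigma^2)(1-\gamma)^{-2.5}\varepsilon^{-1.5}\big)$.
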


\begin{theorem}
\label{thm: SRVR-NPG stationary convergence}
In SRVR-NPG (Algorithm \ref{alg: SRVR-NPG}), take $\eta = \frac{\mu_F}{8L_J}$, $S = \frac{24G^2(J^{H,\star}- J^H(\theta_0))}{\eta \varepsilon^{0.5}}$, $m = \frac{1}{\varepsilon^{0.5}}$, $B = \left(\frac{\eta}{\mu_F}+\frac{\eta}{4G^2}\right)\frac{72R G^2 (2G^2+M)(W+1)\gamma}{(1-\gamma)^5}\frac{1}{L_J\varepsilon^{0.75}}$, and $ 
N = 3\left(\frac{8G^2}{\mu_F}+2\right)\frac{\sigma^2}{\varepsilon}.$ 
In addition, assume that $\varepsilon$ is small enough such that
\begin{align*}
\varepsilon\leq \min\Big\{ & 3\left(\frac{8G^2}{\mu_F}+2\right)\left(\frac{GR}{(1-\gamma)^2}\right)^2, 3\left(\frac{8G^2}{4}+\frac{8G^4}{4\mu_F}\right)\frac{2}{\mu_F}\left(\frac{GR}{(1-\gamma)^2}\right)^2, \\
& \left(\frac{2}{3\eta L_J}(\mu_F+\frac{\mu_F^2}{4G^2})\right)^4 \Big\}.
\end{align*}
Let us also apply $\mathcal{O}\left(\frac{1}{(1-\gamma)^4\varepsilon}\right)$ iterations of SGD as in Procedure \ref{alg: SGD for SRVR-NPG subproblem} to obtain an update direction $w^{j+1}_t$.
Then, in order to have
\[
\frac{1}{Sm}\sum_{s=0}^{S-1}\sum_{t=0}^{m-1}\EE\|\nabla J^H(\theta^{j+1}_t)\|^2 \leq \varepsilon,
\]
SRVR-NPG samples $\cO\left(\frac{\sigma^2}{(1-\gamma)^2\varepsilon^{1.5}}+\frac{W}{(1-\gamma)^{3}\varepsilon^{1.75}}+\frac{1}{(1-\gamma)^6\varepsilon^{2}}\right)$  trajectories.
\end{theorem}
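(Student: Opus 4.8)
The plan is to establish the stationary convergence of SRVR-NPG by treating its update as a preconditioned version of SRVR-PG, and combining the recursive variance-reduction analysis underlying Corollary \ref{thm: SRVR-PG stationary convergence} with the spectral bounds on the Fisher matrix. Throughout, I will use that Assumptions \ref{assump: strong convexity} and \ref{assump: conditions on score function} give $\mu_F I_d \preccurlyeq F_\rho(\theta) \preccurlyeq G^2 I_d$, hence $\frac{1}{G^2} I_d \preccurlyeq F_\rho^{-1}(\theta) \preccurlyeq \frac{1}{\mu_F} I_d$. I will decompose the actual update direction as $w^{j+1}_t = F_\rho^{-1}(\theta^{j+1}_t)\, u^{j+1}_t + \delta^{j+1}_t$, where $\delta^{j+1}_t$ is the error from solving the subproblem \eqref{equ: SRVR-NPG subproblem} inexactly by SGD, and further write $u^{j+1}_t = \nabla J^H(\theta^{j+1}_t) + e^{j+1}_t$, with $e^{j+1}_t$ the recursive variance-reduction error.

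First I would invoke the $L_J$-smoothness of $J^H$ from Lemma \ref{lem: smoothness of objective} to obtain the per-step ascent inequality $J^H(\theta^{j+1}_{t+1}) \geq J^H(\theta^{j+1}_t) + \eta\langle \nabla J^H(\theta^{j+1}_t), w^{j+1}_t\rangle - \frac{L_J\eta^2}{2}\|w^{j+1}_t\|^2$. Substituting the decomposition, the inner product splits as $\langle \nabla J^H, F_\rho^{-1}\nabla J^H\rangle + \langle \nabla J^H, F_\rho^{-1} e^{j+1}_t\rangle + \langle \nabla J^H, \delta^{j+1}_t\rangle$; the leading term is bounded below by $\frac{1}{G^2}\|\nabla J^H(\theta^{j+1}_t)\|^2$ using $\lambda_{\max}(F_\rho)\leq G^2$, while the two cross terms are handled by Cauchy--Schwarz and Young's inequality, and the quadratic term via $\|w^{j+1}_t\| \leq \frac{1}{\mu_F}\|u^{j+1}_t\| + \|\delta^{j+1}_t\|$. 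This reduces the problem to bounding, in expectation, the variance-reduction error $\E\|e^{j+1}_t\|^2$ and the subproblem error $\E\|\delta^{j+1}_t\|^2$.

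Next I would control $\E\|e^{j+1}_t\|^2$ by the recursive argument of SRVR-PG: at the start of each epoch $u^{j+1}_0$ is a large-batch ($N$) estimate with variance $\cO(\sigma^2/N)$, and within the epoch the recursive update accumulates across $t$ a per-step variance of order $\frac{G^2(2G^2+M)(W+1)}{(1-\gamma)^3 B}\,\eta^2\,\E\|w^{j+1}_{t-1}\|^2$, where Assumption \ref{assump: importance sampling} supplies the $W$ dependence of the importance-weight variance. The subproblem error $\E\|\delta^{j+1}_t\|^2$ is bounded using the convergence rate of averaged SGD on the $\mu_F$-strongly-convex quadratic \eqref{equ: SRVR-NPG subproblem}; running $\cO\big(\tfrac{1}{(1-\gamma)^4\varepsilon}\big)$ iterations makes this error $\cO((1-\gamma)^4\varepsilon)$, small relative to the target accuracy. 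Plugging these bounds back, telescoping the ascent inequality over $t=0,\dots,m-1$ within each epoch and over $j=0,\dots,S-1$, the left side collapses to $J^{H,\star} - J^H(\theta_0)$, and rearranging isolates $\frac{1}{Sm}\sum_{j,t}\E\|\nabla J^H(\theta^{j+1}_t)\|^2$ up to the residual error terms.

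The main obstacle will be threading the recursive variance bound through the preconditioner while keeping the accumulated errors subordinate to the $\frac{1}{G^2}\|\nabla J^H\|^2$ progress term: since the per-step variance of $u^{j+1}_t$ depends on $\E\|w^{j+1}_{t-1}\|^2$, which in turn depends on $\|u^{j+1}_{t-1}\|$ and thus on past errors, the error recursion couples across iterations and must be summed carefully so that the contributions of $\sigma^2/N$, the importance-weight variance $W$, and the SGD error do not overwhelm the descent; the smallness conditions on $\varepsilon$ in the statement are precisely what guarantee the residual cross terms are dominated. Finally, choosing $\eta=\frac{\mu_F}{8L_J}$, $m=\varepsilon^{-0.5}$, and $S,B,N$ as stated, then counting $N$ trajectories per epoch, $B$ per inner step, and $\cO(\frac{1}{1-\gamma})$ samples per SGD iteration over $\cO\big(\frac{1}{(1-\gamma)^4\varepsilon}\big)$ iterations per subproblem, yields the claimed total of $\cO\big(\frac{\sigma^2}{(1-\gamma)^2\varepsilon^{1.5}}+\frac{W}{(1-\gamma)^{3}\varepsilon^{1.75}}+\frac{1}{(1-\gamma)^6\varepsilon^{2}}\big)$.
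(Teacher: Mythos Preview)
Your overall strategy matches the paper's: decompose $w^{j+1}_t$ into the ideal preconditioned direction $F_\rho^{-1}(\theta^{j+1}_t)u^{j+1}_t$ plus an SGD error, invoke $L_J$-smoothness of $J^H$, control the variance-reduction error via Lemma~\ref{lem: nabla J and v}, and telescope. However, there is a genuine gap in how you treat the SGD subproblem.

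You assert that running $\cO\big(\tfrac{1}{(1-\gamma)^4\varepsilon}\big)$ averaged-SGD iterations makes $\E\|\delta^{j+1}_t\|^2$ small, as if this followed directly from strong convexity. But the Bach--Moulines bound used here (Theorem~1 of \cite{bach2013non}) is of the form $\E[l(w_{\text{out}})-l^\star]\leq \tfrac{2(\xi\sqrt{d}+G\|w^{j+1}_{t,\star}\|)^2}{T}$, and both $\xi$ and the bound itself scale with $\|w^{j+1}_{t,\star}\| = \|F_\rho^{-1}(\theta^{j+1}_t)u^{j+1}_t\|\leq \tfrac{1}{\mu_F}\|u^{j+1}_t\|$. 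Unlike plain NPG, where the subproblem solution is $F_\rho^{-1}\nabla J$ and $\|\nabla J\|\leq \tfrac{GR}{(1-\gamma)^2}$ is bounded a priori, here $u^{j+1}_t$ is a recursively updated estimator whose norm is \emph{not} bounded a priori: it depends on $\sum_{l\leq t}\|\theta^{j+1}_l-\theta^{j+1}_{l-1}\|^2 = \eta^2\sum_{l\leq t}\|w^{j+1}_{l-1}\|^2$, which in turn depends on the accuracy of \emph{previous} SGD solves. So you cannot fix $T$ in advance and claim uniform accuracy without first proving $\|u^{j+1}_t\|$ stays bounded.

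The paper breaks this circularity with an explicit induction (Proposition~\ref{prop: SGD convergence SRVR-NPG}): assuming $\E\|w^{j+1}_{t',\star}\|^2\leq \tfrac{4}{\mu_F^2}\big(\tfrac{GR}{(1-\gamma)^2}\big)^2$ and $\E\|\delta^{j+1}_{t'}\|^2\leq\varepsilon'$ for all $t'<t$, one shows $\E\|w^{j+1}_{t,\star}\|^2$ obeys the same bound, so a fixed SGD budget suffices at step $t$ as well. The three smallness conditions on $\varepsilon$ in the theorem statement are precisely what close this induction loop (they ensure $\tfrac{\sigma^2}{N}$, $\varepsilon'$, and $\tfrac{C_\gamma m}{B}\cdot 2\eta^2$ are each small enough). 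You attribute these conditions to dominating ``residual cross terms'' in the final telescoped sum; in fact their primary role is earlier, in the per-iteration induction that makes the SGD complexity claim well-posed. Once that induction is in place, your telescoping argument goes through essentially as in the paper.
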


\section{Proof of Theorem \ref{thm: PG stationary convergence}}
\label{app: PG stationary}


\begin{proof}
Let $g^k = \frac{1}{N}\sum_{i=1}^N g(\tau^H_i|\theta^k)$. Then, we have
\begin{align}
\label{equ: V ascent PG}
\begin{split}
 J^H(\theta^{k+1})&\geq  J^H(\theta^{k})+\langle \nabla  J^H(\theta^k), \theta^{k+1}-\theta^k\rangle-\frac{L_J}{2}\|\theta^{k+1}-\theta^k\|^2\\
& =  J^H(\theta^{k})+\eta\langle \nabla  J^H(\theta^k), g^k\rangle-\frac{L_J\eta^2}{2}\|g^k\|^2\\
& =  J^H(\theta^{k})+\eta\langle \nabla  J^H(\theta^k), g^k- \nabla  J^H(\theta^k)+\nabla  J^H(\theta^k) \rangle\\
&\,\,\, -\frac{L_J\eta^2}{2}\|g^k-\nabla  J^H(\theta^k)+\nabla  J^H(\theta^k)\|^2\\
&\geq  J^H(\theta^{k})+\frac{\eta}{2}\|\nabla  J^H(\theta^{k})\|^2-\frac{\eta}{2}\|g^k-\nabla  J^H(\theta^{k})\|^2\\
&\,\,\, -L_J\eta^2\|g^k-\nabla  J^H(\theta^{k})\|^2-L_J\eta^2\|\nabla  J^H(\theta^{k})\|^2\\
&=  J^H(\theta^{k})+(\frac{\eta}{2}-L_J\eta^2)\|\nabla  J^H(\theta^{k})\|^2-(\frac{\eta}{2}+L_J\eta^2)\|g^k-\nabla  J^H(\theta^{k})\|^2,
\end{split}
\end{align}
where we have applied Lemma \ref{lem: smoothness of objective} in the first inequality, and Cauchy-Schwartz in the second inequality. 

Taking expectation on both sides and applying Lemma \ref{lem: smoothness of objective} and Assumption \ref{assump: variance} yields
\begin{align*}
	\mathbb{E} [ J^H(\theta^{k+1})]&\geq \mathbb{E} [ J^H(\theta^{k})]+(\frac{\eta}{2}-L_J\eta^2)\mathbb{E}[\|\nabla  J^H(\theta^{k})\|^2]-(\frac{\eta}{2}+L_J\eta^2)\frac{\sigma^2}{N}.
\end{align*}
Let us further telescope from $k=0$ to $K-1$ to obtain
\begin{align}
\label{equ: PG stationary final}
\frac{1}{K}\sum_{k=0}^{K-1} \mathbb{E}[\|\nabla  J^H(\theta^{k})\|^2]\leq \frac{\frac{J^{H, \star}- J^H(\theta_0)}{K}+(\frac{\eta}{2}+L_J\eta^2)\frac{\sigma^2}{N}}{\frac{\eta}{2}-L_J\eta^2}.
\end{align}
Taking $\eta=\frac{1}{4 L_J}$, $K=\frac{32L_J(J^{\star}- J^H(\theta_0))}{\varepsilon},$ and $N = \frac{6\sigma^2}{\varepsilon}$ gives 
\begin{align*}
\frac{1}{K}\sum_{k=0}^{K-1} \mathbb{E}[\|\nabla  J^H(\theta^{k})\|^2]\leq \varepsilon.
\end{align*}
Finally, by applying $L_J=\frac{MR}{(1-\gamma)^2} + \frac{2G^2R}{(1-\gamma)^3}$, we know that PG needs to sample $KN=\mathcal{O}\left(\frac{\sigma^2}{(1-\gamma)^2\varepsilon^2}\right)$ trajectories.
\end{proof}

\section{Proof of Theorem \ref{thm: NPG stationary convergence}}
\label{app: NPG stationary}
Before proving Theorem \ref{thm: NPG stationary convergence}, let us first establish the sample complexity of SGD when applied to obtain an approximate NPG update direction $w^k$. 
\begin{proposition}
\label{prop: SGD convergence NPG}
In Procedure \ref{alg: SGD for NPG subproblem}, take $\alpha = \frac{1}{4G^2}$ and let the objective be 
\[
l(w) = 	\frac{1}{2(1-\gamma)^2} L_{\nu^{\pi_{\theta}}}(w; \theta^k)=\frac{1}{2}\EE_{(s,a)\sim\nu^{\pi_{\theta^k}}}\big[\frac{1}{1-\gamma}A^{\pi_{\theta^k}}(s,a)- w^\top\nabla_{\theta}\log\pi_{\theta^k}(a\given s)\big]^2.
\] 
Let $w^k_{\star}$ be the minimizer of $l(w)$. Then, in order to achieve
\begin{align*}
\mathbb{E}[\|w_{\text{out}}-w^k_{\star}\|^2]\leq \varepsilon',
\end{align*}
Procedure \ref{alg: SGD for NPG subproblem} requires sampling 
\[
\frac{4\left(\frac{2 G^2R+4\mu_F R}{\mu_F(1-\gamma)^2}\sqrt{d}+\frac{G^2R}{\mu_F(1-\gamma)^2}\right)^2}{\mu_F\varepsilon'} = \cO\left(\frac{1}{(1-\gamma)^4\varepsilon'}\right)
\] 
trajectories. 
\end{proposition}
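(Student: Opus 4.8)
The plan is to cast the NPG subproblem as a strongly convex least-squares problem and apply a standard convergence rate for averaged SGD (as in \cite{bach2013non}), then carefully bound the two problem-dependent constants that enter that rate: the strong-convexity modulus and the second moment of the stochastic gradient. First I would observe that $l(w) = \frac{1}{2}\EE_{(s,a)\sim\nu^{\pi_{\theta^k}}}\big[w^\top\nabla_{\theta}\log\pi_{\theta^k}(a\given s) - \tfrac{1}{1-\gamma}A^{\pi_{\theta^k}}(s,a)\big]^2$ has Hessian $\nabla^2 l(w) = F_{\rho}(\theta^k)$, which is constant in $w$; by Assumption \ref{assump: strong convexity} this gives $\mu_F$-strong convexity, and by Assumption \ref{assump: conditions on score function} item 1 it gives $\nabla^2 l \preccurlyeq G^2 I_d$, so $l$ is $G^2$-smooth. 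Hence $w^k_\star$ is unique and equals $F_\rho^{-1}(\theta^k)\cdot\tfrac{1}{1-\gamma}\EE[\nabla_\theta\log\pi_{\theta^k}(a\given s)A^{\pi_{\theta^k}}(s,a)]$, which one recognizes as the exact NPG direction $F_\rho^{-1}(\theta^k)\nabla J(\theta^k)$ via the policy gradient theorem.

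Next I would control the stochastic gradient $\tilde\nabla l(w_t)$ from \eqref{equ: subproblem stochastic gradient of NPG}. Write $\tilde\nabla l(w_t) = \big(w_t^\top\nabla_\theta\log\pi_{\theta^k}(a\given s) - \tfrac{1}{1-\gamma}\hat A^{\pi_{\theta^k}}(s,a)\big)\nabla_\theta\log\pi_{\theta^k}(a\given s)$; I would bound its norm (or second moment) by splitting off the $w^k_\star$ term, using $\|\nabla_\theta\log\pi_{\theta^k}\|\le G$, the bound $|A^{\pi_{\theta^k}}(s,a)|\le \tfrac{2R}{1-\gamma}$ (advantage is a difference of value functions each bounded by $\tfrac{R}{1-\gamma}$, so $\hat A$ bounded similarly — in fact the constants $\tfrac{2G^2R + 4\mu_F R}{\mu_F(1-\gamma)^2}\sqrt d$ and $\tfrac{G^2R}{\mu_F(1-\gamma)^2}$ in the statement suggest the sampling procedure of App.~\ref{app: sampling details} returns a $\hat A$ supported near $w^k_\star$-consistent values, and the $\sqrt d$ factor comes from bounding $\|w^k_\star\|\le \tfrac{1}{\mu_F}\|\nabla J(\theta^k)\|\le \tfrac{GR}{\mu_F(1-\gamma)^2}$ in terms of its coordinates). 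The upshot I want is a constant $C = \cO\!\big(\tfrac{1}{(1-\gamma)^2}\big)$ (up to $\sqrt d$, $\mu_F$, $G$, $R$) with $\EE\|\tilde\nabla l(w_t)\|^2 \le C^2$ uniformly, or at least $\EE\|\tilde\nabla l(w^k_\star)\|^2\le C^2$ together with $G^2$-smoothness, whichever form the cited SGD lemma needs.

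Then I would invoke the averaged-SGD guarantee: for a $\mu_F$-strongly convex objective with stochastic gradients whose relevant second moment is at most $C^2$, running $T$ steps with stepsize $\alpha = \tfrac{1}{4G^2}$ and Polyak--Ruppert averaging gives $\EE\|w_{\text{out}} - w^k_\star\|^2 \le \tfrac{c\,C^2}{\mu_F^2 T}$ for an absolute constant (this is exactly the regime of \cite[Thm.]{bach2013non}; with the chosen $\alpha$ one gets the $\tfrac{1}{\mu_F T}$-type rate after accounting for the strong convexity, matching the $\tfrac{4(\cdots)^2}{\mu_F \varepsilon'}$ form claimed). Setting the right-hand side equal to $\varepsilon'$ and solving for $T$ yields $T = \cO\!\big(\tfrac{1}{(1-\gamma)^4\varepsilon'}\big)$ after substituting $C = \cO\!\big(\tfrac{1}{(1-\gamma)^2}\big)$; since Procedure \ref{alg: SGD for NPG subproblem} draws one sample $(s,a)\sim\nu^{\pi_{\theta^k}}$ plus an advantage estimate per iteration, and by App.~\ref{app: sampling details} each of those costs $\tfrac{1}{1-\gamma}$ trajectories in expectation, the trajectory count is $\cO(T/(1-\gamma)) = \cO\!\big(\tfrac{1}{(1-\gamma)^4\varepsilon'}\big)$ — and matching the precise prefactor requires tracking the $(1-\gamma)$-dependence through $C$ rather than absorbing it. The main obstacle I anticipate is exactly that bookkeeping: getting the stochastic-gradient second-moment bound with the sharp constants $\tfrac{2G^2R+4\mu_F R}{\mu_F(1-\gamma)^2}\sqrt d$ and $\tfrac{G^2R}{\mu_F(1-\gamma)^2}$ — in particular correctly handling the $\hat A$ estimator from the sampling procedure and the $\sqrt d$ coming from $\|w^k_\star\|_1$-type bounds — and threading those through the SGD rate so the final expression is $\tfrac{4(\cdots)^2}{\mu_F\varepsilon'}$ rather than merely $\cO$ of it.
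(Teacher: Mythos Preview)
Your approach is the paper's approach: recognize the subproblem as a $\mu_F$-strongly-convex quadratic least-squares problem (Hessian $=F_\rho(\theta^k)$), invoke the averaged-SGD rate of \cite{bach2013non} on $l$, bound the problem constants, and convert the function-value guarantee to a parameter guarantee via strong convexity. Two points will let you execute it without stumbling.

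First, the $\sqrt{d}$ is not coming from any $\ell_1$-type bound on $w^k_\star$. It is built into the Bach--Moulines theorem itself, whose conclusion reads
\[
\EE\big[l(w_{\text{out}})-l^\star\big]\;\le\;\frac{2\big(\xi\sqrt{d}+G\,\EE\|w^k_\star\|\big)^2}{T},
\]
where $\xi$ is defined through the \emph{structured} noise condition $\EE[g_\star g_\star^\top]\preccurlyeq \xi^2\,\nabla^2 l$, with $g_\star$ the stochastic gradient evaluated at the optimum $w^k_\star$. This is strictly weaker than a uniform second-moment bound $\EE\|\tilde\nabla l(w)\|^2\le C^2$, and it is what makes the $O(1/T)$ rate (rather than $O(\log T/T)$) possible with a constant stepsize.

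Second, identifying $\xi$ is easy once you notice the rank-one structure: since $g_\star=\big(w_\star^\top\nabla_\theta\log\pi_{\theta^k}(a\mid s)-\tfrac{1}{1-\gamma}\hat A^{\pi_{\theta^k}}(s,a)\big)\nabla_\theta\log\pi_{\theta^k}(a\mid s)$ and $\nabla^2 l=\EE[\nabla_\theta\log\pi\,\nabla_\theta\log\pi^\top]$, it suffices that the scalar prefactor satisfy
\[
\EE\Big[\big(w_\star^\top\nabla_\theta\log\pi_{\theta^k}(a\mid s)-\tfrac{1}{1-\gamma}\hat A^{\pi_{\theta^k}}(s,a)\big)^2\;\Big|\;(s,a)\Big]\le \xi^2
\]
pointwise. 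Splitting by $(a+b)^2\le 2a^2+2b^2$, using $|w_\star^\top\nabla_\theta\log\pi|\le G\|w^k_\star\|\le \tfrac{G^2R}{\mu_F(1-\gamma)^2}$ (from $\|w^k_\star\|\le\mu_F^{-1}\|\nabla J\|$ and Lemma~\ref{lem: smoothness of objective}) and $\EE[\hat A^2\mid(s,a)]\le 2\EE[\hat Q^2]+2\EE[\hat V^2]\le \tfrac{8R^2}{(1-\gamma)^2}$, gives $\xi=\tfrac{2G^2R+4\mu_F R}{\mu_F(1-\gamma)^2}$. Then set $\EE[l(w_{\text{out}})-l^\star]\le \tfrac{\mu_F}{2}\varepsilon'$ and solve for $T$; the single factor of $\mu_F$ in the denominator of the trajectory count comes from this last strong-convexity conversion, not from the SGD rate itself.
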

\begin{proof}
In this proof, we will suppress the superscript $k$. 

Let $l^{\star}=\min_{w\in\Rd}l(w)$ and $w_{\star} = \argmin_{w\in\Rd} l(w)$.

By Theorem 1 of \cite{bach2013non}, we know that
\[
\EE[l(w_{\text{out}})-l^{\star}]\leq \frac{2(\xi\sqrt{d}+G\EE[\|w_{\star}\|])^2}{T},
\] 
where $l^{\star}$ is the minimum of $l(w)$, and $\xi$ is defined such that
\[
\EE [g_{\star}{(g_{\star})^T}] \preccurlyeq \xi^2 \nabla^2_{w} l(w),
\]
where $g_{\star}$ is a stochastic gradient of $l(w)$ at $w_{\star}$.

%

Following the proof of Corollary 6.10 of \cite{agarwal2019optimality}(arXiv V2 version), we obtain an upper bound of $\xi$ as follows.

By \eqref{equ: subproblem stochastic gradient of NPG} we know that
\[
g_{\star} =\left((w_{\star})^T\nabla_\theta \log\pi_{\theta}(a|s)-\frac{1}{1-\gamma}\hat{A}^{\pi_\theta}(s,a)\right)\nabla_{\theta}\log\pi_{\theta}(a|s),
\]
where $s,a\sim\nu^{\pi_{\theta}}$ and $a'\sim \nu^{\pi_{\theta}}(\cdot|s)$. 
 Therefore, we can stipulate that 
 \begin{align*}
 \EE \left[\left(w_\star^T \nabla_\theta \log\pi_{\theta}(a|s) - \frac{1}{1-\gamma}\hat{A}^{\pi_\theta}(s,a)\right)^2 | (s, a)\right] \leq \xi^2
 \end{align*}
From Lemma \ref{lem: smoothness of objective} and Assumption \ref{assump: strong convexity} we have
\[
\|w_{\star}\| = \|F_{\rho}^{-1}(\theta)\nabla J(\theta)\|\leq \frac{GR}{\mu_F(1-\gamma)^2}.
\]
Therefore, it suffices to have
\begin{align*}
 &\EE \left[\left(w_\star^T \nabla_\theta \log\pi_{\theta}(a|s) - \frac{1}{1-\gamma}\hat{A}^{\pi_\theta}(s,a)\right)^2| (s, a)\right] \\
 &\leq 2 \EE \left[\left(w_\star^T \nabla_\theta \log\pi_{\theta}(a|s)\right)^2 | (s, a) \right] + \frac{2}{(1-\gamma)^2} \EE \left[\left(\hat{A}^{\pi_\theta}(s,a)\right)^2 | (s, a) \right]\\
 &\leq 2\left(\frac{GR}{\mu_F(1-\gamma)^2}\right)^2 G^2 + 	\frac{2}{(1-\gamma)^2} \EE \left[\left(\hat{A}^{\pi_\theta}(s,a)\right)^2 | (s, a) \right] \\
& \leq \xi^2
\end{align*}
Since
\[
\EE \left[\left(\hat{A}^{\pi_\theta}(s,a)\right)^2 | (s, a) \right] \leq 2 \EE \left[\left(\hat{Q}^{\pi_\theta}(s,a) | (s, a) \right)^2\right]  + 2 \EE \left[\left(\hat{V}^{\pi_\theta}(s,a)\right)^2 | (s, a) \right] \leq \frac{8R^2}{(1-\gamma)^2}
\]
We can safely take 
\[
\xi = \frac{2 G^2R+4\mu_F R}{\mu_F(1-\gamma)^2}
\]
This leads to
\[
\EE[l(w_{\text{out}}) - l(w_{\star})]\leq \frac{2\left(\frac{2 G^2R+4\mu_F R}{\mu_F(1-\gamma)^2}\sqrt{d}+\frac{G^2R}{\mu_F(1-\gamma)^2}\right)^2}{T}.
\] 
Since $l(w)$ is $\mu_F-$strongly convex, in order to achieve $\mathbb{E}[\|w_{\text{out}}-w^k_{\star}\|^2]\leq \varepsilon'$, let us set
\begin{align*}
\EE[l(w_{\text{out}}) - l(w_{\star})]\leq  \frac{\mu_F}{2}\varepsilon'.
\end{align*}
Then, we need 
\[
T=\frac{4\left(\frac{2 G^2R+4\mu_F R}{\mu_F(1-\gamma)^2}\sqrt{d}+\frac{G^2R}{\mu_F(1-\gamma)^2}\right)^2}{\mu_F\varepsilon'} = \cO\left(\frac{1}{(1-\gamma)^4\varepsilon'}\right).
\]
Since each stochastic gradient of SGD has a cost of $\frac{2}{1-\gamma}$ (see App. \ref{app: sampling}), this means to sample $\cO\left(\frac{1}{(1-\gamma)^4\varepsilon'}\right)$ trajectories.
\end{proof}

Now, we are ready to prove Theorem \ref{thm: NPG stationary convergence}.
\begin{proof}[Proof of Theorem \ref{thm: NPG stationary convergence}]
We apply SGD to obtain a $w^k$ such that
\begin{align}
\label{equ: NPG subproblem accuracy}
\mathbb{E}\|w^k- F^{-1}(\theta^k)\nabla  J(\theta^k)\|^2\leq \frac{\mu_F^2\varepsilon}{32\eta^2G^4L^2_J\left(\frac{2G^4}{\mu^2_F}+1\right)}=\cO\left(\varepsilon\right).
\end{align}
By Proposition \ref{prop: SGD convergence NPG}, we need 
to sample $\mathcal{O}\left(\frac{1}{(1-\gamma)^4\varepsilon}\right)$ trajectories. 

From \eqref{equ: NPG subproblem accuracy} we have
\begin{align}
\label{equ: point accuracy NPG}
\mathbb{E}\|\theta^{k+1}- \theta^{k+1}_{\star}\|^2=\eta^2\mathbb{E}\|w^k- F^{-1}(\theta^k)\nabla  J(\theta^k)\|^2\leq \frac{\mu_F^2\varepsilon}{32G^4L^2_J\left(\frac{2G^4}{\mu^2_F}+1\right)},
\end{align}
where $\theta^{k+1}_{\star}= \theta^k+\eta F^{-1}(\theta^k)\nabla  J(\theta^k)$. 

By Lemma \ref{lem: smoothness of objective} and Assumption \ref{assump: conditions on score function} we have 
\begin{align*}
 J(\theta^{k+1})&\geq  J(\theta^k)+\langle \nabla  J(\theta^k), \theta^{k+1}_{\star}-\theta^k\rangle +\langle \nabla  J(\theta^k), \theta^{k+1}-\theta^{k+1}_{\star}\rangle -\frac{L_J}{2}\|\theta^{k+1}-\theta^k\|^2\\
&= J(\theta^k)+\eta \langle \nabla  J(\theta^k), F^{-1}(\theta^k)\nabla  J(\theta^k)\rangle \\
&\,\,\, +\langle \nabla  J(\theta^k), \theta^{k+1}-\theta^{k+1}_{\star}\rangle -\frac{L_J}{2}\|\theta^{k+1}-\theta^k\|^2\\
&\geq  J(\theta^k) +\frac{\eta}{G^2}\|\nabla  J(\theta^k)\|^2 +\langle \nabla  J(\theta^k), \theta^{k+1}-\theta^{k+1}_{\star}\rangle -\frac{L_J}{2}\|\theta^{k+1}-\theta^k\|^2.
\end{align*}
Therefore, 
\begin{align*}
 J(\theta^{k+1})&\geq  J(\theta^k) +\frac{\eta}{2G^2}\|\nabla  J(\theta^k)\|^2 -\frac{G^2}{2\eta}\|\theta^{k+1}-\theta^{k+1}_{\star}\|^2 -\frac{L_J}{2}\|\theta^{k+1}-\theta^k\|^2\\
&\geq  J(\theta^k) +\frac{\eta}{2G^2}\|\nabla  J(\theta^k)\|^2 -\left(\frac{G^2}{2\eta}+L_J\right)\|\theta^{k+1}-\theta^{k+1}_{\star}\|^2 -L_J\|\theta^{k+1}_{\star}-\theta^k\|^2\\
&\geq  J(\theta^k) +\left(\frac{\eta}{2G^2}-\frac{L_J\eta^2}{\mu_F^2}\right)\|\nabla  J(\theta^k)\|^2 -\left(\frac{G^2}{2\eta}+L_J\right)\|\theta^{k+1}-\theta^{k+1}_{\star}\|^2,
\end{align*}
where we have applied Cauchy-Schwartz in the first and second inequalities, and $\theta^{k+1}_{\star}= \theta^k+\eta F^{-1}(\theta^k)\nabla  J(\theta^k)$ in the last step.  

Taking full expectation on both sides yields
\begin{align*}
\EE[ J(\theta^{k+1})]&\geq \EE[ J(\theta^k)] +\left(\frac{\eta}{2G^2}-\frac{L_J\eta^2}{\mu_F^2}\right)\EE\|\nabla  J(\theta^k)\|^2 -\left(\frac{G^2}{2\eta}+L_J\right)\EE\|\theta^{k+1}-\theta^{k+1}_{\star}\|^2\\
&\geq \EE[ J(\theta^k)] +\left(\frac{\eta}{2G^2}-\frac{L_J\eta^2}{\mu_F^2}\right)\EE\|\nabla  J(\theta^k)\|^2 - \left(\frac{G^2}{2\eta}+L_J\right)\frac{\mu_F^2\varepsilon}{32G^4L^2_J\left(\frac{2G^4}{\mu^2_F}+1\right)},
\end{align*}
where we have applied \eqref{equ: point accuracy NPG} in the second inequality.


Telescoping the above inequality from $k=0$ to $k=K-1$ gives
\begin{align*}
\frac{ J^{\star}- J(\theta_0)}{K}&\geq \left(\frac{\eta}{2G^2}-\frac{L_J\eta^2}{\mu_F^2}\right)\frac{1}{K}\sum_{k=0}^{K-1}\mathbb{E}\|\nabla  J(\theta^k)\|^2 - \left(\frac{G^2}{2\eta}+L_J\right)\frac{\mu_F^2\varepsilon}{32G^4L^2_J\left(\frac{2G^4}{\mu^2_F}+1\right)}.
\end{align*}
Finally, by taking $\eta = \frac{\mu_F^2}{4G^2L_J}$ and $K=\frac{32L_JG^4( J^{\star}- J(\theta_0)}{\mu_F^2\varepsilon}=\cO\left(\frac{1}{(1-\gamma)^2\varepsilon}\right)$, we arrive at
\begin{align}
\label{equ: NPG stationary final}
\frac{1}{K}\sum_{k=0}^{K-1} \mathbb{E}[\|\nabla  J(\theta^{k})\|^2]
&\leq \frac{\frac{ J^{\star}- J(\theta_0)}{K} + \frac{\left(\frac{G^2}{2\eta}+L_J\right)\mu_F^2\varepsilon}{32G^4L^2_J\left(\frac{2G^4}{\mu^2_F}+1\right)}}{\left(\frac{\eta}{2G^2}-\frac{L_J\eta^2}{\mu_F^2}\right)}=\varepsilon.
\end{align}
Recall that at each iteration of NPG, we apply SGD as in Procedure \ref{alg: SGD for NPG subproblem} to reach \eqref{equ: NPG subproblem accuracy}. By Proposition \ref{prop: SGD convergence NPG}, we know that in total, NPG requires to sample 
\[
\frac{32L_JG^4( J^{\star}- J(\theta_0))}{\mu_F^2\varepsilon}\cdot\frac{4\left(\frac{2 G^2R+4\mu_F R}{\mu_F(1-\gamma)^2}\sqrt{d}+\frac{G^2R}{\mu_F(1-\gamma)^2}\right)^2}{\mu_F\frac{\mu_F^2\varepsilon}{32\eta^2G^4L^2_J\left(\frac{2G^4}{\mu^2_F}+1\right)}} = \cO\left(\frac{1}{(1-\gamma)^6\varepsilon^2}\right)
\]
trajectories. 

\end{proof}

%

\section{Proof of Theorem \ref{thm: SRVR-PG stationary convergence}}

\begin{proof}
By Theorem 4.5 of \cite{xu2019sample}, we know that if $\eta = \frac{1}{4L_J}$ and
\[
B= \frac{3\eta C_{\gamma}m}{L_J}=\frac{72\eta G^2(2G^2+M)(W+1)\gamma}{M(1-\gamma)^3}m,
\]
then
\[
\frac{1}{Sm}\sum_{s=0}^{S-1}\sum_{t=0}^{m-1}\EE\|\nabla  J^H(\theta^{j+1}_t)\|^2 \leq \frac{8(J^{\star}- J^H(\theta_0))}{\eta Sm}+\frac{6\sigma^2}{N}. 
\]
Therefore, taking $N = \frac{12\sigma^2}{\varepsilon}$ and $Sm = \frac{64MR ( J^{\star}- J^H(\theta^0))}{(1-\gamma)^2\varepsilon}$
yields 
\[
\frac{1}{Sm}\sum_{s=0}^{S-1}\sum_{t=0}^{m-1}\EE\|\nabla  J^H(\theta^{j+1}_t)\|^2 \leq \varepsilon.
\]
Let us take $S = \frac{64MR ( J^{\star}- J^H(\theta^0))}{(1-\gamma)^{2.5}\varepsilon^{0.5}}$ and $m = \frac{(1-\gamma)^{0.5}}{\varepsilon^{0.5}}$. Then, the number of trajectories required by SRVR-PG is
\begin{align*}
S(N+mB) &= S \frac{12\sigma^2}{\varepsilon} + \frac{64MR ( J^{\star}- J^H(\theta^0))}{(1-\gamma)^2\varepsilon} B \\
&= S \frac{12\sigma^2}{\varepsilon} + \frac{64MR ( J^{\star}- J^H(\theta^0))}{(1-\gamma)^2\varepsilon} \frac{72\eta G^2(2G^2+M)(W+1)\gamma}{M(1-\gamma)^3}m\\
& = \cO\left(\frac{\sigma^2}{(1-\gamma)^{2.5}\varepsilon^{1.5}}+\frac{W}{(1-\gamma)^{2.5}\varepsilon^{1.5}}\right)\\
& = \cO\left(\frac{W+\sigma^2}{(1-\gamma)^{2.5}\varepsilon^{1.5}}\right).
\end{align*}
Therefore, SRVR-PG needs to sample $\cO\left(\frac{W+\sigma^2}{(1-\gamma)^{2.5}\varepsilon^{1.5}}\right)$ trajectories.
\end{proof}

\section{Proof of Theorem \ref{thm: SRVR-NPG stationary convergence}}
\label{app: SRVR-NPG stationary}

In order to prove Theorem \ref{thm: SRVR-NPG stationary convergence}, we need the following technical results.

\begin{lemma}[Equation B.10 of \cite{xu2019improved}]
\label{lem: nabla J and v}
We have
\begin{align*}
\mathbb{E}\|\nabla  J^H(\theta^{j+1}_t) - u^{j+1}_t\|^2\leq \frac{C_{\gamma}}{B}\sum_{l=1}^t \mathbb{E}\|\theta^{j+1}_{l}-\theta^{j+1}_{l-1}\|^2+\frac{\sigma^2}{N},
\end{align*}
where 
\[
C_\gamma = \frac{24R G^2 (2G^2+M)(W+1)\gamma}{(1-\gamma)^5}.
\]
\end{lemma}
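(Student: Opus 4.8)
The plan is to treat $u^{j+1}_t$ as a SARAH/SPIDER-type recursive estimator, exploit its conditional unbiasedness to make the estimation error a martingale, and then telescope the incremental variances. Fix the epoch and abbreviate $\theta_t:=\theta^{j+1}_t$, $u_t:=u^{j+1}_t$, $e_t:=\nabla J^H(\theta_t)-u_t$. Let $\mathcal F_t$ be the $\sigma$-algebra generated by all randomness through the computation of $\theta_t$, so that $\theta_t,\theta_{t-1},u_{t-1}$ are $\mathcal F_t$-measurable but the fresh minibatch $\{\tau^H_i\}_{i=1}^B\sim p^H_\rho(\cdot\mid\theta_t)$ is not. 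The key structural fact is that the importance weight in \eqref{equ: weighted GPOMDP}--\eqref{equ: importance weight} is exactly the change-of-measure factor from $p^H_\rho(\cdot\mid\theta_t)$ to $p^H_\rho(\cdot\mid\theta_{t-1})$, so $\mathbb E[g_w(\tau^H\mid\theta_{t-1})\mid\mathcal F_t]=\nabla J^H(\theta_{t-1})$ for $\tau^H\sim p^H_\rho(\cdot\mid\theta_t)$ (the unbiasedness already noted after \eqref{equ: importance weight}), while $\mathbb E[g(\tau^H\mid\theta_t)\mid\mathcal F_t]=\nabla J^H(\theta_t)$ by Lemma \ref{lem: smoothness of objective}. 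Hence $\mathbb E[u_t\mid\mathcal F_t]=u_{t-1}+\nabla J^H(\theta_t)-\nabla J^H(\theta_{t-1})$, and therefore $\mathbb E[e_t\mid\mathcal F_t]=e_{t-1}$.

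Next I would use the orthogonal decomposition. Since $e_{t-1}$ is $\mathcal F_t$-measurable and $\mathbb E[e_t-e_{t-1}\mid\mathcal F_t]=0$, the cross term in $\|e_t\|^2=\|e_{t-1}\|^2+2\langle e_{t-1},e_t-e_{t-1}\rangle+\|e_t-e_{t-1}\|^2$ drops under $\mathbb E[\cdot\mid\mathcal F_t]$, giving
\begin{align*}
\mathbb E\|e_t\|^2=\mathbb E\|e_{t-1}\|^2+\mathbb E\|e_t-e_{t-1}\|^2 .
\end{align*}
Now $e_t-e_{t-1}=\big(\nabla J^H(\theta_t)-\nabla J^H(\theta_{t-1})\big)-\frac1B\sum_{i=1}^B\big(g(\tau^H_i\mid\theta_t)-g_w(\tau^H_i\mid\theta_{t-1})\big)$ is, conditionally on $\mathcal F_t$, the (negative) centered average of $B$ i.i.d.\ terms, so
\begin{align*}
\mathbb E\big[\|e_t-e_{t-1}\|^2\mid\mathcal F_t\big]\le\frac1B\,\mathbb E\big[\|g(\tau^H\mid\theta_t)-g_w(\tau^H\mid\theta_{t-1})\|^2\mid\mathcal F_t\big].
\end{align*}
The crucial input is a Lipschitz-type second-moment bound $\mathbb E\big[\|g(\tau^H\mid\theta_t)-g_w(\tau^H\mid\theta_{t-1})\|^2\big]\le C_\gamma\,\mathbb E\|\theta_t-\theta_{t-1}\|^2$ with $C_\gamma=\frac{24RG^2(2G^2+M)(W+1)\gamma}{(1-\gamma)^5}$. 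One obtains it by splitting $g(\cdot\mid\theta_t)-g_w(\cdot\mid\theta_{t-1})=\big(g(\cdot\mid\theta_t)-g(\cdot\mid\theta_{t-1})\big)+\big(g(\cdot\mid\theta_{t-1})-g_w(\cdot\mid\theta_{t-1})\big)$, bounding the first piece by the $M$-Lipschitzness and $G$-boundedness of the score function (Assumption \ref{assump: conditions on score function}) together with $|r|\le R$, bounding the second piece through $\mathbb E[(w_{0:h}-1)^2]$ via the importance-weight variance bound $W$ (Assumption \ref{assump: importance sampling}) and a mean-value argument in $\theta$, and then summing the resulting geometric-type series $\sum_{h=0}^{H-1}(h+1)^2\gamma^h\lesssim(1-\gamma)^{-3}$ (the remaining $(1-\gamma)^{-2}$ coming from the $d^\pi_\rho$ normalization and the outer horizon factor). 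This is precisely the estimate established for Equation B.10 of \cite{xu2019improved}; see also \cite{xu2019sample,papini2018stochastic}.

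Finally, I would telescope: $\mathbb E\|e_t\|^2\le\mathbb E\|e_0\|^2+\frac{C_\gamma}{B}\sum_{l=1}^t\mathbb E\|\theta_l-\theta_{l-1}\|^2$, and bound the base case using that $u_0=\frac1N\sum_{i=1}^N g(\tau^H_i\mid\theta_0)$ is an average of $N$ i.i.d.\ unbiased estimators of $\nabla J^H(\theta_0)$, so $\mathbb E\|e_0\|^2=\mathbb E\|\nabla J^H(\theta_0)-u_0\|^2\le\frac{\sigma^2}{N}$ by Assumption \ref{assump: variance}. Combining gives the claimed inequality. The main obstacle is the Lipschitz-type bound on $g-g_w$: the weight $w_{0:h}$ is a product of $h+1$ likelihood ratios and is not uniformly bounded, so one must carefully isolate the importance-sampling error, control it by the variance assumption rather than by a pointwise bound, and keep the $h$-sum convergent through the truncation horizon $H$ — this is where every constant appearing in $C_\gamma$ is generated.
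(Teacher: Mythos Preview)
Your proposal is correct and reconstructs the standard SARAH/SPIDER martingale-telescoping argument behind Equation~B.10 of \cite{xu2019improved}; the paper itself does not prove the lemma but simply cites that equation and remarks that the same bound carries over to SRVR-NPG because the recursion for $u^{j+1}_t$ is unchanged. Your sketch is thus a faithful expansion of the omitted details, with the same decomposition and the same source for the constant $C_\gamma$.
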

\begin{proof}
This lemma is adapted from the Equation B.10 of \cite{xu2019improved}, where SRVR-PG is analyzed. It is also true for our SRVR-NPG since the update rule of $u^{j+1}_t$ is the same for both algorithms. 
\end{proof}

\begin{proposition}
\label{prop: SGD convergence SRVR-NPG}
In SRVR-NPG, apply SGD as in Procedure \ref{alg: SGD for SRVR-NPG subproblem} to solve the subproblems.  Take $\alpha = \frac{1}{4G^2}$ and let the objective be 
\[
l(w) =\frac{1}{2}\left(\mathbb{E}_{(s,a)\sim \nu^{j+1}_t}[w^T\nabla_{\theta}\log \pi_{\theta^{j+1}_t}(a\given s)]^2-2\langle \eta w, u^{j+1}_t\rangle\right).
\] 
Let $w^{j+1}_{t,\star} = F^{-1}(\theta^{j+1}_t)\nabla  J^H(\theta^{j+1}_t)$ be the minimizer of $l(w)$. Assume in addition that
\begin{align*}
\frac{\sigma^2}{N}&\leq  \left(\frac{GR}{(1-\gamma)^2}\right)^2,\\
\varepsilon' &\leq  \frac{2}{\mu_F^2}\left(\frac{GR}{(1-\gamma)^2}\right)^2\\
\frac{C_\gamma m}{B}2\eta^2&\leq \frac{1}{3}\mu_F^2. 
\end{align*}

Then, in order to achieve 
\begin{align*}
\mathbb{E}[\|w^{j+1}_t-w^{j+1}_{t,\star})\|^2]\leq \varepsilon'
\end{align*}
for each $s=0,1,...,S-1$ and $t = 0,1,...,m-1$, 
Procedure \ref{alg: SGD for SRVR-NPG subproblem} requires sampling 
\[
\frac{4\left(\frac{\frac{{2}}{\mu_F}\frac{GR}{(1-\gamma)^2}G^2+\frac{{2}GR}{(1-\gamma)^2}}{\sqrt{\mu_F}}\sqrt{d}+\frac{{2}G^2R}{\mu_F(1-\gamma)^2}\right)^2}{\mu_F\varepsilon'} = \cO\left(\frac{1}{(1-\gamma)^4\varepsilon'}\right)
\]
trajectories. 
\end{proposition}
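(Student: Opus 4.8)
The plan is to follow the proof of Proposition~\ref{prop: SGD convergence NPG} almost verbatim, the one new ingredient being that the constant term of the inner SGD objective is now the recursively--updated estimator $u^{j+1}_t$ rather than a single--sample advantage estimate, so its magnitude must be controlled separately. First I would note that, conditioned on all randomness generated before the inner call, the stochastic gradient \eqref{equ: subproblem stochastic gradient of SRVR-NPG} is an unbiased estimate of a gradient of the quadratic $l$, which by $\mu_F I_d\preceq F_\rho(\theta^{j+1}_t)\preceq G^2 I_d$ (Assumptions~\ref{assump: strong convexity} and~\ref{assump: conditions on score function}) is $\mu_F$--strongly convex and $G^2$--smooth, with minimizer $w^{j+1}_{t,\star}$. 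Applying Theorem~1 of \cite{bach2013non} with $\alpha=\tfrac{1}{4G^2}$ gives $\EE[l(w_{\text{out}})-l^\star]\le \tfrac{2(\xi\sqrt d+G\,\EE\|w^{j+1}_{t,\star}\|)^2}{T}$, where $\xi^2$ is any constant with $\EE[g_\star g_\star^\top]\preceq \xi^2\nabla^2 l$ and $g_\star$ is a stochastic gradient at $w^{j+1}_{t,\star}$. Strong convexity turns this into $\EE\|w_{\text{out}}-w^{j+1}_{t,\star}\|^2\le \tfrac{4(\xi\sqrt d+G\,\EE\|w^{j+1}_{t,\star}\|)^2}{\mu_F T}$, so once $\xi$ and $\|w^{j+1}_{t,\star}\|$ are shown to be $\cO((1-\gamma)^{-2})$, setting the right-hand side $\le\varepsilon'$ produces the advertised $T=\cO\big((1-\gamma)^{-4}/\varepsilon'\big)$.

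The bound on $\|w^{j+1}_{t,\star}\|$ is immediate from Lemma~\ref{lem: smoothness of objective} and Assumption~\ref{assump: strong convexity}: $\|w^{j+1}_{t,\star}\|\le \mu_F^{-1}\|\nabla J^H(\theta^{j+1}_t)\|\le \tfrac{GR}{\mu_F(1-\gamma)^2}$. For $\xi$, the stochastic gradient at $w^{j+1}_{t,\star}$ is $g_\star=\big((w^{j+1}_{t,\star})^\top\nabla_\theta\log\pi_{\theta^{j+1}_t}(a\given s)\big)\nabla_\theta\log\pi_{\theta^{j+1}_t}(a\given s)-u^{j+1}_t$; using $\|\nabla_\theta\log\pi_{\theta^{j+1}_t}\|\le G$, the bound on $\|w^{j+1}_{t,\star}\|$, a Cauchy--Schwarz split of $\|g_\star\|^2$, and $\nabla^2 l\succeq\mu_F I_d$, one may take $\xi^2$ of the order $\mu_F^{-1}\big(G^4\|w^{j+1}_{t,\star}\|^2+\EE\|u^{j+1}_t\|^2\big)$. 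Thus everything reduces to a uniform (in $t$) bound $\EE\|u^{j+1}_t\|^2=\cO\big((\tfrac{GR}{(1-\gamma)^2})^2\big)$.

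Obtaining this last bound is the main obstacle, since $u^{j+1}_t$ is defined recursively and its deviation from $\nabla J^H(\theta^{j+1}_t)$ is governed by the earlier inner step sizes $\|\theta^{j+1}_l-\theta^{j+1}_{l-1}\|=\eta\|w^{j+1}_{l-1}\|$, which are only as small as the accuracy of the \emph{previous} SGD calls permits. I would therefore run an induction over $t$ inside each epoch: assuming $\EE\|w^{j+1}_l-w^{j+1}_{l,\star}\|^2\le\varepsilon'$ for all $l<t$, we get $\EE\|w^{j+1}_l\|^2\le 2\varepsilon'+2\big(\tfrac{GR}{\mu_F(1-\gamma)^2}\big)^2$, which under $\varepsilon'\le\tfrac{2}{\mu_F^2}\big(\tfrac{GR}{(1-\gamma)^2}\big)^2$ is $\cO\big(\mu_F^{-2}(\tfrac{GR}{(1-\gamma)^2})^2\big)$. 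Plugging these into Lemma~\ref{lem: nabla J and v}, using the assumption $\tfrac{C_\gamma m}{B}\,2\eta^2\le\tfrac13\mu_F^2$ to tame the sum and $\tfrac{\sigma^2}{N}\le\big(\tfrac{GR}{(1-\gamma)^2}\big)^2$ for the residual term, bounds $\EE\|u^{j+1}_t-\nabla J^H(\theta^{j+1}_t)\|^2$ by $\cO\big((\tfrac{GR}{(1-\gamma)^2})^2\big)$, hence also $\EE\|u^{j+1}_t\|^2$ after adding $2\|\nabla J^H(\theta^{j+1}_t)\|^2$. This feeds back into the $\xi$-bound of the previous paragraph, and Bach--Moulines plus strong convexity then certify $\EE\|w^{j+1}_t-w^{j+1}_{t,\star}\|^2\le\varepsilon'$ provided $T=\cO\big((1-\gamma)^{-4}/\varepsilon'\big)$, closing the induction. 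Finally, since each stochastic gradient of $l$ requires a single draw $(s,a)\sim\nu^{j+1}_t$, i.e.\ $\tfrac{1}{1-\gamma}$ state--action pairs in expectation (one effective trajectory) via Algorithm~3 of \cite{agarwal2019optimality}, $T$ iterations cost the stated number of trajectories. The delicate point is to order the induction so that the guarantee at step $t$ invokes only the guarantees at steps $<t$, and to check that the three displayed assumptions are exactly what keep the recursion from blowing up.
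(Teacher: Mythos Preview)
Your overall strategy is the same as the paper's: apply the Bach--Moulines SGD bound to the strongly convex quadratic $l$, control the noise parameter $\xi$ and the norm of the minimizer, and propagate the guarantee through an induction over $t$ within each epoch using Lemma~\ref{lem: nabla J and v}. There is, however, one genuine gap, and it originates from a typo in the statement you were given.

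The objective $l$ has linear term $-\langle w,u^{j+1}_t\rangle$ (the stray $\eta$ should be dropped), so its true minimizer is $F_\rho^{-1}(\theta^{j+1}_t)\,u^{j+1}_t$, \emph{not} $F_\rho^{-1}(\theta^{j+1}_t)\nabla J^H(\theta^{j+1}_t)$ as written in the proposition. The paper's own proof and the comments in Algorithm~\ref{alg: SRVR-NPG} work with $w^{j+1}_{t,\star}=F_\rho^{-1}(\theta^{j+1}_t)u^{j+1}_t$ throughout. With the correct minimizer, your ``immediate'' bound $\|w^{j+1}_{t,\star}\|\le \mu_F^{-1}\|\nabla J^H(\theta^{j+1}_t)\|$ is no longer available, since $\|u^{j+1}_t\|$ is not deterministically controlled. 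Consequently your induction does not close as written: the step ``$\EE\|w^{j+1}_l\|^2\le 2\varepsilon'+2\big(\tfrac{GR}{\mu_F(1-\gamma)^2}\big)^2$'' silently invokes a bound on $\EE\|w^{j+1}_{l,\star}\|^2$ that is \emph{not} part of your induction hypothesis and is not otherwise justified.

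The fix is exactly what the paper does: take as the induction hypothesis $\EE\|w^{j+1}_{t',\star}\|^2\le \tfrac{4}{\mu_F^2}\big(\tfrac{GR}{(1-\gamma)^2}\big)^2$ for all $t'<t$. This supplies both $\EE\|w^{j+1}_{t',\star}\|$ and $\xi$ for Bach--Moulines, hence $\EE\|w^{j+1}_{t'}-w^{j+1}_{t',\star}\|^2\le\varepsilon'$ after $T$ inner steps, hence a bound on $\EE\|w^{j+1}_{t'}\|^2$; then Lemma~\ref{lem: nabla J and v} together with the three displayed assumptions bounds $\EE\|u^{j+1}_t\|^2$, and therefore $\EE\|w^{j+1}_{t,\star}\|^2\le\mu_F^{-2}\EE\|u^{j+1}_t\|^2$, which recovers the hypothesis at step $t$. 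All of the ingredients are already in your sketch; only the induction hypothesis needs to be strengthened so that it is self-propagating.
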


\begin{proof}[Proof of Proposition \ref{prop: SGD convergence SRVR-NPG}]


Recall that we are applying SGD as in Procedure \ref{alg: SGD for SRVR-NPG subproblem} to solve the SRVR-NPG subproblem \eqref{equ: SRVR-NPG subproblem}.

Let us focus on $t=0$, where $u^{j+1}_0 = \frac{1}{N}\sum_{i=1}^N g(\tau^H_i|\theta^{j+1}_0)$. As a result, $\EE[u^{j+1}_0]=\nabla  J^H(\theta^{j+1}_0)$ and $\text{Var}(u^{j+1}_0)\leq \frac{\sigma^2}{N}$. Therefore,
\begin{align*}
\EE[\|w^{j+1}_{0,\star}\|^2]&=\EE[\|F^{-1}(\theta^{j+1}_0)u^{j+1}_0\|^2]\\
&\leq \frac{1}{\mu_F^2}\EE[\|u^{j+1}_0\|]\leq \frac{1}{\mu_F^2}\left(\frac{GR}{(1-\gamma)^2}\right)^2+ \frac{1}{\mu_F^2}\frac{\sigma^2}{N}\leq \frac{4}{\mu_F^2}\left(\frac{GR}{(1-\gamma)^2}\right)^2.
\end{align*}
Recall from \eqref{equ: subproblem stochastic gradient of SRVR-NPG} that a stochastic gradient $\nabla l (w_t)$ is given by
\begin{align*}
{\nabla} l(w) =\left(w^T\nabla_\theta \log\pi_{\theta^{j+1}_0}(a|s)\right)\nabla_{\theta}\log\pi_{\theta^{j+1}_0}(a|s)-u^{j+1}_0.
\end{align*}
Here, $s,a\sim\nu^{\pi_{\theta}}$.

Therefore, By Theorem 1 of \cite{bach2013non}, we know that in order  to reach $\mathbb{E}[\|w^{j+1}_0-(w^{j+1}_0)_{\star}\|^2]\leq \varepsilon'$, we need
\[
\EE[l(w_{\text{out}})-l^{\star}]\leq \frac{2(\xi\sqrt{d}+G\|w^{j+1}_{0, \star}\|)^2}{T}\leq\frac{\mu_F}{2}\varepsilon',
\] 
where $l^{\star}$ is the minimum of $l(w)$, and $\xi$ is defined such that the stochastic gradient $g_{\star}$ at the solution $w^{j+1}_{0,\star}$ satisfies
\[
\EE [g_{\star}{(g_{\star})^T}] \preccurlyeq \xi^2 \nabla^2_{w} l(w).
\]
Similar as Proposition \ref{prop: SGD convergence NPG}, we know that $\xi$ can be chosen by
\[
\xi^2 = \frac{\left(\frac{{2}}{\mu_F}\frac{GR}{(1-\gamma)^2}G^2+\frac{{2}GR}{(1-\gamma)^2}\right)^2}{\mu_F}.
\]
As a result, the number of iterations, $T$, should be
\[
T = \frac{4\left(\frac{\frac{{2}}{\mu_F}\frac{GR}{(1-\gamma)^2}G^2+\frac{{2}GR}{(1-\gamma)^2}}{\sqrt{\mu_F}}\sqrt{d}+\frac{{2}G^2R}{\mu_F(1-\gamma)^2}\right)^2}{\mu_F\varepsilon'} = \cO\left(\frac{1}{(1-\gamma)^4\varepsilon'}\right).
\]
Since each stochastic gradient of $l(w)$ only needs to sample a state-action pair, this is equivalent to sampling $\cO\left(\frac{1}{(1-\gamma)^3\varepsilon'}\right)$ trajectories.

Now, let us turn to $t\geq 1$. $u^{j+1}_t$ is an unbiased estimate of $\nabla J (\theta^{j+1}_t)$, and its variance is bounded as in Lemma \ref{lem: nabla J and v}. Therefore,
\begin{align}
\label{equ: NPG induction}
\begin{split}
\EE[\|w^{j+1}_{t,\star})\|^2]&=\EE[\|F^{-1}(\theta^{j+1}_t)u^{j+1}_t\|^2]\leq \frac{1}{\mu_F^2}\EE[\|u^{j+1}_t\|]\\
&= \frac{1}{\mu_F^2}\EE[\|\nabla  J^H(\theta^{j+1}_t)\|^2]+\frac{1}{\mu_F^2}\EE[\|u^{j+1}_t-\nabla  J^H(\theta^{j+1}_t)\|^2]\\
&\leq \frac{1}{\mu_F^2}\left(\frac{GR}{(1-\gamma)^2}\right)^2+ \frac{1}{\mu_F^2}\left(\frac{C_{\gamma}}{B}\sum_{l=1}^t \mathbb{E}\|\theta^{j+1}_{l}-\theta^{j+1}_{l-1}\|^2+\frac{\sigma^2}{N}\right)\\
&= \frac{1}{\mu_F^2}\left(\frac{GR}{(1-\gamma)^2}\right)^2+ \frac{1}{\mu_F^2}\left(\frac{C_{\gamma}}{B}\sum_{l=1}^t \eta^2\mathbb{E}\|w^{j+1}_{l-1}\|^2+\frac{\sigma^2}{N}\right)\\
&\leq \frac{1}{\mu_F^2}\left(\frac{GR}{(1-\gamma)^2}\right)^2\\
&\,\,\, + \frac{1}{\mu_F^2}\left(\frac{C_{\gamma}}{B}\sum_{l=1}^t 2\eta^2\left(\mathbb{E}\|w^{j+1}_{l-1,\star}\|^2+\mathbb{E}\|w^{j+1}_{l-1,\star}-w^{j+1}_{l-1}\|^2\right)+\frac{\sigma^2}{N}\right).
\end{split}
\end{align}



Now, we are ready to prove the desired results by induction. 

Assume that for all $t'<t$, we have 
\[
\EE[\|w^{j+1}_{t',\star})\|]\leq \frac{4}{\mu_F^2}\left(\frac{GR}{(1-\gamma)^2}\right)^2,
\]
and we have applied 
\[
T = \frac{4\left(\frac{\frac{{2}}{\mu_F}\frac{GR}{(1-\gamma)^2}G^2+\frac{{2}GR}{(1-\gamma)^2}}{\sqrt{\mu_F}}\sqrt{d}+\frac{{2}G^2R}{\mu_F(1-\gamma)^2}\right)^2}{\mu_F\varepsilon'} = \cO\left(\frac{1}{(1-\gamma)^4\varepsilon'}\right)
\]
iterations of SGD as in Procedure \ref{alg: SGD for SRVR-NPG subproblem}. 

Similar to the case of $t=0$, we know that this yields
\[
\mathbb{E}[\|w^{j+1}_{t'}-w^{j+1}_{t',\star}\|^2]\leq \varepsilon'.
\]
Then, by \eqref{equ: NPG induction}, we have
\begin{align*}
\EE[\|w^{j+1}_{t,\star}\|^2]
&\leq \frac{1}{\mu_F^2}\left(\frac{GR}{(1-\gamma)^2}\right)^2\\
&\,\,\, + \frac{1}{\mu_F^2}\left(\frac{C_{\gamma}}{B}\sum_{l=1}^t 2\eta^2\left(\mathbb{E}\|w^{j+1}_{l-1,\star}\|^2+\mathbb{E}\|w^{j+1}_{l-1,\star}-w^{j+1}_{l-1}\|^2\right)+\frac{\sigma^2}{N}\right)\\
&\leq \frac{1}{\mu_F^2}\left(\frac{GR}{(1-\gamma)^2}\right)^2+ \frac{1}{\mu_F^2}\left(\frac{C_{\gamma}m}{B} 2\eta^2\left(\frac{4}{\mu_F^2}\left(\frac{GR}{(1-\gamma)^2}\right)^2 + \varepsilon'\right)+\frac{\sigma^2}{N}\right)\\
&= \frac{1}{\mu_F^2}\left(\frac{GR}{(1-\gamma)^2}\right)^2+ \frac{1}{\mu_F^2}\frac{C_{\gamma}m}{B} 2\eta^2\frac{4}{\mu_F^2}\left(\frac{GR}{(1-\gamma)^2}\right)^2 \\
&\,\,\, + \frac{1}{\mu_F^2}\frac{C_{\gamma}m}{B} 2\eta^2\varepsilon'+\frac{1}{\mu_F^2}\frac{\sigma^2}{N}\\
&\leq \frac{4}{\mu_F^2}\left(\frac{GR}{(1-\gamma)^2}\right)^2.
\end{align*}
As a result, we can apply 
\[
T = \frac{4\left(\frac{\frac{{2}}{\mu_F}\frac{GR}{(1-\gamma)^2}G^2+\frac{{2}GR}{(1-\gamma)^2}}{\sqrt{\mu_F}}\sqrt{d}+\frac{{2}G^2R}{\mu_F(1-\gamma)^2}\right)^2}{\mu_F\varepsilon'} = \cO\left(\frac{1}{(1-\gamma)^4\varepsilon'}\right)
\]
iterations of SGD as in Procedure \ref{alg: SGD for SRVR-NPG subproblem} so that
\[
\mathbb{E}[\|w^{j+1}_{t}-w^{j+1}_{t,\star}\|^2]\leq \varepsilon'.
\]
Since each stochastic gradient of $l(w)$ has a cost of $\frac{1}{1-\gamma}$ (see App. \ref{app: sampling}), this is equivalent to sample 
\[
\frac{4\left(\frac{\frac{{2}}{\mu_F}\frac{GR}{(1-\gamma)^2}G^2+\frac{{2}GR}{(1-\gamma)^2}}{\sqrt{\mu_F}}\sqrt{d}+\frac{{2}G^2R}{\mu_F(1-\gamma)^2}\right)^2}{\mu_F\varepsilon'} = \cO\left(\frac{1}{(1-\gamma)^4\varepsilon'}\right)
\]
trajectories.
\end{proof}


We are now ready to prove Theorem \ref{thm: SRVR-NPG stationary convergence}.
\begin{proof}[Proof of Theorem \ref{thm: SRVR-NPG stationary convergence}]
Line 9 of Algorithm \ref{alg: SRVR-NPG} reads
\begin{align*}
w^{j+1}_{t}\approx \argmin_{w}\{\mathbb{E}_{(s,a)\sim \nu^{j+1}_t}[w^T\nabla_{\theta}\log \pi_{\theta^{j+1}_t}(s,a)]^2-2\langle \eta w, u^{j+1}_t\rangle\}.
\end{align*}
And we want to apply SGD as in Procedure \ref{alg: SGD for SRVR-NPG subproblem} to obtain a $w^{j+1}_{t}$ that satisfies
\begin{align}
\label{equ: subproblem accuracy SRVR-NPG}
\EE[\|w^{j+1}_{t}-F^{-1}(\theta^{j+1}_t)u^{j+1}_t\|^2]\leq \frac{\varepsilon}{3\left(\frac{8G^2\mu_F}{4}+\frac{8G^4}{4}\right)}.
\end{align}
Recall that the parameters $S, m, B$ and $N$ are chosen as
\begin{align*}
S &= \frac{24G^2(J^{\star}- J^H(\theta_0))}{\eta \varepsilon^{0.5}},\\
m & = \frac{1}{\varepsilon^{0.5}},\\
B &= \left(\frac{\eta}{\mu_F}+\frac{\eta}{4G^2}\right)\frac{4C_{\gamma}m}{L_J\varepsilon^{0.25}},\\
N &= 3\left(\frac{8G^2}{\mu_F}+2\right)\frac{\sigma^2}{\varepsilon}.
\end{align*}
Since
\begin{align*}
\varepsilon\leq \min\Big\{ & 3\left(\frac{8G^2}{\mu_F}+2\right)\left(\frac{GR}{(1-\gamma)^2}\right)^2, 3\left(\frac{8G^2}{4}+\frac{8G^4}{4\mu_F}\right)\frac{2}{\mu_F}\left(\frac{GR}{(1-\gamma)^2}\right)^2, \\
& \left(\frac{2}{3\eta L_J}(\mu_F+\frac{\mu_F^2}{4G^2})\right)^4 \Big\},
\end{align*}
the requirements of Proposition \ref{prop: SGD convergence SRVR-NPG} are satisfied:
\begin{align*}
\frac{\sigma^2}{N}& = \frac{\varepsilon}{3\left(\frac{8G^2}{\mu_F}+2\right)}\leq  \left(\frac{GR}{(1-\gamma)^2}\right)^2,\\
\varepsilon' & = \frac{\varepsilon}{3\left(\frac{8G^2\mu_F}{4}+\frac{8G^4}{4}\right)}\leq  \frac{2}{\mu_F^2}\left(\frac{GR}{(1-\gamma)^2}\right)^2\\
\frac{C_\gamma m}{B}2\eta^2& =\frac{2\eta^2\varepsilon^{0.25}}{\left(\frac{\eta}{\mu_F}+\frac{\eta}{4G^2}\right)\frac{4}{L_J}} \leq \frac{1}{3}\mu_F^2. 
\end{align*}
By applying Proposition \ref{prop: SGD convergence SRVR-NPG}, we know that in order to have \eqref{equ: subproblem accuracy SRVR-NPG}, one needs to sample
\[
\frac{4\left(\frac{\frac{\sqrt{2}}{\mu_F}\frac{GR}{(1-\gamma)^2}G^2+\frac{\sqrt{2}GR}{(1-\gamma)^2}}{\sqrt{\mu_F}}\sqrt{d}+\frac{\sqrt{2}G^2R}{\mu_F(1-\gamma)^2}\right)^2}{\mu_F\varepsilon'} = \cO\left(\frac{1}{(1-\gamma)^4\varepsilon}\right)
\]
trajectories. By \eqref{equ: subproblem accuracy SRVR-NPG} we know that 
\begin{align}
\label{equ: point accuracy SRVR-NPG}
\EE[\|\theta^{j+1}_{t+1}-\theta^{j+1}_{t+1, \star}\|^2]\leq \frac{\varepsilon}{3\left(\frac{8G^2\mu_F}{4\eta^2}+\frac{8G^4}{4\eta^2}\right)}
\end{align}
where $\theta^{j+1}_{t+1, \star} = \theta^{j+1}_t+\eta F^{-1}(\theta^{j+1}_t)u^{j+1}_t$. 

On the other hand, we have

\begin{align*}
\begin{split}
 J^H(\theta^{j+1}_{t+1})&\geq  J^H(\theta^{j+1}_t)+\langle \nabla  J^H(\theta^{j+1}_t), \theta^{j+1}_{t+1}-\theta^{j+1}_t\rangle-\frac{L_J}{2}\|\theta^{j+1}_{t+1}-\theta^{j+1}_t\|^2\\
&=  J^H(\theta^{j+1}_t)+\langle \nabla  J^H(\theta^{j+1}_t) - u^{j+1}_t, \theta^{j+1}_{t+1}-\theta^{j+1}_t\rangle \\
&\,\,\, + \langle u^{j+1}_t, \theta^{j+1}_{t+1}-\theta^{j+1}_t\rangle -\frac{L_J}{2}\|\theta^{j+1}_{t+1}-\theta^{j+1}_t\|^2\\
&\geq  J^H(\theta^{j+1}_t)-\frac{\eta}{\mu_F}\|\nabla  J^H(\theta^{j+1}_t) - u^{j+1}_t\|^2-\frac{\mu_F}{4\eta}\|\theta^{j+1}_{t+1}-\theta^{j+1}_t\|^2\\
&\,\,\, +\langle u^{j+1}_t, \theta^{j+1}_{t+1}-\theta^{j+1}_t\rangle -\frac{L_J}{2}\|\theta^{j+1}_{t+1}-\theta^{j+1}_t\|^2
\end{split}
\end{align*}
where we have applied Lemma \ref{lem: smoothness of objective} in the first inequality, and Cauchy-Schwartz in the second one. 

Rearranging gives 
\begin{align*}
 J^H (\theta^{j+1}_{t+1})&\geq   J^H(\theta^{j+1}_t)-\frac{\eta}{\mu_F}\|\nabla  J^H(\theta^{j+1}_t) - u^{j+1}_t\|^2-\frac{\mu_F}{4\eta}\|\theta^{j+1}_{t+1}-\theta^{j+1}_t\|^2\\
&\,\,\, +\frac{1}{2}\langle u^{j+1}_t, \theta^{j+1}_{t+1, \star}-\theta^{j+1}_t\rangle \\
&\,\,\,+\frac{1}{2}\langle u^{j+1}_t, \theta^{j+1}_{t+1}-\theta^{j+1}_{t+1,\star}\rangle + \frac{1}{2}\langle \frac{1}{\eta}F(\theta^{j+1}_t)(\theta^{j+1}_{t+1, \star}-\theta^{j+1}_t), \theta^{j+1}_{t+1}-\theta^{j+1}_t\rangle \\
&\,\,\, - \frac{L_J}{2}\|\theta^{j+1}_{t+1}-\theta^{j+1}_t\|^2\\
&=  J^H(\theta^{j+1}_t)-\frac{\eta}{\mu_F}\|\nabla  J^H(\theta^{j+1}_t) - u^{j+1}_t\|^2\\
&\,\,\, -\frac{\mu_F}{4\eta}\|\theta^{j+1}_{t+1}-\theta^{j+1}_t\|^2+\frac{1}{2}\langle u^{j+1}_t, \eta F^{-1}(\theta^{j+1}_t)u^{j+1}_t\rangle \\
&\,\,\,+\frac{1}{2}\langle u^{j+1}_t, \theta^{j+1}_{t+1}-\theta^{j+1}_{t+1,\star}\rangle +\frac{1}{2}\langle \frac{1}{\eta}F(\theta^{j+1}_t)(\theta^{j+1}_{t+1, \star}-\theta^{j+1}_t), \theta^{j+1}_{t+1}-\theta^{j+1}_t\rangle\\
&\,\,\, -\frac{L_J}{2}\|\theta^{j+1}_{t+1}-\theta^{j+1}_t\|^2,	
\end{align*}


Applying $F^{-1}(\theta) \succeq \frac{1}{G^2} I$ on the first inner product, and Cauchy-Schwartz on the second inner product term leads to
\begin{align*}
\begin{split}
 J^H(\theta^{j+1}_{t+1})
&\geq  J^H(\theta^{j+1}_t)-\frac{\eta}{\mu_F}\|\nabla  J^H(\theta^{j+1}_t) - u^{j+1}_t\|^2\\
&\,\,\, -\frac{\mu_F}{4\eta}\|\theta^{j+1}_{t+1}-\theta^{j+1}_t\|^2+\frac{\eta}{2G^2}\|u^{j+1}_t\|^2  \\
&\,\,\,-\frac{\eta}{4G^2} \|u^{j+1}_t\|^2 - \frac{G^2}{4\eta}\|\theta^{j+1}_{t+1}-\theta^{j+1}_{t+1,\star}\|^2\\
&\,\,\, +\frac{1}{2}\langle \frac{1}{\eta}F(\theta^{j+1}_t)(\theta^{j+1}_{t+1, \star}-\theta^{j+1}_t), \theta^{j+1}_{t+1}-\theta^{j+1}_t\rangle-\frac{L_J}{2}\|\theta^{j+1}_{t+1}-\theta^{j+1}_t\|^2\\
&=  J^H(\theta^{j+1}_t)-\frac{\eta}{\mu_F}\|\nabla  J^H(\theta^{j+1}_t) - u^{j+1}_t\|^2\\
&\,\,\, -\left(\frac{\mu_F}{4\eta}+\frac{L_J}{2}\right)\|\theta^{j+1}_{t+1}-\theta^{j+1}_t\|^2+\frac{\eta}{4G^2}\|u^{j+1}_t\|^2  \\
&\,\,\,+\frac{1}{2}\langle \frac{1}{\eta}F(\theta^{j+1}_t)(\theta^{j+1}_{t+1, \star}-\theta^{j+1}_t), \theta^{j+1}_{t+1}-\theta^{j+1}_t\rangle- \frac{G^2}{4\eta}\|\theta^{j+1}_{t+1}-\theta^{j+1}_{t+1,\star}\|^2\\
&=  J^H(\theta^{j+1}_t)\\
&\,\,\, -\frac{\eta}{\mu_F}\|\nabla  J^H(\theta^{j+1}_t) - u^{j+1}_t\|^2-\left(\frac{\mu_F}{4\eta}+\frac{L_J}{2}\right)\|\theta^{j+1}_{t+1}-\theta^{j+1}_t\|^2+\frac{\eta}{4G^2}\|u^{j+1}_t\|^2  \\
&\,\,\,+\frac{1}{2}\langle \frac{1}{\eta}F(\theta^{j+1}_t)(\theta^{j+1}_{t+1}-\theta^{j+1}_t), \theta^{j+1}_{t+1}-\theta^{j+1}_t\rangle \\
&\,\,\, +\frac{1}{2}\langle \frac{1}{\eta}F(\theta^{j+1}_t)(\theta^{j+1}_{t+1, \star}-\theta^{j+1}_{t+1}), \theta^{j+1}_{t+1}-\theta^{j+1}_t\rangle\\
&\,\,\,- \frac{G^2}{4\eta}\|\theta^{j+1}_{t+1}-\theta^{j+1}_{t+1,\star}\|^2.
\end{split}
\end{align*}
Applying $\|\nabla  J^H(\theta^{j+1}_t)\|^2\leq 2\|\nabla  J^H(\theta^{j+1}_t)-u^{j+1}_t\|^2+2\|u^{j+1}_t\|^2$ and $F(\theta^{j+1}_t) \succeq \mu_F I_d$ yields
\begin{align}
\label{equ: V ascent SRVR-NPG}
\begin{split}
 J^H(\theta^{j+1}_{t+1}) &\geq  J^H(\theta^{j+1}_t)-\left(\frac{\eta}{\mu_F}+\frac{\eta}{4G^2}\right)\|\nabla  J^H(\theta^{j+1}_t) - u^{j+1}_t\|^2\\
&\,\,\, -\left(\frac{\mu_F}{4\eta}+\frac{L_J}{2}\right)\|\theta^{j+1}_{t+1}-\theta^{j+1}_t\|^2+\frac{\eta}{8G^2}\|\nabla  J^H(\theta^{j+1}_t)\|^2  \\
&\,\,\,+\frac{\mu_F}{2\eta}\| \theta^{j+1}_{t+1}-\theta^{j+1}_t\|^2 +\frac{1}{2}\langle \frac{1}{\eta}F(\theta^{j+1}_t)(\theta^{j+1}_{t+1, \star}-\theta^{j+1}_{t+1}), \theta^{j+1}_{t+1}-\theta^{j+1}_t\rangle\\
&\,\,\,- \frac{G^2}{4\eta}\|\theta^{j+1}_{t+1}-\theta^{j+1}_{t+1,\star}\|^2\\
&\geq  J^H(\theta^{j+1}_t)-\left(\frac{\eta}{\mu_F}+\frac{\eta}{4G^2}\right)\|\nabla  J^H(\theta^{j+1}_t) - u^{j+1}_t\|^2\\
&\,\,\, -\left(\frac{\mu_F}{4\eta}+\frac{L_J}{2}\right)\|\theta^{j+1}_{t+1}-\theta^{j+1}_t\|^2+\frac{\eta}{8G^2}\|\nabla  J^H(\theta^{j+1}_t)\|^2  \\
&\,\,\,+\frac{\mu_F}{2\eta}\| \theta^{j+1}_{t+1}-\theta^{j+1}_t\|^2 -\frac{\mu_F^2}{2\mu_F\eta}\|\theta^{j+1}_{t+1, \star}-\theta^{j+1}_{t+1}\|^2- \frac{\mu_F}{8\eta}\|\theta^{j+1}_{t+1}-\theta^{j+1}_t\|^2\\
&\,\,\,- \frac{G^2}{4\eta}\|\theta^{j+1}_{t+1}-\theta^{j+1}_{t+1,\star}\|^2\\
&=  J^H(\theta^{j+1}_t)-\left(\frac{\eta}{\mu_F}+\frac{\eta}{4G^2}\right)\|\nabla  J^H(\theta^{j+1}_t) - u^{j+1}_t\|^2\\
&\,\,\, +\left(\frac{\mu_F}{8\eta}-\frac{L_J}{2}\right)\|\theta^{j+1}_{t+1}-\theta^{j+1}_t\|^2+\frac{\eta}{8G^2}\|\nabla  J^H(\theta^{j+1}_t)\|^2  \\
&\,\,\,-(\frac{\mu_F}{4\eta}+\frac{G^2}{4\eta})\|\theta^{j+1}_{t+1, \star}-\theta^{j+1}_{t+1}\|^2.
\end{split}
\end{align}

From Lemma \ref{lem: nabla J and v}, we further know that 

\begin{align*}
\EE [ J^H(\theta^{j+1}_{t+1})]&\geq \EE[ J^H(\theta^{j+1}_t)]-\left(\frac{\eta}{\mu_F}+\frac{\eta}{4G^2}\right)\left(\frac{C_{\gamma}}{B}\sum_{t=0}^{m-1} \mathbb{E}\|\theta^{j+1}_{t+1}-\theta^{j+1}_{t}\|^2+\frac{\sigma^2}{N}\right)\\
&\,\,\, +\left(\frac{\mu_F}{8\eta}-\frac{L_J}{2}\right)\EE\|\theta^{j+1}_{t+1}-\theta^{j+1}_t\|^2  \\
&\,\,\,+\frac{\eta}{8G^2}\EE\|\nabla  J^H(\theta^{j+1}_t)\|^2-(\frac{\mu_F}{4\eta}+\frac{G^2}{4\eta})\EE\|\theta^{j+1}_{t+1, \star}-\theta^{j+1}_{t+1}\|^2.
\end{align*}
Telescoping for $s=0,1,...,S-1$ and $t=0,1,...,m-1$ and dividing by $Sm$ gives 
\begin{align}
\label{equ: SRVR-NPG final}
\begin{split}
&\frac{8G^2}{\eta}\left(\frac{\mu_F}{8\eta} - \frac{L_J}{2}-\left(\frac{\eta}{\mu_F}+\frac{\eta}{4G^2}\right)\frac{C_{\gamma}m}{B}\right)\frac{1}{Sm}\sum_{s=0}^{S-1}\sum_{t=0}^{m-1}\EE\|\theta^{j+1}_{t+1}-\theta^{j+1}_t\|^2\\
&\,\,\, +\frac{1}{Sm} \sum_{s=0}^{S-1}\sum_{t=0}^{m-1} \EE\|\nabla  J^H(\theta^{j+1}_t)\|^2\\ 
&\leq \frac{8G^2}{\eta}\frac{J^{\star}- J^H(\theta_0)}{Sm} + \left(\frac{8G^2}{\mu_F}+2\right)\frac{\sigma^2}{N}+\left(\frac{8G^2\mu_F}{4\eta^2}+\frac{8G^4}{4\eta^2}\right)\frac{1}{Sm}\sum_{s=0}^{S-1}\sum_{t=0}^{m-1}\EE\|\theta^{j+1}_{t+1, \star}-\theta^{j+1}_{t+1}\|^2.
\end{split}
\end{align}
Let us first show that the first term on the left hand side of \eqref{equ: SRVR-NPG final} is non-negative. In fact, from $\eta = \frac{\mu_F}{8L_J}$ and $B = \left(\frac{\eta}{\mu_F}+\frac{\eta}{4G^2}\right)\frac{4C_{\gamma}m}{L_J\varepsilon^{0.25}}$ we have
\[
\frac{8G^2}{\eta}\left(\frac{\mu_F}{8\eta} - \frac{L_J}{2}-\left(\frac{\eta}{\mu_F}+\frac{\eta}{4G^2}\right)\frac{C_{\gamma}m}{B}\right)\geq\frac{16G^2L_J^2}{\mu_F}>0.
\]
Therefore, in order to have 
\[
\frac{1}{Sm} \sum_{s=0}^{S-1}\sum_{t=0}^{m-1} \EE\|\nabla  J^H(\theta^{j+1}_t)\|^2\leq \varepsilon,
\]
we can set all the three terms on the right hand side of \eqref{equ: SRVR-NPG final} to be $\frac{\varepsilon}{3}$, which gives
\begin{align*}
Sm & = \frac{24G^2( J^{H, \star}- J^H(\theta_0))}{\eta \varepsilon},\\
N &= 3\left(\frac{8G^2}{\mu_F}+2\right)\frac{\sigma^2}{\varepsilon}\\
\EE[\|\theta^{j+1}_{t+1}-\theta^{j+1}_{t+1, \star}\|^2]&\leq \frac{\varepsilon}{3\left(\frac{8G^2\mu_F}{4\eta^2}+\frac{8G^4}{4\eta^2}\right)},
\end{align*}
where the last requirement is satisfied according to \eqref{equ: point accuracy SRVR-NPG}. 

For the parameters $S, m, B$, and $N$, we have 
\begin{align*}
S &= \frac{24G^2( J^{H, \star}- J^H(\theta_0))}{\eta \varepsilon^{0.5}} = \cO\left(\frac{1}{(1-\gamma)^{2}\varepsilon^{0.5}}\right),\\
m & = \frac{1}{\varepsilon^{0.5}},\\
B &= \left(\frac{\eta}{\mu_F}+\frac{\eta}{4G^2}\right)\frac{4C_{\gamma}m}{L_J\varepsilon^{0.25}} = \cO\left(\frac{W}{(1-\gamma) \varepsilon^{0.75}}\right),\\
N &= 3\left(\frac{8G^2}{\mu_F}+2\right)\frac{\sigma^2}{\varepsilon}=\cO\left(\frac{\sigma^2}{\varepsilon}\right),
\end{align*}
where we have applied the definition of $C_{\gamma}$ in Lemma \ref{lem: nabla J and v} in the third equality.

Therefore in total, the number of trajectories required by SRVR-NPG to reach $\varepsilon-$stationarity is 
\begin{align*}
& S\left(N+mB+(m+1)\frac{4\left((\frac{2 G^2R}{\mu_F(1-\gamma)^2}+\frac{2}{1-\gamma})\sqrt{d}+\frac{2G^2R}{\mu_F(1-\gamma)^2}\right)^2}{\mu_F\frac{\varepsilon}{3\left(\frac{8G^2\mu_F}{4}+\frac{8G^4}{4}\right)}}\right) \\
& = \cO\left(\frac{\sigma^2}{(1-\gamma)^{2}\varepsilon^{1.5}}+\frac{W}{(1-\gamma)^{3}\varepsilon^{1.75}}+\frac{1}{(1-\gamma)^6\varepsilon^{2}}\right).
\end{align*}

\end{proof}

\section{Proof of Proposition \ref{prop: global convergence}}
\label{app: global}
In this section, we proceed to prove Proposition \ref{prop: global convergence}, which establishes a general global convergence result on policy gradient methods of the form $\theta^{k+1} = \theta^k +\eta w^k$.  



\begin{proof}
First, by the $M-$smoothness of score function (see Assumption \ref{assump: conditions on score function}), we know that
\begin{align*}
&\mathbb{E}_{s\sim d^{\pi^{\star}}_{\rho}} \left[\text{KL}\left(\pi^{\star}(\cdot\given s)|| \pi_{\theta^{k}}(\cdot\given s)\right)-\text{KL}\left(\pi^{\star}(\cdot\given s)|| \pi_{\theta^{k+1}}(\cdot \given s)\right)\right]\\
& = \mathbb{E}_{s\sim d^{\pi^{\star}}_{\rho}}\mathbb{E}_{a\sim \pi^{\star} (\cdot \given s)}\left[\log\frac{\pi_{\theta^{k+1}}(a\given s)}{\pi_{\theta^{k}}(a\given s)}\right]\\
&\geq \mathbb{E}_{s\sim d^{\pi^{\star}}_{\rho}}\mathbb{E}_{a\sim \pi^{\star} (\cdot \given s)} [\nabla_{\theta}\log \pi_{\theta^k}(a\given s)\cdot(\theta^{k+1}-\theta^k)]-\frac{M}{2}\|\theta^{k+1}-\theta^k\|^2\\
&= \eta \mathbb{E}_{s\sim d^{\pi^{\star}}_{\rho}}\mathbb{E}_{a\sim \pi^{\star} (\cdot \given s)} [\nabla_{\theta}\log \pi_{\theta^k}(a\given s)\cdot w^k]-\frac{M\eta^2}{2}\|w^k\|^2\\
&= \eta \mathbb{E}_{s\sim d^{\pi^{\star}}_{\rho}}\mathbb{E}_{a\sim \pi^{\star} (\cdot \given s)} [\nabla_{\theta}\log \pi_{\theta^k}(a\given s)\cdot w^k_{\star}] \\
&\,\,\, + \eta \mathbb{E}_{s\sim d^{\pi^{\star}}_{\rho}}\mathbb{E}_{a\sim \pi^{\star} (\cdot \given s)} [\nabla_{\theta}\log \pi_{\theta^k}(a\given s)\cdot (w^k - w^k_{\star})] -\frac{M\eta^2}{2}\|w^k\|^2\\
\end{align*}
where we have applied $\text{KL}\left( p||q \right) = \EE_{x\sim p}[-\log\frac{q(x)}{p(x)}]$ in the first step.

On the other hand, by the performance difference lemma \cite{kakade2002approximately} we know that
\begin{align*}
\mathbb{E}_{s\sim d^{\pi^{\star}}_{\rho}}\mathbb{E}_{a\sim \pi^{\star} (\cdot \given s)} [A^{\pi_{\theta^k}}(s,a)] = (1-\gamma) \left(J^{\star}-J(\theta^k)\right).
\end{align*}
Therefore, 
\begin{align*}
&\mathbb{E}_{s\sim d^{\pi^{\star}}_{\rho}} \left[\text{KL}\left(\pi^{\star}(\cdot\given s)|| \pi_{\theta^k}(\cdot\given s)\right)-\text{KL}\left(\pi^{\star}(\cdot\given s)|| \pi_{\theta^{k+1}}(\cdot \given s)\right)\right]\\
&\geq \eta \mathbb{E}_{s\sim d^{\pi^{\star}}_{\rho}}\mathbb{E}_{a\sim \pi^{\star} (\cdot \given s)} [\nabla_{\theta}\log \pi_{\theta^k}(a\given s)\cdot w^k_{\star}] \\
&\,\,\, + \eta \mathbb{E}_{s\sim d^{\pi^{\star}}_{\rho}}\mathbb{E}_{a\sim \pi^{\star} (\cdot \given s)} [\nabla_{\theta}\log \pi_{\theta^k}(a\given s)\cdot (w^k - w^k_{\star})] -\frac{M\eta^2}{2}\|w^k\|^2\\
&= \eta \left(J^{\star}-J(\theta^k)\right)+\eta \mathbb{E}_{s\sim d^{\pi^{\star}}_{\rho}}\mathbb{E}_{a\sim \pi^{\star} (\cdot \given s)} [\nabla_{\theta}\log \pi_{\theta^k}(a\given s)\cdot w^k_{\star} - \frac{1}{1-\gamma}A^{\pi_{\theta^k}}(s,a)]\\
&\,\,\, +\eta \mathbb{E}_{s\sim d^{\pi^{\star}}_{\rho}}\mathbb{E}_{a\sim \pi^{\star} (\cdot \given s)} [\nabla_{\theta}\log \pi_{\theta^k}(a\given s)\cdot (w^k - w^k_{\star})] -\frac{M\eta^2}{2}\|w^k\|^2\\
&= \eta \left(J^{\star}-J(\theta^k)\right)+\eta\frac{1}{1-\gamma} \mathbb{E}_{s\sim d^{\pi^{\star}}_{\rho}}\mathbb{E}_{a\sim \pi^{\star} (\cdot \given s)} [\nabla_{\theta}\log \pi_{\theta^k}(a\given s)\cdot (1-\gamma)w^k_{\star} - A^{\pi_{\theta^k}}(s,a)]\\
&\,\,\, +\eta  \mathbb{E}_{s\sim d^{\pi^{\star}}_{\rho}}\mathbb{E}_{a\sim \pi^{\star} (\cdot \given s)} [\nabla_{\theta}\log \pi_{\theta^k}(a\given s)\cdot (w^k - w^k_{\star})] -\frac{M\eta^2}{2}\|w^k\|^2.
\end{align*}
Now, let us apply Jensen's inequality and Assumption \ref{assump: conditions on score function} to obtain
\begin{align*}
\begin{split}
&\mathbb{E}_{s\sim d^{\pi^{\star}}_{\rho}} \left[\text{KL}\left(\pi^{\star}(\cdot\given s)|| \pi_{\theta^k}(\cdot\given s)\right)-\text{KL}\left(\pi^{\star}(\cdot\given s)|| \pi_{\theta^{k+1}}(\cdot \given s)\right)\right]\\
&\geq \eta \left(J^{\star}-J(\theta^k)\right) \\
&\,\,\, - \eta\frac{1}{1-\gamma} \sqrt{\mathbb{E}_{s\sim d^{\pi^{\star}}_{\rho}}\mathbb{E}_{a\sim \pi^{\star} (\cdot \given s)} \left[\left(\nabla_{\theta}\log \pi_{\theta^k}(a\given s)\cdot (1-\gamma)w^k_{\star} - A^{\pi_{\theta^k}}(s,a)\right)^2\right]}\\
&\,\,\, - \eta G  \|w^k - w^k_{\star}\| -\frac{M\eta^2}{2}\|w^k\|^2
\end{split}
\end{align*}
Combining this with Assumption \ref{assump: compatible error} yields
\begin{align}
\label{equ: global 1}
\begin{split}
&\mathbb{E}_{s\sim d^{\pi^{\star}}_{\rho}} \left[\text{KL}\left(\pi^{\star}(\cdot\given s)|| \pi_{\theta^k}(\cdot\given s)\right)-\text{KL}\left(\pi^{\star}(\cdot\given s)|| \pi_{\theta^{k+1}}(\cdot \given s)\right)\right]\\
&\geq \eta \left(J^{\star}-J(\theta^k)\right) - \eta \sqrt{\frac{1}{(1-\gamma)^2}\varepsilon_{\text{bias}}}- \eta G  \|w^k - w^k_{\star}\| -\frac{M\eta^2}{2}\|w^k\|^2.	
\end{split}
\end{align}
Finally, let us telescope the above inequality from $k=0$ to $K-1$, and divide by $K$, which gives 
\begin{align}
\label{equ: equ of global convergence}
\begin{split}
J(\pi^{\star})-\frac{1}{K}\sum_{k=0}^{K-1}J(\theta^k)
&\leq \frac{\sqrt{\varepsilon_{\text{bias}}}}{1-\gamma} + \frac{1}{\eta K} \mathbb{E}_{s\sim d^{\pi^{\star}}_\rho} \left[\text{KL}\left(\pi^{\star}(\cdot\given s)|| \pi_{\theta^0}(\cdot\given s)\right)\right]\\
&\,\,\, + \frac{G}{K}\sum_{k=0}^{K-1} \|w^k-w^k_{\star}\|+\frac{M\eta}{2K}\sum_{k=0}^{K-1}\|w^k\|^2.
\end{split}
\end{align}
\end{proof}
On the right hand side of \eqref{equ: equ of global convergence}, the first term reflects the function approximation error due to the possibly imperfect policy parametrization. The second term vanishes as $K\rightarrow \infty$. 

By looking at the third and fourth term, we know that for an update of the form $\theta^{k+1}=\theta^k+\eta w^k$, its global convergence rate depends crucially on i) the difference between its update directions $w^k$ and the exact NPG update direction $w^k_{\star}$, and ii) its stationary convergence rate. 

In the rest of this paper, we shall see that for stochastic PG, NPG, SRVR-PG, and SRVR-NPG, both the third and fourth terms of \eqref{equ: equ of global convergence} go to $0$ as $K\rightarrow \infty$, whose speed lead to different global convergence rates for different algorithms. In order to achieve this, we will apply some intermediate results in the previous proof of stationary convergence. 

\section{Proof of Theorem \ref{thm: PG global convergence}}
\label{app: PG global}

Let us take $w^k$ as the update direction of PG and apply Proposition \ref{prop: global convergence}. To this end, we need to upper bound 
$\frac{1}{K}\sum_{k=0}^{K-1} \|w^k-w^k_{\star}\|$, $\frac{1}{K}\sum_{k=0}^{K-1}\|w^k\|^2$, and $\frac{1}{K} \mathbb{E}_{s\sim d^{\pi^{\star}}_\rho} \left[\text{KL}\left(\pi^{\star}(\cdot\given s)|| \pi_{\theta^0}(\cdot\given s)\right)\right]$, where $w^k_{\star}=F^{-1}(\theta^k)\nabla  J(\theta^k)$ is the exact NPG update direction at $\theta^k$.
\begin{itemize}
\item Bounding $\frac{1}{K}\sum_{k=0}^{K-1} \|w^k-w^k_{\star}\|$.

We know from Jensen's inequality and $\left(\mathbb{E}[\|w^{j+1}_{t}-w^{j+1}_{t, \star}\|]\right)^2\leq \mathbb{E}[\|w^{j+1}_{t}-w^{j+1}_{t, \star}\|^2]$ that
\begin{align}
\label{equ: PG 1111}
\begin{split}
&\left(\frac{1}{K}\sum_{k=0}^{K-1}\EE[\|w^k - w^k_{\star}\|]\right)^2 \\
&\leq \frac{1}{K}\sum_{k=0}^{K-1}\left(\EE[\|w^k - w^k_{\star}\|]\right)^2\\
&\leq \frac{1}{K}\sum_{k=0}^{K-1}\EE[\|w^k - w^k_{\star}\|^2]\\
&\leq \frac{2}{K}\sum_{k=0}^{K-1}\EE[\|w^k - \nabla  J(\theta^k)\|^2] + \frac{2}{K}\sum_{k=0}^{K-1}\EE[\|\nabla  J(\theta^k) - F^{-1}(\theta^k)\nabla  J(\theta^k)\|^2]
\end{split}
\end{align}
Since 
\[
w^k = \frac{1}{N}\sum_{i=1}^N g(\tau^H_i\given \theta^k), 
\]
we have from Lemma \ref{lem: smoothness of objective} and Assumption \ref{assump: variance} that
\begin{align}
\label{equ: PG 2222}
\begin{split}
&\frac{1}{K}\sum_{k=0}^{K-1}\EE[\|w^k-\nabla  J(\theta^k)\|^2]\\
&\leq \frac{2}{K}\sum_{k=0}^{K-1}\EE[\|w^k-\nabla  J^H(\theta^k)\|^2]+ \frac{2}{K}\sum_{k=0}^{K-1}\EE[\|\nabla  J^H(\theta^k)-\nabla  J(\theta^k)\|^2]\\
&\leq 2\frac{\sigma^2}{N} + 2 G^2R^2 \left(\frac{H+1}{1-\gamma}+\frac{\gamma}{(1-\gamma)^2}\right)^2\gamma^{2H}.
\end{split}
\end{align}
Furthermore, Assumption \ref{assump: strong convexity} tells us that
\begin{align}
\label{equ: PG 3333}
\begin{split}
&\EE[\|\nabla  J(\theta^k) - F^{-1}(\theta^k)\nabla  J(\theta^k)\|^2]\\
&\leq \left(1+\frac{1}{\mu_F}\right)^2\EE[\|\nabla  J(\theta^k)\|^2]\\
&\leq \left(1+\frac{1}{\mu_F}\right)^2\left(2\EE[\|\nabla  J^H(\theta^k)\|^2]+2 G^2R^2 \left(\frac{H+1}{1-\gamma}+\frac{\gamma}{(1-\gamma)^2}\right)^2\gamma^{2H}\right).
\end{split}
\end{align}
Combining \eqref{equ: PG 2222} and \eqref{equ: PG 3333} with \eqref{equ: PG 1111} gives 
\begin{align}
\label{equ: PG 4444}
\begin{split}
&\frac{1}{K}\sum_{k=0}^{K-1}\EE[\|w^k - w^k_{\star}\|]\\
&\leq \Bigg(2\frac{\sigma^2}{N} + 2 G^2R^2 \left(\frac{H+1}{1-\gamma}+\frac{\gamma}{(1-\gamma)^2}\right)^2\gamma^{2H}\\
& \,\,\, + 2\left(1+\frac{1}{\mu_F}\right)^2\\
&\,\,\,\qquad \cdot \frac{1}{K}\sum_{k=0}^{K-1}\left(2\EE[\|\nabla  J^H(\theta^k)\|^2]+2G^2R^2 \left(\frac{H+1}{1-\gamma}+\frac{\gamma}{(1-\gamma)^2}\right)^2\gamma^{2H}\right)\Bigg)^{0.5}
\end{split}
\end{align}
And recall from \eqref{equ: PG stationary final} that
\[
\frac{1}{K}\sum_{k=0}^{K-1}\EE[\|\nabla  J^H(\theta^k)\|^2]\leq \frac{\frac{ J^{H, \star}- J^H (\theta_0)}{K}+(\frac{\eta}{2}+L_J\eta^2)\frac{\sigma^2}{N}}{\frac{\eta}{2}-L_J\eta^2}.
\]
Let us take $\eta = \frac{1}{4L_J}$. In addition, let $H$, $N$, and $K$ satisfy
\begin{align}
\label{equ: PG 1}
\begin{split}
\frac{1}{3}(\frac{\varepsilon}{3G})^2 &\geq \left(2+4(1+\frac{1}{\mu_F})^2\right)G^2R^2 \left(\frac{H+1}{1-\gamma}+\frac{\gamma}{(1-\gamma)^2}\right)^2\gamma^{2H}\\
N&\geq \frac{\left(2+12(1+\frac{1}{\mu_F})^2\right)\sigma^2}{\frac{1}{3}\left(\frac{\varepsilon}{3G}\right)^2},\\
K& \geq \frac{\left(1+\frac{1}{\mu_F}\right)^2 64L_J ( J^{H, \star}- J^H(\theta_0))}{\frac{1}{3}\left(\frac{\varepsilon}{3G}\right)^2}.
\end{split}
\end{align}
Then, we have 
\begin{align}
\label{equ: PG final 1}
\frac{G}{K}\sum_{k=0}^{K-1}\EE[\|w^k - w^k_{\star}\|]\leq \frac{\varepsilon}{3}.
\end{align}
\item Bounding $\frac{1}{K}\sum_{k=0}^{K-1}\|w^k\|^2$. 
 
We have from \eqref{equ: PG 2222} and \eqref{equ: PG stationary final} that
\begin{align*}
&\frac{1}{K}\sum_{k=0}^{K-1}\EE\|w^k\|^2\\
&\leq \frac{\sigma^2}{N}+\frac{1}{K}\sum_{k=0}^{K-1} \mathbb{E}[\|\nabla  J^H(\theta^{k})\|^2]\\
& \leq \frac{\sigma^2}{N} + \frac{\frac{ J^{H, \star}- J^H(\theta_0)}{K}+(\frac{\eta}{2}+L_J\eta^2)\frac{\sigma^2}{N}}{\frac{\eta}{2}-L_J\eta^2}.
\end{align*}
Taking $\eta = \frac{1}{4L_j}$ and 
\begin{align}
\label{equ: PG 2}
\begin{split}
N &\geq \frac{12 M\eta\sigma^2}{\varepsilon},\\
K &\geq  \frac{48L_JM\eta( J^{H,\star}- J^H(\theta_0))}{\varepsilon},
\end{split}
\end{align}
we arrive at
\begin{align}
\label{equ: PG final 2}
\frac{M\eta}{2K}\sum_{k=0}^{K-1}\EE[\|w^k\|^2]\leq \frac{\varepsilon}{3}.
\end{align}

\item Bounding $\frac{1}{K} \mathbb{E}_{s\sim d^{\pi^{\star}}_\rho} \left[\text{KL}\left(\pi^{\star}(\cdot\given s)|| \pi_{\theta^0}(\cdot\given s)\right)\right]$.
 
By taking 
\begin{align}
\label{equ: PG 3}
K \geq \frac{3\mathbb{E}_{s\sim d^{\pi^{\star}}_\rho} \left[\text{KL}\left(\pi^{\star}(\cdot\given s)|| \pi_{\theta^0}(\cdot\given s)\right)\right]}{\eta \varepsilon}
\end{align}
we have
\begin{align}
\label{equ: PG final 3}	
\frac{1}{\eta K} \mathbb{E}_{s\sim d^{\pi^{\star}}_\rho} \left[\text{KL}\left(\pi^{\star}(\cdot\given s)|| \pi_{\theta^0}(\cdot\given s)\right)\right] \leq \frac{\varepsilon}{3},
\end{align}
\end{itemize}

In summary, we require $N$ and $K$ to satisfy \eqref{equ: PG 1}, \eqref{equ: PG 2}, and \eqref{equ: PG 3}, which leads to 
\[N = \cO\left(\frac{\sigma^2}{\varepsilon^2}\right), \quad\quad K = \cO\left(\frac{1}{(1-\gamma)^2\varepsilon^2}\right), \qquad H = \cO\left(\log((1-\gamma)^{-1}\varepsilon^{-1})\right).\]
By combining \eqref{equ: PG final 1}, \eqref{equ: PG final 2}, \eqref{equ: PG final 3} and \eqref{equ: equ of global convergence}, we can conclude that
\begin{align*}
\begin{split}
J(\pi^{\star})-\frac{1}{K}\sum_{k=0}^{K-1}J(\theta^k)
&\leq \frac{\sqrt{\varepsilon_{\text{bias}}}}{1-\gamma} + \varepsilon.
\end{split}
\end{align*}
In total, stochastic PG requires to sample $KN=\cO\left(\frac{\sigma^2}{(1-\gamma)^2\varepsilon^4}\right)$ trajectories. 

\section{Proof of Theorem \ref{thm: NPG global convergence}}
\label{app: NPG global}

Let us take $w^k$ as the update direction of NPG and apply Proposition \ref{prop: global convergence}. To this end, we need to upper bound 
$\frac{1}{K}\sum_{k=0}^{K-1} \|w^k-w^k_{\star}\|$, $\frac{1}{K}\sum_{k=0}^{K-1}\|w^k\|^2$, and $\frac{1}{K} \mathbb{E}_{s\sim d^{\pi^{\star}}_\rho} \left[\text{KL}\left(\pi^{\star}(\cdot\given s)|| \pi_{\theta^0}(\cdot\given s)\right)\right]$, where $w^{k}_{\star}=F^{-1}(\theta^k)\nabla  J(\theta^k)$ is the exact NPG update direction at $\theta^k$. 

Let us take $\eta = \frac{\mu_F^2}{4G^2L_J}$ and apply SGD as in Procedure \ref{alg: SGD for NPG subproblem} to obtain a $w^k$ that satisfies
\begin{align}
\label{equ: NPG 111}
\EE[\|w^k-w^k_{\star}\|^2]\leq \min\Big\{\frac{\varepsilon}{12 M\eta}, \left(\frac{1}{G}\frac{\varepsilon}{3}\right)^2, \frac{\varepsilon}{12M\eta^3}\frac{\mu_F^2}{2}\frac{\frac{\eta}{2G^2}-\frac{L_J\eta^2}{\mu_F^2}}{\frac{G^2}{2\eta}+L_J}\Big\}.
\end{align}
From Proposition \ref{prop: SGD convergence NPG}, we know that this requires sampling $\cO\left(\frac{1}{(1-\gamma)^4\varepsilon^2}\right)$ trajectories at each iteration.

\begin{itemize}

\item Bounding $\frac{1}{K}\sum_{k=0}^{K-1} \|w^k-w^k_{\star}\|$.

Recall that the update direction $w^k\approx w^k_{\star}=F^{-1}(\theta^k)\nabla  J(\theta^k)$ is obtained by solving the subproblem
\begin{align*}
w^k&\approx \argmin_{w\in\Rd} 	L_{\nu^{\pi_{\theta}}}(w; \theta^k)\\
& = \argmin_{w\in\Rd}  \EE_{(s,a)\sim\nu^{\pi_{\theta^k}}}\big[A^{\pi_{\theta^k}}(s,a)-(1-\gamma)w^\top\nabla_{\theta}\log\pi_{\theta^k}(a\given s)\big]^2.
\end{align*} 
By \eqref{equ: NPG 111} and Jensen's inequality, we can write
\begin{align}
\label{equ: NPG final 1}
\left(\frac{1}{K}\sum_{k=0}^{K-1}\EE[\|w^k - w^k_{\star}\|]\right)^2&\leq \frac{1}{K}\sum_{k=0}^{K-1}\EE[\|w^k - w^k_{\star}\|^2]\leq \left(\frac{1}{G}\frac{\varepsilon}{3}\right)^2.
\end{align}

On the other hand, by replacing \eqref{equ: NPG subproblem accuracy} with \eqref{equ: NPG 111}, the stationary convergence of NPG stated in \eqref{equ: NPG stationary final} becomes 
\begin{align*}
\frac{1}{K}\sum_{k=0}^{K-1} \mathbb{E}[\|\nabla  J(\theta^{k})\|^2]
&\leq \frac{\frac{ J^{\star}- J(\theta_0)}{K} + \frac{\varepsilon}{12M\eta}\frac{\mu_F^2}{2}\left(\frac{\eta}{2G^2}-\frac{L_J\eta^2}{\mu_F^2}\right)}{\frac{\eta}{2G^2}-\frac{L_J\eta^2}{\mu_F^2}}
\end{align*}

\item Bounding $\frac{1}{K}\sum_{k=0}^{K-1}\|w^k\|^2$.

Taking $\eta = \frac{\mu_F^2}{4G^2L_J}$ and 
\begin{align}
\label{equ: NPG 1}
K \geq \frac{24( J^{\star}- J(\theta_0))M\eta}{\mu_F^2 \left(\frac{\eta}{2G^2}-\frac{L_J\eta^2}{\mu_F^2}\right)\varepsilon}
\end{align}
gives us
\[
\frac{1}{K}\sum_{k=0}^{K-1} \mathbb{E}[\|\nabla  J(\theta^{k})\|^2]
\leq \frac{\mu_F^2\varepsilon}{12M\eta}.
\]
\eqref{equ: NPG 111} and the above inequality yields
\begin{align}
\label{equ: NPG final 2}
\begin{split}
\frac{1}{K}\sum_{k=0}^{K-1}\EE[\|w^k\|^2]&\leq \frac{2}{K}\sum_{k=0}^{K-1}\EE[\|w^k - w^k_{\star}\|^2] + \frac{1}{\mu_F^2}\frac{2}{K}\sum_{k=0}^{K-1} \mathbb{E}[\|\nabla  J(\theta^{k})\|^2]\\
&\leq \frac{2\varepsilon}{12 M\eta}+ \frac{2}{\mu_F^2}\cdot\frac{\mu_F^2\varepsilon}{12M\eta} = \frac{\varepsilon}{3 M\eta}. 
\end{split}
\end{align}
\item Bounding $\frac{1}{K} \mathbb{E}_{s\sim d^{\pi^{\star}}_\rho} \left[\text{KL}\left(\pi^{\star}(\cdot\given s)|| \pi_{\theta^0}(\cdot\given s)\right)\right]$.

Let us also set
\begin{align}
\label{equ: NPG 2}
K \geq \frac{3\mathbb{E}_{s\sim d^{\pi^{\star}}_\rho} \left[\text{KL}\left(\pi^{\star}(\cdot\given s)|| \pi_{\theta^0}(\cdot\given s)\right)\right]}{\eta\varepsilon},
\end{align}
so that
\begin{align}
\label{equ: NPG final 3}
\frac{1}{\eta K} \mathbb{E}_{s\sim d^{\pi^{\star}}_\rho} \left[\text{KL}\left(\pi^{\star}(\cdot\given s)|| \pi_{\theta^0}(\cdot\given s)\right)\right] \leq \frac{\varepsilon}{3}.
\end{align}

\end{itemize}

In summary, we require $K$ to satisfy \eqref{equ: NPG 1} and \eqref{equ: NPG 2}, which leads to 
\[K = \cO\left(\frac{1}{(1-\gamma)^2\varepsilon}\right)\]
By combining \eqref{equ: NPG final 1}, \eqref{equ: NPG final 2}, \eqref{equ: NPG final 3} and \eqref{equ: equ of global convergence}, we can conclude that
\begin{align*}
\begin{split}
J(\pi^{\star})-\frac{1}{K}\sum_{k=0}^{K-1}J(\theta^k)
&\leq \frac{\sqrt{\varepsilon_{\text{bias}}}}{1-\gamma} + \varepsilon.
\end{split}
\end{align*}
Since at each iteration, SGD needs to sample $\cO\left(\frac{1}{(1-\gamma)^4\varepsilon^2}\right)$ trajectories so that \eqref{equ: NPG 111} is satisfied, NPG requires to sample $\cO\left(\frac{1}{(1-\gamma)^6\varepsilon^3}\right)$ trajectories in total.

%
%
%
%
%
%
%
%
%
%

\section{Proof of Theorem \ref{thm: SRVR-PG global convergence}}
\label{app: SRVR-PG global}

Let us take $w^{j+1}_t$ as the update direction of SRVR-PG and apply Proposition \ref{prop: global convergence}. To this end, we need to upper bound 
$\frac{1}{Sm}\sum_{s=0}^{S-1}\sum_{t=0}^{m-1} \|w^{j+1}_t-w^{j+1}_{t,\star}\|$, $\frac{1}{Sm}\sum_{s=0}^{S-1}\sum_{t=0}^{m-1}\|w^{j+1}_t\|^2$, and \\
$\frac{1}{Sm} \mathbb{E}_{s\sim d^{\pi^{\star}}_\rho} \left[\text{KL}\left(\pi^{\star}(\cdot\given s)|| \pi_{\theta^0}(\cdot\given s)\right)\right]$, where $w^{j+1}_{t,\star}=F^{-1}(\theta^{j+1}_t)\nabla  J(\theta^{j+1}_t)$ is the exact NPG update direction at $\theta^{j+1}_t$.

\begin{itemize}
\item Bounding $\frac{1}{Sm}\sum_{s=0}^{S-1}\sum_{t=0}^{m-1} \|w^{j+1}_t-w^{j+1}_{t,\star}\|$. 


Since $w^{j+1}_t = u^{j+1}_t$ and $w^{j+1}_{t,\star}=F^{-1}(\theta^{j+1}_t)\nabla   J(\theta^{j+1}_t)$, we have from Lemmas \ref{lem: nabla J and v} and \ref{lem: smoothness of objective} that
\begin{align*}
&\mathbb{E}[\|w^{j+1}_{t}-w^{j+1}_{t, \star}\|^2]\\
&\leq  2\mathbb{E}[\|u^{j+1}_{t} - \nabla  J(\theta^{j+1}_t)\|^2]+2\mathbb{E}[\|\nabla  J(\theta^{j+1}_t)-F^{-1}(\theta^{j+1}_t)\nabla  J(\theta^{j+1}_t)\|^2]\\
&\leq 2\mathbb{E}[\|u^{j+1}_t - \nabla  J^H(\theta^{j+1}_t)\|^2] + 2(1+\frac{1}{\mu_F})^2\mathbb{E}[\|\nabla  J(\theta^{j+1}_t)\|^2]\\
&\,\,\, + 2 G^2R^2 \left(\frac{H+1}{1-\gamma}+\frac{\gamma}{(1-\gamma)^2}\right)^2\gamma^{2H}\\
& \leq 2\left(\frac{C_{\gamma}}{B}\sum_{l=1}^t \mathbb{E}[\|\theta^{j+1}_{l}-\theta^{j+1}_{l-1}\|^2]+\frac{\sigma^2}{N}\right) + 2(1+\frac{1}{\mu_F})^2\mathbb{E}[\|\nabla  J(\theta^{j+1}_t)\|^2]\\
&\,\,\, + 2 G^2R^2 \left(\frac{H+1}{1-\gamma}+\frac{\gamma}{(1-\gamma)^2}\right)^2\gamma^{2H}\\
& \leq 2\left(\frac{C_{\gamma}}{B}\sum_{t=0}^{m-1} \mathbb{E}[\|\theta^{j+1}_{t+1}-\theta^{j+1}_{t}\|^2]+\frac{\sigma^2}{N}\right)+ 2(1+\frac{1}{\mu_F})^2\mathbb{E}[\|\nabla  J(\theta^{j+1}_t)\|^2]\\
&\,\,\,+ 2 G^2R^2 \left(\frac{H+1}{1-\gamma}+\frac{\gamma}{(1-\gamma)^2}\right)^2\gamma^{2H},
\end{align*}
where we have applied Lemma \ref{lem: smoothness of objective} and Assumption \ref{assump: strong convexity} in the second inequality, and Lemma \ref{lem: nabla J and v} in the third one.

Telescoping this over $s=0,1,..,S-1$, $t=0,1, m-1$ and dividing by $Sm$ gives
\begin{align}
\label{equ: SRVR-PG 111}
\begin{split}
&\frac{1}{Sm}\sum_{s=0}^{S-1}\sum_{t=0}^{m-1}\mathbb{E}[\|w^{j+1}_{t}-w^{j+1}_{t, \star}\|^2]\\
& \leq 2(1+\frac{1}{\mu_F})^2\frac{1}{Sm}\sum_{s=0}^{S-1}\sum_{t=0}^{m-1}\mathbb{E}[\|\nabla  J(\theta^{j+1}_t)\|^2]\\
&\,\,\, +2\left(\frac{C_{\gamma}m}{B}\frac{1}{Sm}\sum_{s=0}^{S-1} \sum_{t=0}^{m-1} \mathbb{E}[\|\theta^{j+1}_{t+1}-\theta^{j+1}_{t}\|^2]+\frac{\sigma^2}{N}\right)\\
&\,\,\, + 2 G^2R^2 \left(\frac{H+1}{1-\gamma}+\frac{\gamma}{(1-\gamma)^2}\right)^2\gamma^{2H}.
\end{split}
\end{align}


%
%
%
%
%
%
%

On the other hand, from Equation (B.14) of \cite{xu2019sample} we know that
\begin{align}
\label{equ: SRVR-PG stationary final}
\begin{split}	
&\left(\frac{2}{\eta^2}-\frac{4L_J}{\eta}-\frac{12 m C_{\gamma}}{B}\right)\frac{1}{Sm}\sum_{s=0}^{S-1}\sum_{t=0}^{m-1}\mathbb{E}[\|\theta^{j+1}_{t+1}-\theta^{j+1}_{t}\|^2] \\
&\,\,\,+ \frac{1}{Sm}\sum_{s=0}^{S-1}\sum_{t=0}^{m-1}\EE[\|\nabla  J^H(\theta^{j+1}_t)\|^2] \\
&\leq \frac{8( J^{H, \star}- J^H(\theta_0))}{\eta Sm}+\frac{6\sigma^2}{N}. 
\end{split}
\end{align}
By the definition of $C_{\gamma}$ in Lemma \ref{lem: nabla J and v}, we have
\begin{align}
\label{equ: SRVR-PG B}
B = \frac{3\eta C_{\gamma} m}{L_J} = \frac{72 \eta R G (2G^2+M)(W+1)\gamma}{L_J(1-\gamma)^5}m
\end{align}
Therefore, \eqref{equ: SRVR-PG stationary final} becomes
\begin{align*}
\left(\frac{2}{\eta^2}-\frac{8 L_J}{\eta}\right)\frac{1}{Sm}\sum_{s=0}^{S-1}\sum_{t=0}^{m-1}\mathbb{E}[\|\theta^{j+1}_{t+1}-\theta^{j+1}_{t}\|^2]&+\frac{1}{Sm}\sum_{s=0}^{S-1}\sum_{t=0}^{m-1}\EE[\|\nabla  J^H(\theta^{j+1}_t)\|^2] \\
&\leq \frac{8( J^{H, \star}- J^H(\theta_0))}{\eta Sm}+\frac{6\sigma^2}{N},
\end{align*}
Since $\eta =\frac{1}{8 L_J}$, we further have
\begin{align}
\label{equ: SRVR-PG stationary final 1}
\begin{split}
\frac{1}{Sm}\sum_{s=0}^{S-1}\sum_{t=0}^{m-1}\mathbb{E}[\|\theta^{j+1}_{t+1}-\theta^{j+1}_{t}\|^2] &\leq \frac{ J^{H, \star}- J^H(\theta_0)}{L_J Sm}+\frac{6\sigma^2}{64 L_J^2 N},\\
\frac{1}{Sm}\sum_{s=0}^{S-1}\sum_{t=0}^{m-1}\EE[\|\nabla  J^H(\theta^{j+1}_t)\|^2]
 & \leq \frac{64 L_J ( J^{H, \star}- J^H(\theta_0))}{Sm}+\frac{6\sigma^2}{N}.
 \end{split}
\end{align}
Putting these inequalities back into \eqref{equ: SRVR-PG 111} yields
\begin{align*}
\frac{1}{Sm}\sum_{s=0}^{S-1}\sum_{t=0}^{m-1}\mathbb{E}[\|w^{j+1}_{t}-w^{j+1}_{t, \star}\|^2] & \leq 2(1+\frac{1}{\mu_F})^2\left(\frac{64 L_J ( J^{H, \star}- J^H(\theta_0))}{Sm}+\frac{6\sigma^2}{N}\right)\\
& +2\left(\frac{8 L_J^2}{3}\left(\frac{ J^{H, \star}- J^H(\theta_0)}{L_J Sm}+\frac{6\sigma^2}{64 L_J^2 N}\right)+\frac{\sigma^2}{N}\right) \\
& +2 G^2R^2 \left(\frac{H+1}{1-\gamma}+\frac{\gamma}{(1-\gamma)^2}\right)^2\gamma^{2H}.	
\end{align*}

Let us set 
\begin{align}
\label{equ: SRVR-PG 1}
\begin{split}
\frac{1}{3}\left(\frac{\varepsilon}{3G}\right)^2 &\geq 2 G^2R^2 \left(\frac{H+1}{1-\gamma}+\frac{\gamma}{(1-\gamma)^2}\right)^2\gamma^{2H},\\
N &\geq \frac{\left(12(1+\frac{1}{\mu_F})^2+2.5\right)\sigma^2}{\frac{1}{3}\left(\frac{\varepsilon}{3G}\right)^2},\\
Sm &\geq  \frac{\left(128(1+\frac{1}{\mu_F})^2+\frac{16}{3}\right)L_J( J^{H, \star}- J^H(\theta_0))}{\frac{1}{3}\left(\frac{\varepsilon}{3G}\right)^2},	
\end{split}
\end{align}
so that 
\begin{align*}
\frac{1}{Sm}\sum_{s=0}^{S-1}\sum_{t=0}^{m-1}\mathbb{E}[\|w^{j+1}_{t}-w^{j+1}_{t, \star}\|^2]\leq \left(\frac{1}{G}\frac{\varepsilon}{3}\right)^2
\end{align*}

By Jensen's inequality, we further have
\begin{align}
\label{equ: SRVR-PG final 1}.
\begin{split}
\left(\frac{1}{Sm}\sum_{s=0}^{S-1}\sum_{t=0}^{m-1}\mathbb{E}[\|w^{j+1}_{t}-w^{j+1}_{t, \star}\|]\right)^2&\leq \frac{1}{Sm}\sum_{s=0}^{S-1}\sum_{t=0}^{m-1}\left(\mathbb{E}[\|w^{j+1}_{t}-w^{j+1}_{t, \star}\|]\right)^2\\
&\leq \frac{1}{Sm}\sum_{s=0}^{S-1}\sum_{t=0}^{m-1}\mathbb{E}[\|w^{j+1}_{t}-w^{j+1}_{t, \star}\|^2]\\
& \leq \left(\frac{1}{G}\frac{\varepsilon}{3}\right)^2.
\end{split}
\end{align}
where we have also applied $\left(\mathbb{E}[\|w^{j+1}_{t}-w^{j+1}_{t, \star}\|]\right)^2\leq \mathbb{E}[\|w^{j+1}_{t}-w^{j+1}_{t, \star}\|^2]$.

\item Bounding $\frac{1}{Sm}\sum_{s=0}^{S-1}\sum_{t=0}^{m-1}\|w^{j+1}_t\|^2$

Since $\EE [w^{j+1}_t]=\EE[u^{j+1}_t]=\nabla  J^H(\theta^{j+1}_t)$, by Lemma \ref{lem: nabla J and v} we have
\begin{align*}
&\frac{1}{Sm}\sum_{s=0}^{S-1}\sum_{t=0}^{m-1}\EE[\|w^{j+1}_t\|^2] \\
&= \frac{1}{Sm}\sum_{s=0}^{S-1}\sum_{t=0}^{m-1}\EE[\|u^{j+1}_t-\nabla  J^H(\theta^{j+1}_t)\|^2]+\frac{1}{Sm}\sum_{s=0}^{S-1}\sum_{t=0}^{m-1}\EE[\|\nabla  J^H(\theta^{j+1}_t)\|^2]\\
&\leq \frac{1}{Sm}\sum_{s=0}^{S-1}\sum_{t=0}^{m-1}\left(\frac{C_{\gamma}}{B}\sum_{l=1}^t \mathbb{E}[\|\theta^{j+1}_{l}-\theta^{j+1}_{l-1}\|^2]+\frac{\sigma^2}{N}\right) \\
&\,\,\, + \frac{1}{Sm}\sum_{s=0}^{S-1}\sum_{t=0}^{m-1}\EE[\|\nabla  J^H(\theta^{j+1}_t)\|^2]\\
&\leq \frac{C_{\gamma} m}{B}\cdot\frac{1}{Sm}\sum_{s=0}^{S-1}\sum_{t=0}^{m-1} \mathbb{E}[\|\theta^{j+1}_{t+1}-\theta^{j+1}_{t}\|^2]+\frac{\sigma^2}{N} + \frac{1}{Sm}\sum_{s=0}^{S-1}\sum_{t=0}^{m-1}\EE[\|\nabla  J^H(\theta^{j+1}_t)\|^2].
\end{align*}
By setting $\eta = \frac{1}{8L_J}$ and applying \eqref{equ: SRVR-PG B} and \eqref{equ: SRVR-PG stationary final 1}, we further have
\begin{align*}
&\frac{1}{Sm}\sum_{s=0}^{S-1}\sum_{t=0}^{m-1}\EE[\|w^{j+1}_t\|^2]\\
&\leq \frac{8L_J^2}{3}\cdot\left(\frac{ J^{H, \star}- J^H(\theta_0)}{L_J Sm}+\frac{6\sigma^2}{64 L_J^2 N}\right)+\frac{\sigma^2}{N} + \frac{64 L_J ( J^{H, \star}- J^H(\theta_0))}{Sm}+\frac{6\sigma^2}{N}
\end{align*}
Therefore, we can set 
\begin{align}
\label{equ: SRVR-PG 2}
\begin{split}
N &\geq \frac{174 M\sigma^2}{32L_J \varepsilon},\\
Sm &\geq  \frac{50 M ( J^{H, \star}- J^H(\theta_0))}{\varepsilon},	
\end{split}
\end{align}
so that 
\begin{align}
\label{equ: SRVR-PG final 2}
\frac{1}{Sm}\sum_{s=0}^{S-1}\sum_{t=0}^{m-1}\mathbb{E}\|w^{j+1}_{t}\|^2\leq \frac{\varepsilon}{3M\eta}.
\end{align}

\item Bounding $\frac{1}{Sm} \mathbb{E}_{s\sim d^{\pi^{\star}}_\rho} \left[\text{KL}\left(\pi^{\star}(\cdot\given s)|| \pi_{\theta^0}(\cdot\given s)\right)\right]$.

Let us set 
\begin{align}
\label{equ: SRVR-PG 3}
Sm \geq \frac{3 \mathbb{E}_{s\sim d^{\pi^{\star}}_\rho} \left[\text{KL}\left(\pi^{\star}(\cdot\given s)|| \pi_{\theta^0}(\cdot\given s)\right)\right]}{\eta \varepsilon}
\end{align}
so that
\begin{align}
\label{equ: SRVR-PG final 3}
\frac{1}{\eta Sm} \mathbb{E}_{s\sim d^{\pi^{\star}}_\rho} \left[\text{KL}\left(\pi^{\star}(\cdot\given s)|| \pi_{\theta^0}(\cdot\given s)\right)\right]\leq \frac{\varepsilon}{3}.
\end{align}

%
%
%
%
\end{itemize}

By combining \eqref{equ: SRVR-PG final 1}, \eqref{equ: SRVR-PG final 2}, \eqref{equ: SRVR-PG final 3} and \eqref{equ: equ of global convergence}, we can conclude that
\begin{align*}
\begin{split}
J(\pi^{\star})-\frac{1}{K}\sum_{k=0}^{K-1}J(\theta^k)
&\leq \frac{\sqrt{\varepsilon_{\text{bias}}}}{1-\gamma} + \varepsilon.
\end{split}
\end{align*}
To achieve this, we require $Sm$ and $N$ to satisfy \eqref{equ: SRVR-PG 1}, \eqref{equ: SRVR-PG 2}, and \eqref{equ: SRVR-PG 3}, which leads to 
\[
Sm = \cO\left(\frac{1}{(1-\gamma)^2\varepsilon^2}\right), \quad\quad N = \cO\left(\frac{\sigma^2}{\varepsilon^2}\right), \qquad H = \cO\left(\log(\frac{1}{(1-\gamma)\varepsilon})\right).
\]
By \eqref{equ: SRVR-PG B}, we know that $B = \cO(W (1-\gamma)^{-1}m)$. 

Therefore, by taking $S=\cO\left(\frac{1}{(1-\gamma)^{2.5}\varepsilon}\right)$ and $m=\cO\left(\frac{1}{(1-\gamma)^{-0.5}\varepsilon}\right)$, the sample complexity of SRVR-PG is
\begin{align*}
S(N+mB)
& = \cO\left(\frac{\sigma^2}{(1-\gamma)^{2.5}\varepsilon^{3}}+\frac{W}{(1-\gamma)^{2.5}\varepsilon^{3}}\right)\\
& = \cO\left(\frac{W+\sigma^2}{(1-\gamma)^{2.5}\varepsilon^{3}}\right).
\end{align*}

\section{Proof of Theorem \ref{thm: SRVR-NPG global convergence}}
\label{app: SRVR-NPG global}

Let us take $w^{j+1}_t$ as the update direction of SRVR-NPG and apply Proposition \ref{prop: global convergence}. To this end, we need to upper bound 
$\frac{1}{Sm}\sum_{s=0}^{S-1}\sum_{t=0}^{m-1} \|w^{j+1}_t-w^{j+1}_{t,\star}\|$, $\frac{1}{Sm}\sum_{s=0}^{S-1}\sum_{t=0}^{m-1}\|w^{j+1}_t\|^2$, and \\
$\frac{1}{Sm} \mathbb{E}_{s\sim d^{\pi^{\star}}_\rho} \left[\text{KL}\left(\pi^{\star}(\cdot\given s)|| \pi_{\theta^0}(\cdot\given s)\right)\right]$, where $w^{j+1}_{t,\star}=F^{-1}(\theta^{j+1}_t)\nabla J(\theta^{j+1}_t)$ is the exact NPG update direction at $\theta^{j+1}_t$.

Let us take $\eta = \frac{\mu_F}{16 L_J}$ and apply SGD as in Procedure \ref{alg: SGD for SRVR-NPG subproblem} to obtain a $w^{j+1}_t$ that satisfies
\begin{align}
\label{equ: SRVR-NPG 111}
\EE[\|w^{j+1}_t-F^{-1}(\theta^{j+1}_t)u^{j+1}_t\|^2]\leq \min\Big\{\frac{\frac{1}{4}\left(\frac{1}{G}\frac{\varepsilon}{3}\right)^2}{2 + \frac{G^2\mu_F+G^4}{\mu_F^2}\cdot \frac{\mu_F}{4G^2+\mu_F}}, \frac{64\eta^2L_J^2}{\mu_F(\mu_F+G^2)}\cdot\frac{\varepsilon}{9M\eta}\Big\}.
\end{align}
In order to apply Proposition \ref{prop: SGD convergence SRVR-NPG}, let assume the following so that its assumptions are satisfied:
\begin{align}
\label{equ: assumptions of prop}
\begin{split}
\frac{\sigma^2}{N}&\leq  \left(\frac{GR}{(1-\gamma)^2}\right)^2,\\
\min\Big\{\frac{\frac{1}{4}\left(\frac{1-\gamma}{G}\frac{\varepsilon}{3}\right)^2}{2 + \frac{G^2\mu_F+G^4}{\mu_F^2}\cdot \frac{\mu_F}{4G^2+\mu_F}}, \frac{64\eta^2L_J^2}{\mu_F(\mu_F+G^2)}\cdot\frac{(1-\gamma)\varepsilon}{9M\eta}\Big\} &\leq  \frac{2}{\mu_F^2}\left(\frac{GR}{(1-\gamma)^2}\right)^2\\
\frac{C_\gamma m}{B}2\eta^2&\leq \frac{1}{3}\mu_F^2. 
\end{split}
\end{align}
At the end of this proof, we will see that these assumptions are indeed satisfied for small $\varepsilon$.

From Proposition \ref{prop: SGD convergence SRVR-NPG}, we know that this requires sampling $\cO\left(\frac{1}{(1-\gamma)^4\varepsilon^2}\right)$ trajectories at each iteration. 

\begin{itemize}
\item Bounding $\frac{1}{Sm}\sum_{s=0}^{S-1}\sum_{t=0}^{m-1} \|w^{j+1}_t-w^{j+1}_{t,\star}\|$.


First of all, we have 
\begin{align*}
&\mathbb{E}[\|w^{j+1}_{t}-w^{j+1}_{t, \star}\|^2]\\
&\leq  2\mathbb{E}[\|w^{j+1}_{t} - F^{-1}(\theta^{j+1}_t)u^{j+1}_t\|^2]+2\mathbb{E}[\|F^{-1}(\theta^{j+1}_t)u^{j+1}_t-F^{-1}(\theta^{j+1}_t)\nabla  J(\theta^{j+1}_t)\|^2]\\
&\leq 2\mathbb{E}[\|w^{j+1}_t - F^{-1}(\theta^{j+1}_t)u^{j+1}_t\|^2] + 2\frac{1}{\mu_F^2}\mathbb{E}[\|u^{j+1}_t - \nabla  J(\theta^{j+1}_t)\|^2]\\
& \leq 2\EE[\|w^{j+1}_t-F^{-1}(\theta^{j+1}_t)u^{j+1}_t\|^2] + 4\frac{1}{\mu_F^2}\left(\frac{C_{\gamma}}{B}\sum_{l=1}^t \mathbb{E}[\|\theta^{j+1}_{l}-\theta^{j+1}_{l-1}\|^2]+\frac{\sigma^2}{N}\right)\\
&\,\,\, + 4 G^2R^2 \left(\frac{H+1}{1-\gamma}+\frac{\gamma}{(1-\gamma)^2}\right)^2\gamma^{2H}\\
& \leq 2\EE[\|w^{j+1}_t-F^{-1}(\theta^{j+1}_t)u^{j+1}_t\|^2] + 4\frac{1}{\mu_F^2}\left(\frac{C_{\gamma}}{B}\sum_{t=0}^{m-1} \mathbb{E}[\|\theta^{j+1}_{t+1}-\theta^{j+1}_{t}\|^2]+\frac{\sigma^2}{N}\right)\\
&\,\,\, + 4 G^2R^2 \left(\frac{H+1}{1-\gamma}+\frac{\gamma}{(1-\gamma)^2}\right)^2\gamma^{2H},
\end{align*}
where we have applied Assumption \ref{assump: strong convexity} in the second inequality, and Lemmas \ref{lem: nabla J and v} and \ref{lem: smoothness of objective} in the third one.

Telescoping this over $s=0,1,..,S-1$, $t=0,1, m-1$ and dividing by $Sm$ gives 
\begin{align}
\label{equ: SRVR-NPG 1111}
\begin{split}
&\frac{1}{Sm}\sum_{s=0}^{S-1}\sum_{t=0}^{m-1}\mathbb{E}[\|w^{j+1}_{t}-w^{j+1}_{t, \star}\|^2]\\
& \leq 2\frac{\frac{1}{4}\left(\frac{1}{G}\frac{\varepsilon}{3}\right)^2}{2 + \frac{G^2\mu_F+G^4}{\mu_F^2}\cdot \frac{\mu_F}{4G^2+\mu_F}} + 4\frac{1}{\mu_F^2}\left(\frac{C_{\gamma}m}{B}\frac{1}{Sm}\sum_{s=0}^{S-1} \sum_{t=0}^{m-1} \mathbb{E}[\|\theta^{j+1}_{t+1}-\theta^{j+1}_{t}\|^2]+\frac{\sigma^2}{N}\right)\\
&\,\,\, + 4 G^2R^2 \left(\frac{H+1}{1-\gamma}+\frac{\gamma}{(1-\gamma)^2}\right)^2\gamma^{2H}.
\end{split}
\end{align}

On the other hand, from \eqref{equ: SRVR-NPG final} we know that
\begin{align}
\label{equ: SRVR-NPG stationary final}
\begin{split}
&\frac{8G^2}{\eta}\left(\frac{\mu_F}{8\eta} - \frac{L_J}{2}-\left(\frac{\eta}{\mu_F}+\frac{\eta}{4G^2}\right)\frac{C_{\gamma}m}{B}\right)\frac{1}{Sm}\sum_{s=0}^{S-1}\sum_{t=0}^{m-1}\EE\|\theta^{j+1}_{t+1}-\theta^{j+1}_t\|^2\\
&\,\,\, +\frac{1}{Sm} \sum_{s=0}^{S-1}\sum_{t=0}^{m-1} \EE\|\nabla  J^H(\theta^{j+1}_t)\|^2\\ 
&\leq \frac{8G^2}{\eta}\frac{ J^{H, \star}- J^H(\theta_0)}{Sm}\\
&\,\,\,  + \left(\frac{8G^2}{\mu_F}+2\right)\frac{\sigma^2}{N}+\left(\frac{8G^2\mu_F}{4}+\frac{8G^4}{4}\right)\frac{1}{Sm}\sum_{s=0}^{S-1}\sum_{t=0}^{m-1}\EE\|F^{-1}(\theta^{j+1}_t)u^{j+1}_t - w^{j+1}_{t}\|^2.
\end{split}
\end{align}
Let us set
\begin{align}
\label{equ: SRVR-NPG B}
B \geq \left(\frac{\eta}{\mu_F}+\frac{\eta}{4G^2}\right)\frac{2 C_{\gamma} m}{L_J} = \left(\frac{\eta}{\mu_F}+\frac{\eta}{4G^2}\right)\cdot \frac{48 R G^2 (2G^2+M)(W+1)\gamma}{L_J(1-\gamma)^5}m.
\end{align}
Since $\eta =\frac{\mu_F}{16 L_J}$, \eqref{equ: SRVR-NPG stationary final} becomes
\begin{align*}
&\frac{8G^2 L_J}{\eta}\frac{1}{Sm}\sum_{s=0}^{S-1}\sum_{t=0}^{m-1}\EE\|\theta^{j+1}_{t+1}-\theta^{j+1}_t\|^2+\frac{1}{Sm} \sum_{s=0}^{S-1}\sum_{t=0}^{m-1} \EE\|\nabla  J^H(\theta^{j+1}_t)\|^2\\ 
&\leq \frac{8G^2}{\eta}\frac{ J^{H, \star}- J^H(\theta_0)}{Sm}+ \left(\frac{8G^2}{\mu_F}+2\right)\frac{\sigma^2}{N}\\
&\,\,\, +\left(\frac{8G^2\mu_F}{4}+\frac{8G^4}{4}\right)\frac{1}{Sm}\sum_{s=0}^{S-1}\sum_{t=0}^{m-1}\EE\|F^{-1}(\theta^{j+1}_t)u^{j+1}_t - w^{j+1}_{t}\|^2,
\end{align*}
from which we have
\begin{align}
\label{equ: SRVR-NPG stationary final 1}
\begin{split}
&\frac{1}{Sm}\sum_{s=0}^{S-1}\sum_{t=0}^{m-1}\mathbb{E}[\|\theta^{j+1}_{t+1}-\theta^{j+1}_{t}\|^2] \\
&\leq \frac{ J^{H, \star}- J^H(\theta_0)}{L_J Sm}+\left(\frac{8G^2}{\mu_F}+2\right)\frac{\mu_F}{128 G^2 L_J^2}\frac{\sigma^2}{ N}\\
&\,\,\, +\left(\frac{8G^2\mu_F}{4}+\frac{8G^4}{4}\right)\frac{\mu_F}{128 G^2 L_J^2}\frac{1}{Sm}\sum_{s=0}^{S-1}\sum_{t=0}^{m-1}\EE\|F^{-1}(\theta^{j+1}_t)u^{j+1}_t - w^{j+1}_{t}\|^2.
 \end{split}
\end{align}
Putting these inequalities back into \eqref{equ: SRVR-NPG 1111} and applying \eqref{equ: SRVR-NPG 111} yields
\begin{align*}
&\frac{1}{Sm}\sum_{s=0}^{S-1}\sum_{t=0}^{m-1}\mathbb{E}[\|w^{j+1}_{t}-w^{j+1}_{t, \star}\|^2] \\
&\leq 2\frac{\frac{1}{4}\left(\frac{1}{G}\frac{\varepsilon}{3}\right)^2}{2 + \frac{G^2\mu_F+G^4}{\mu_F^2}\cdot \frac{\mu_F}{4G^2+\mu_F}} \\
& +4\frac{1}{\mu_F^2}\left(\frac{C_{\gamma}m}{B}\frac{1}{Sm}\sum_{s=0}^{S-1} \sum_{t=0}^{m-1} \mathbb{E}[\|\theta^{j+1}_{t+1}-\theta^{j+1}_{t}\|^2]+\frac{\sigma^2}{N}\right)\\
&\,\,\, + 4 G^2R^2 \left(\frac{H+1}{1-\gamma}+\frac{\gamma}{(1-\gamma)^2}\right)^2\gamma^{2H}\\
&\leq 2 \frac{\frac{1}{4}\left(\frac{1}{G}\frac{\varepsilon}{3}\right)^2}{2 + \frac{G^2\mu_F+G^4}{\mu_F^2}\cdot \frac{\mu_F}{4G^2+\mu_F}} \\
&\,\,\, + 2\frac{1}{\mu_F^2}\frac{C_{\gamma}m}{B} \Bigg(\frac{J^{\star}-J(\theta_0)}{L_J Sm}+\left(\frac{8G^2}{\mu_F}+2\right)\frac{\mu_F}{128 G^2 L_J^2}\frac{\sigma^2}{N}\\
&\,\,\,\,\,\,\qquad  +\left(G^2\mu_F+ G^4\right)\frac{\mu_F}{64 G^2 L_J^2}\frac{\frac{1}{3}\left(\frac{1-\gamma}{G}\frac{\varepsilon}{3}\right)^2}{1 + \frac{G^2\mu_F+G^4}{\mu_F^2}\cdot \frac{\mu_F}{4G^2+\mu_F}}\Bigg)\\
&\,\,\, + 2\frac{1}{\mu_F^2}\frac{\sigma^2}{N} + 4 G^2R^2 \left(\frac{H+1}{1-\gamma}+\frac{\gamma}{(1-\gamma)^2}\right)^2\gamma^{2H}.
\end{align*}
From \eqref{equ: SRVR-NPG B} we know that
\[
\frac{C_{\gamma}m}{B}\leq \frac{32L_J^2}{4+\frac{\mu_F}{G^2}},
\]
which gives us
\begin{align*}
&\frac{1}{Sm}\sum_{s=0}^{S-1}\sum_{t=0}^{m-1}\mathbb{E}[\|w^{j+1}_{t}-w^{j+1}_{t, \star}\|^2] \\
&\leq \left(2 + \frac{G^2\mu_F+G^4}{\mu_F^2}\cdot \frac{\mu_F}{4G^2+\mu_F}\right)\frac{\frac{1}{4}\left(\frac{1}{G}\frac{\varepsilon}{3}\right)^2}{2 + \frac{G^2\mu_F+G^4}{\mu_F^2}\cdot \frac{\mu_F}{4G^2+\mu_F}} \\
&\,\,\,+ 2\frac{1}{\mu_F^2} \frac{C_{\gamma}m}{B} \frac{ J^{H, \star}- J^H(\theta_0)}{L_J Sm}\\
&\,\,\, + 2\frac{1}{\mu_F^2}\left(1 + (\frac{8G^2}{\mu_F}+2)\frac{\mu_F}{4(4G^2+\mu_F)}\right) \frac{\sigma^2}{N}\\
&\,\,\, + 4 G^2R^2 \left(\frac{H+1}{1-\gamma}+\frac{\gamma}{(1-\gamma)^2}\right)^2\gamma^{2H}\\
&= \frac{1}{4}\left(\frac{1}{G}\frac{\varepsilon}{3}\right)^2 + 2\frac{1}{\mu_F^2} \frac{C_{\gamma}m}{B} \frac{ J^{H, \star}- J^H(\theta_0)}{L_J Sm}\\
&\,\,\, + 3\frac{1}{\mu_F^2}\frac{\sigma^2}{N} + 4 G^2R^2 \left(\frac{H+1}{1-\gamma}+\frac{\gamma}{(1-\gamma)^2}\right)^2\gamma^{2H},
\end{align*}
where we have applied \eqref{equ: SRVR-NPG 111} in the first equality.

Let us set 
\begin{align}
\label{equ: SRVR-NPG 1}
\begin{split}
N &\geq \frac{108 G^2 \sigma^2}{\mu_F^2\varepsilon^2},\\
B &\geq \frac{72 C_{\gamma}m}{\mu_F L_J^2\varepsilon},\\
Sm &\geq  \frac{L_J( J^{H, \star}- J^H(\theta_0))G^2}{\mu_F\varepsilon},	\\
\frac{1}{4}\left(\frac{\varepsilon}{3G}\right)^2 &\geq 4 G^2R^2 \left(\frac{H+1}{1-\gamma}+\frac{\gamma}{(1-\gamma)^2}\right)^2\gamma^{2H},
\end{split}
\end{align}
so that 
\begin{align*}
\frac{1}{Sm}\sum_{s=0}^{S-1}\sum_{t=0}^{m-1}\mathbb{E}[\|w^{j+1}_{t}-w^{j+1}_{t, \star}\|^2]\leq \left(\frac{1}{G}\frac{\varepsilon}{3}\right)^2
\end{align*}

By Jensen's inequality, we further have
\begin{align}
\label{equ: SRVR-NPG final 1}.
\begin{split}
\left(\frac{1}{Sm}\sum_{s=0}^{S-1}\sum_{t=0}^{m-1}\mathbb{E}[\|w^{j+1}_{t}-w^{j+1}_{t, \star}\|]\right)^2&\leq \frac{1}{Sm}\sum_{s=0}^{S-1}\sum_{t=0}^{m-1}\left(\mathbb{E}[\|w^{j+1}_{t}-w^{j+1}_{t, \star}\|]\right)^2\\
&\leq \frac{1}{Sm}\sum_{s=0}^{S-1}\sum_{t=0}^{m-1}\mathbb{E}[\|w^{j+1}_{t}-w^{j+1}_{t, \star}\|^2]\\
& \leq \left(\frac{1}{G}\frac{\varepsilon}{3}\right)^2.
\end{split}
\end{align}
where we have also applied $\left(\mathbb{E}[\|w^{j+1}_{t}-w^{j+1}_{t, \star}\|]\right)^2\leq \mathbb{E}[\|w^{j+1}_{t}-w^{j+1}_{t, \star}\|^2]$.

\item Bounding $\frac{1}{Sm}\sum_{s=0}^{S-1}\sum_{t=0}^{m-1}\|w^{j+1}_t\|^2$.

By \eqref{equ: SRVR-NPG stationary final 1} we have
\begin{align*}
&\frac{1}{Sm}\sum_{s=0}^{S-1}\sum_{t=0}^{m-1}\EE[\|w^{j+1}_t\|^2]\\
&= \frac{1}{Sm}\frac{1}{\eta^2}\sum_{s=0}^{S-1}\sum_{t=0}^{m-1}\EE[\|\theta^{j+1}_{t+1}-\theta^{j+1}_t\|^2]\\
&\leq \frac{J^{\star}-J(\theta_0)}{L_J\eta^2 Sm}+\left(\frac{8G^2}{\mu_F}+2\right)\frac{\mu_F}{128 G^2 \eta^2 L_J^2}\frac{\sigma^2}{N}\\
&\,\,\, +\left(\frac{8G^2\mu_F}{4}+\frac{8G^4}{4}\right)\frac{\mu_F}{128\eta^2  G^2 L_J^2}\frac{1}{Sm}\sum_{s=0}^{S-1}\sum_{t=0}^{m-1}\EE\|F^{-1}(\theta^{j+1}_t)u^{j+1}_t - w^{j+1}_{t}\|^2 \\
&\leq \frac{J^{\star}-J(\theta_0)}{L_J\eta^2 Sm}+\left(\frac{8G^2}{\mu_F}+2\right)\frac{\mu_F}{128 G^2 \eta^2 L_J^2}\frac{\sigma^2}{N} + \frac{\varepsilon}{9 M\eta},
\end{align*}
where we have applied \eqref{equ: SRVR-NPG stationary final 1} in the first inequality, and \eqref{equ: SRVR-NPG 111} in the last step.

We can set

\begin{align}
\label{equ: SRVR-NPG 2}
\begin{split}
N &\geq \frac{9M\mu_F(\frac{8G^2}{\mu_F}+2)\sigma^2}{128G^2\eta L^2_J\varepsilon},\\
Sm &\geq  \frac{9M(J^{\star}-J(\theta_0))}{L_J \eta \varepsilon},	
\end{split}
\end{align}
so that 
\begin{align}
\label{equ: SRVR-NPG final 2}
\frac{1}{Sm}\sum_{s=0}^{S-1}\sum_{t=0}^{m-1}\mathbb{E}\|w^{j+1}_{t}\|^2\leq \frac{\varepsilon}{3M\eta}.
\end{align}

\item Bounding $\frac{1}{Sm} \mathbb{E}_{s\sim d^{\pi^{\star}}_\rho} \left[\text{KL}\left(\pi^{\star}(\cdot\given s)|| \pi_{\theta^0}(\cdot\given s)\right)\right]$.

Let us set 
\begin{align}
\label{equ: SRVR-NPG 3}
Sm \geq \frac{3 \mathbb{E}_{s\sim d^{\pi^{\star}}_\rho} \left[\text{KL}\left(\pi^{\star}(\cdot\given s)|| \pi_{\theta^0}(\cdot\given s)\right)\right]}{\eta \varepsilon}
\end{align}
so that
\begin{align}
\label{equ: SRVR-NPG final 3}
\frac{1}{\eta Sm} \mathbb{E}_{s\sim d^{\pi^{\star}}_\rho} \left[\text{KL}\left(\pi^{\star}(\cdot\given s)|| \pi_{\theta^0}(\cdot\given s)\right)\right]\leq \frac{\varepsilon}{3}.
\end{align}

\end{itemize}

By combining \eqref{equ: SRVR-NPG final 1}, \eqref{equ: SRVR-NPG final 2}, \eqref{equ: SRVR-NPG final 3} and \eqref{equ: equ of global convergence}, we can conclude that
\begin{align*}
\begin{split}
J(\pi^{\star})-\frac{1}{K}\sum_{k=0}^{K-1}J(\theta^k)
&\leq \frac{\sqrt{\varepsilon_{\text{bias}}}}{1-\gamma} + \varepsilon.
\end{split}
\end{align*}
To achieve this, we require $Sm$, $B$, and $N$ to satisfy \eqref{equ: SRVR-NPG B}, \eqref{equ: SRVR-NPG 1}, \eqref{equ: SRVR-NPG 2}, and \eqref{equ: SRVR-NPG 3}, which leads to 
\begin{align*}
& Sm = \cO\left(\frac{1}{(1-\gamma)^2\varepsilon}\right), \quad\quad N = \cO\left(\frac{\sigma^2}{\varepsilon^2}\right), \quad\quad \\
& B = \cO\left(\frac{W}{(1-\gamma)\varepsilon}m\right), \qquad H =\cO\left(\log\left(\frac{1}{(1-\gamma)\varepsilon}\right)\right).
\end{align*}

By Proposition \ref{prop: SGD convergence SRVR-NPG}, we know that in order to achieve \eqref{equ: SRVR-NPG 111}, SGD requires sampling $\cO\left(\frac{1}{(1-\gamma)^{4}\varepsilon^{2}}\right)$ trajectories per iteration.

Therefore, by taking $S=\cO\left(\frac{1}{(1-\gamma)^{2.5}\varepsilon^{0.5}}\right)$ and $m=\cO\left(\frac{(1-\gamma)^{0.5}}{{\varepsilon}^{0.5}}\right)$, the amount of trajectories required by SRVR-NPG is
\begin{align*}
&S\bigg(N+mB + (1+m)\cO\left(\frac{1}{(1-\gamma)^{4}\varepsilon^{2}}\right)\bigg)\\
& = \cO\left(\frac{\sigma^2}{(1-\gamma)^{2.5}\varepsilon^{2.5}}+\frac{W}{(1-\gamma)^{2.5}\varepsilon^{2.5}}+ \frac{1}{(1-\gamma)^{6}\varepsilon^{3}}\right).
\end{align*}
It is straightforward to verify that the requirements listed in \eqref{equ: assumptions of prop} are also satisfied as long as $\varepsilon$ is small enough. 

\section{Implementation Details}
\label{app: implementation details}

In this section, we provide additional details on the implementation of PG, NPG, SRVR-PG and SRVR-NPG.
\begin{enumerate}
	\item For NPG, we use the default implementation provided by rllab\footnotemark[1], which actually implements the trust region policy optimization(TRPO) algorithm \cite{schulman2015trust}. For cartplole, we sample 200 trajectories at each iteration to solve the subproblem of TRPO. For mountain car, we sample 120 trajectories at each iteration.
	\item We found that the naive implementation of PG and SRVR-PG  typically do not work for our tests. For example, PG and SRVR-PG often give an average reward around $-90$ for the mountain-car test, despite of our best efforts.
	\item As in \cite{papini2018stochastic} and \cite{xu2019sample}, we found that it is necessary to apply Adagrad \cite{duchi2011adaptive} or Adam \cite{kingma2014adam} type of averaging to improve their performances. 
	\item In our experiments, we apply Adagrad type of averaging for PG and SRVR-PG, which results in much better performances. As for SRVR-NPG, we apply Adam type of averaging, which gives an approximation of the Fisher information matrix at each iteration (see section 11.2 of \cite{martens2014new}). We leave the implementation of a better approximation of the Fisher information matrix to the future work.
\end{enumerate}

\footnotetext[1]{\url{https://github.com/rll/rllab}}

\end{document}